\definecolor{cadmiumorange}{rgb}{0.93, 0.53, 0.18}
\definecolor{emerald}{rgb}{0.31, 0.78, 0.47}
\definecolor{amaranth}{rgb}{0.9, 0.17, 0.31}
\definecolor{candypink}{rgb}{0.89, 0.44, 0.48}
\definecolor{caribbeangreen}{rgb}{0.0, 0.8, 0.6}
\definecolor{cornflowerblue}{rgb}{0.39, 0.58, 0.93}
\definecolor{limegreen}{rgb}{0.2, 0.8, 0.2}
\definecolor{mayablue}{rgb}{0.21,0.49,0.74}
\newcommand{\introtakeawaybox}[2][]{%
  \begin{tcolorbox}[%
    colback=cornflowerblue!10,
    colframe=cornflowerblue!90,
    boxrule=0.5pt,
    arc=0mm,
    left=5pt,
    right=5pt,
    top=2pt,
    bottom=2pt,
    fontupper={\fontsize{9.5pt}{11.75pt}\selectfont}
  ]
    \IfNoValueTF{#1}{\textbf{Contributions:} #2}{\textbf{Contributions #1:} #2}%
  \end{tcolorbox}%
  \vspace*{-0.1cm}
}
\newcommand{\empiricaltakeawaybox}[2][]{%
  \begin{tcolorbox}[
    colback=cornflowerblue!10, %
    colframe=cornflowerblue!90,
    float=t,
    boxrule=0.5pt,
    arc=0mm,
    left=5pt,
    right=5pt,
    top=2pt,
    bottom=2pt,
    fontupper={\fontsize{9.5pt}{11.75pt}\selectfont}
  ]
    \IfNoValueTF{#1}{\textbf{Empirical Takeaway:} #2}{\textbf{Empirical Takeaways #1:} #2} %
  \end{tcolorbox}%
  \vspace*{-0.1cm}
}
\newcommand{\takeawaybox}[2][]{%
  \begin{tcolorbox}[
    colback=cornflowerblue!10, %
    colframe=cornflowerblue!90,
    boxrule=0.5pt,
    arc=0mm,
    left=5pt,
    right=5pt,
    top=2pt,
    bottom=2pt,
    fontupper={\fontsize{9.5pt}{11.75pt}\selectfont}
  ]
    \IfNoValueTF{#1}{\textbf{Takeaway:} #2}{\textbf{#1} #2} %
  \end{tcolorbox}%
  \vspace*{-0.1cm}
}
\newcommand{\ignore}[1]{}
\newcommand{\eqdef}{\stackrel{\mathrm{def}}{=}}
\def\<{\left\langle}
\def\>{\right\rangle}
\def\[{\left[}
\def\]{\right]}
\def\({\left(}
\def\){\right)}
\newcommand{\reals}{\mathbb{R}}
\newcommand{\diag}{\mbox{\bf diag}}
\newcommand{\clip}{\textbf{clip}}
\newcommand{\sgn}{\text{sgn}}
\newtheorem{thm}{Theorem}
\newtheorem{asp}{Assumption}
\newtheorem{lemma}[thm]{Lemma}
\def\eqref#1{equation~\ref{#1}}
\def\1{\bm{1}}
\DeclareMathAlphabet{\mathsfit}{\encodingdefault}{\sfdefault}{m}{sl}
\SetMathAlphabet{\mathsfit}{bold}{\encodingdefault}{\sfdefault}{bx}{n}
\newcommand{\E}{\mathbb{E}}
\newcommand{\R}{\mathbb{R}}
\renewcommand{\eqref}[1]{(\ref{#1})}
\definecolor{grey_plot}{HTML}{7f7f7f}
\definecolor{red_plot}{HTML}{EB2644}
\algnewcommand{\IfThenElse}[3]{
  \algorithmicif\ #1\ \algorithmicthen\ #2\ \algorithmicelse\ #3}
\newcommand{\opt}{\mathtt{OPT}}
\newcommand{\upd}{\mathtt{UPDATE}}
\newcommand{\method}{\texttt{DES-LOC}\xspace}
\newcommand{\methodadam}{\texttt{DES-LOC}\texttt{-Adam}\xspace}
\newcommand{\methodadopt}{\texttt{DES-LOC}\texttt{-ADOPT}\xspace}
\newcommand{\methodsgdm}{\texttt{DES-LOC}\texttt{-SGDM}\xspace}
\newcommand{\adam}{\texttt{Adam}\xspace}
\newcommand{\ddp}{\texttt{DDP}\xspace}
\newcommand{\localsgd}{\texttt{Local} \texttt{SGD}\xspace}
\newcommand{\local}{\texttt{Local}\xspace}
\newcommand{\localadopt}{\texttt{Local} \texttt{ADOPT}\xspace}
\newcommand{\localadam}{\texttt{Local} \texttt{Adam}\xspace}
\newcommand{\adopt}{\texttt{ADOPT}\xspace}
\newcommand{\fullmethod}{\textbf{Desynced Low Communication Adaptive Optimizers}\xspace}
\newcommand{\fedavg}{\texttt{FedAvg}\xspace}
\title{\method: \fullmethod for Training Foundation Models}
\let\hypersetup\truehypersetup
\newcommand{\myfnsymbol}[1]{%
  \expandafter\@myfnsymbol\csname c@#1\endcsname
}
\newcommand{\@myfnsymbol}[1]{%
  \ifcase #1
  \or 1
  \or 2
  \or 3
  \or 4
  \or 5%
  \or \TextOrMath{\textasteriskcentered}{*}
  \or \TextOrMath{\textasteriskcentered}{*}\TextOrMath{\textasteriskcentered}{*}
  \or \TextOrMath{\textdagger}{\dagger}
  \or \TextOrMath{\textasteriskcentered}{*},\TextOrMath{\textasteriskcentered}{*}\TextOrMath{\textasteriskcentered}{*}
  \fi
}
\newcommand{\affiliationA}{\@myfnsymbol{1}}
\newcommand{\affiliationB}{\@myfnsymbol{2}}
\newcommand{\affiliationC}{\@myfnsymbol{3}}
\newcommand{\affiliationD}{\@myfnsymbol{4}}
\newcommand{\affiliationE}{\@myfnsymbol{5}}
\newcommand{\equalcontributor}{\@myfnsymbol{6}}
\newcommand{\biequalcontributor}{\@myfnsymbol{7}}
\newcommand{\correspondingA}{\@myfnsymbol{8}}
\author{
Alex Iacob\textsuperscript{\correspondingA,\affiliationA,\affiliationB}
\And  
Lorenzo Sani\textsuperscript{\affiliationA,\affiliationB}
\And Mher Safaryan\textsuperscript{\affiliationC}
\And 
Paris Giampouras\textsuperscript{\equalcontributor,\affiliationD}
\And  
Samuel Horváth\textsuperscript{\equalcontributor,\affiliationE}
\And  
Andrej Jovanović\textsuperscript{\equalcontributor,\affiliationA}
\And  
Meghdad Kurmanji\textsuperscript{\equalcontributor,\affiliationA}
\And 
Preslav Aleksandrov\textsuperscript{\affiliationA}
\And 
William F. Shen\textsuperscript{\affiliationA}
\And 
Xinchi Qiu\textsuperscript{\affiliationA}
\And 
Nicholas D. Lane\textsuperscript{\affiliationA,\affiliationB}
}
\begin{document}
\doparttoc
\faketableofcontents

{
\begingroup
\begin{figure}[t]
\vspace{-2.25cm}
    \quad
    \begin{subfigure}{0.1275\textwidth}
        \includegraphics[width=\textwidth]{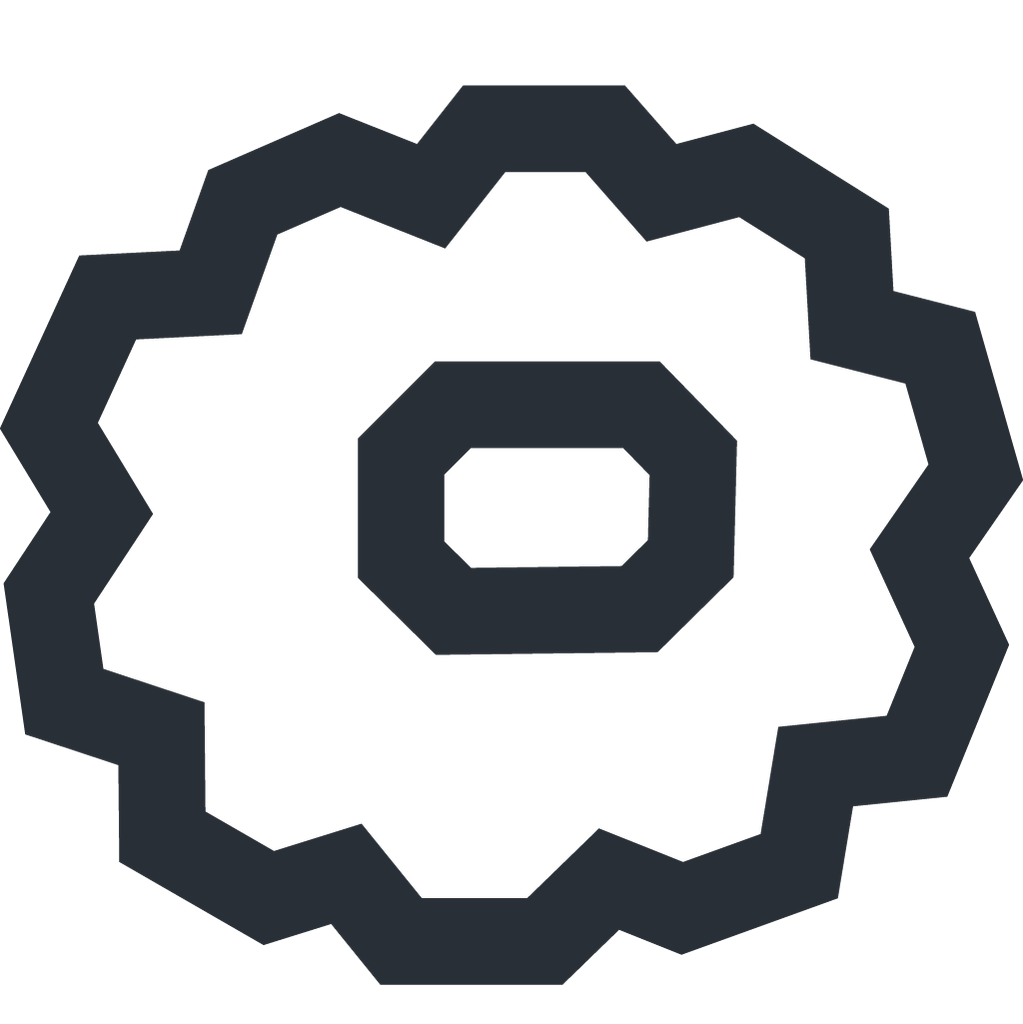}
    \end{subfigure}
    \hfill
    \begin{subfigure}{0.1\textwidth}
        \includegraphics[width=\textwidth]{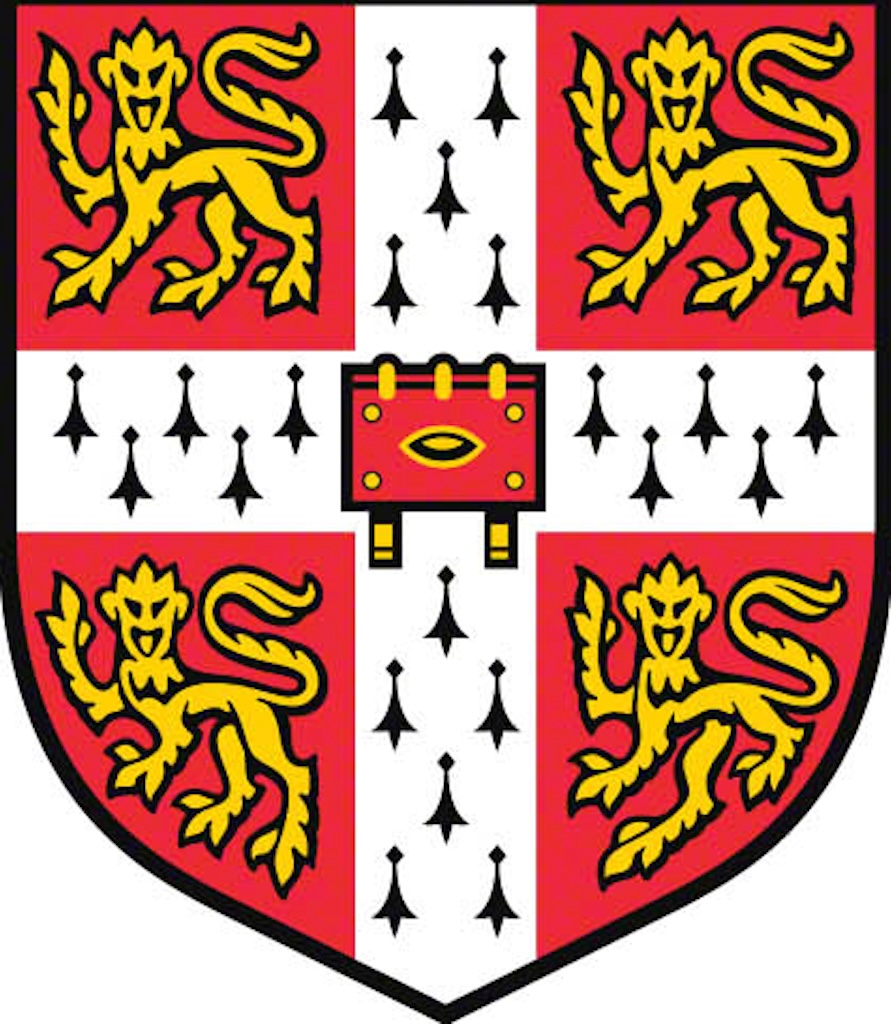}
    \end{subfigure}
    \hfill
    \begin{subfigure}{0.1275\textwidth}
        \includegraphics[width=\textwidth]{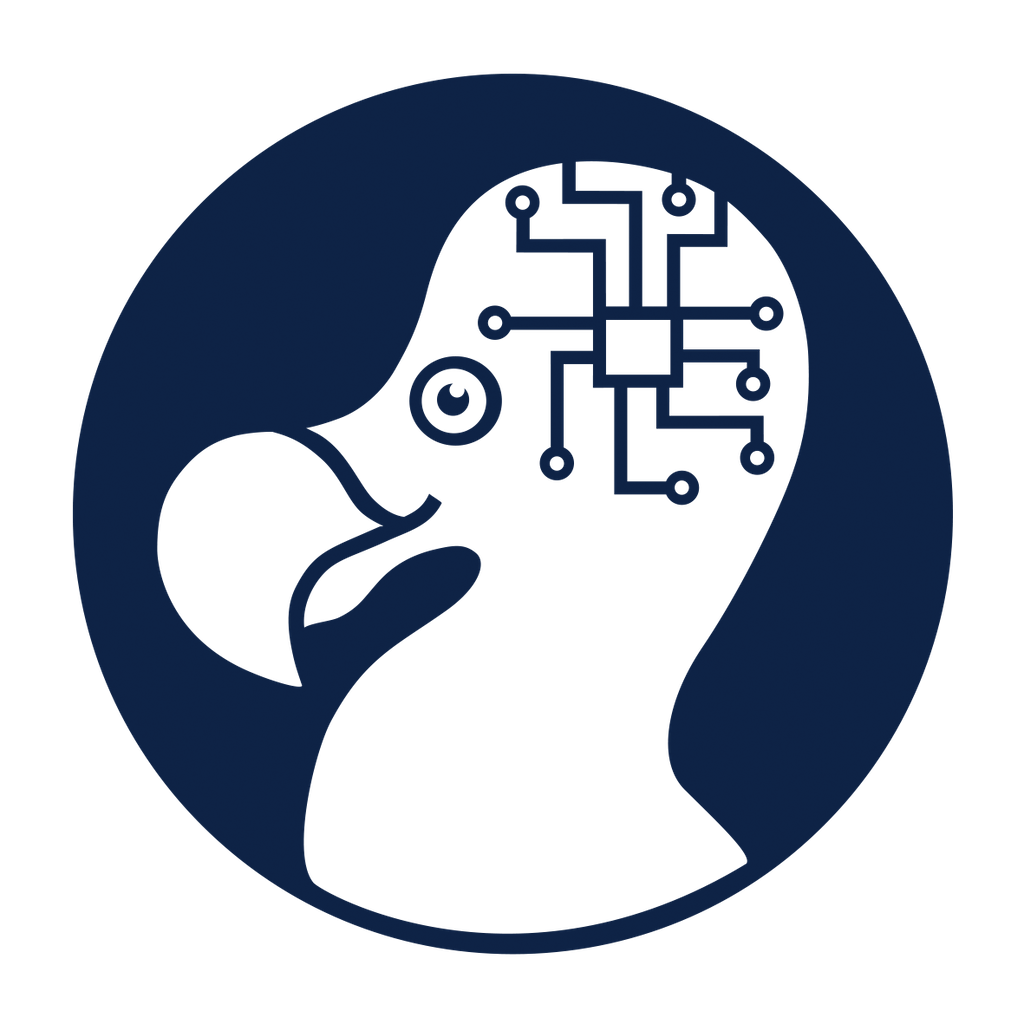}
    \end{subfigure}
    \vspace{-0.75cm}
\end{figure}

\endgroup
}
\maketitle

\renewcommand{\thefootnote}{\myfnsymbol{footnote}}
\footnotetext{\textsuperscript{\textdagger}\href{mailto:aai30@cam.ac.uk}{\nolinkurl{aai30@cam.ac.uk}}; \textsuperscript{*} Equal contributions; \textsuperscript{1}University of Cambridge; \textsuperscript{2}Flower Labs; \textsuperscript{3}Institute of Science and Technology Austria; \textsuperscript{4}University of Warwick; \textsuperscript{5}Mohamed bin Zayed University of Artificial Intelligence}

\vspace{-0.5cm}
\begin{abstract}
Scaling foundation model training with Distributed Data Parallel~(\ddp) methods is bandwidth-limited.
Existing infrequent communication methods like \localsgd were designed to synchronize only model parameters and cannot be trivially applied to adaptive optimizers due to additional optimizer states.
Current approaches extending \localsgd either lack convergence guarantees or require synchronizing all optimizer states, tripling communication costs.
We propose \fullmethod~(\method), a family of optimizers assigning independent synchronization periods to parameters and momenta, enabling lower communication costs while preserving convergence. Through extensive experiments on language models of up to $1.7$B, we show that \method can communicate $\mathbf{170}\times$ less than \ddp and $\mathbf{2}\times$ less than the previous state-of-the-art \localadam. Furthermore, unlike previous heuristic approaches, \method is suited for practical training scenarios
prone to system failures.
\method offers a scalable, bandwidth-efficient, and fault-tolerant solution for foundation model training.
\end{abstract}
\vspace{-0.5cm}

\section{Introduction}
\label{sec:intro}

Training foundation models requires distributing optimization across multiple workers to accommodate memory requirements and leverage additional compute. However, frequent gradient communication in standard Distributed Data Parallelism (\ddp)~\citep{PyTorchDistributed, Horovod,FSDP_ZeRO, FSDP_Pytorch} increases networking costs and limits scalability.
Early works like \localsgd~\citep{LocalSGD} and \fedavg~\citep{fedavg} reduced this overhead by synchronizing across workers \emph{infrequently}, averaging model parameters only after $K\gg1$ local steps rather than gradients every step.
However, modern foundation model training, e.g., of Large Language Models~\citep{llama3}, does not use Stochastic Gradient Descent, but rather \textbf{adaptive optimizers}~\citep{Adam,LION,LAMB,ADOPT} to scale effectively to larger batches~\citep{NoiseIsNotTheMainFactorSGDAdam}, at the expense of maintaining additional optimizer states.
 
Some extensions of \localsgd to adaptive optimizers~\citep{Photon,DiLoCo} average only model parameters; yet, this poses challenges. First, they lack convergence guarantees. Second, keeping optimizer states local~\citep{DiLoCo,DiLoCoScalingLaws,AsyncDiLoCo} accumulates noisy small-batch gradients and does not provide a means of adding new workers. This makes them unsuitable for 
environments prone to random system failures.
Third, re-initializing optimizer states~\citep{LLMFL,Photon,DEPT} destabilizes training by triggering loss spikes~\citep{LLMFL,Photon}.

\localadam~\citep{LocalAdam} addresses these challenges, proving periodic synchronization \emph{can} converge faster than standard \adam with \ddp, and remain \textbf{robust} to the addition of new workers.
However, it requires synchronizing optimizer states alongside model parameters, tripling communication payload size compared to \localsgd and \ddp.
Hence, our work aims to answer the following question:

\vspace{-0.25cm}
\begin{quote}
\centering
\emph{
Can independently syncing parameters and momenta improve communication efficiency for adaptive optimizers while maintaining convergence and robustness?}
\end{quote}
\vspace{-0.25cm}

As a result of our inquiry, we propose a new optimizer family, \fullmethod (\method), which sets independent synchronization frequencies for model parameters and optimizer states. This approach reduces communication overhead by synchronizing optimizer states less often. For base adaptive optimizers like \adam~\citep{Adam} and \adopt~\citep{ADOPT}, \method decouples the synchronization intervals for parameters, first momentum, and second momentum.

Empirically, we find that \method outperforms \localadam~\citep{LocalAdam} in communication efficiency by $\mathbf{2}\times$ and \ddp by $\mathbf{170}\times$ when training language models while offering several advantages:
\introtakeawaybox{
\begin{enumerate}[leftmargin=*]
\item \textbf{Provable convergence.} We prove convergence (see \cref{sec:theory}) for \method under two settings: non-convex objectives when using SGD with momentum (\texttt{SGDM}), and weakly convex objectives when using \adam. Since momentum sync frequencies appear in higher-order terms, our theory shows them to be less important. Our \adam proof assumes homogeneous losses, while our \texttt{SGDM} analysis allows heterogeneous losses typical in federated or distributed settings.
\item \textbf{Communication reduction.} Aligned with theory, we empirically show that parameters require equal or more frequent synchronization than momenta, and that less frequent momentum sync reduces communication ($\mathbf{2}\times$ vs \localadam, $\mathbf{170}\times$ vs \ddp). We demonstrate these savings persist under heterogeneous data sampling~(\cref{app:subsubsec:comm_reduction_het_data}), consistent with our \methodsgdm analysis.
\item \textbf{Scalability to large models.} We validate \method at billion-scale language model training with extended durations, demonstrating competitive \texttt{ICL} performance against both \localadam and \ddp.
\item \textbf{Hardware robustness.} Unlike previous heuristic methods~\citep{DiLoCo,Photon}, \method avoids persistent local states, enabling it to seamlessly integrate new workers during batch-size scheduling~\citep{DontDecayLearningRateIncreaseBatchSize,llama3} or
to support environments prone to random system failures.
\end{enumerate}
}

Given the shift toward larger models and extended pre-training~\citep{SmolLM2,llama3} far beyond compute-optimal token counts~\citep{TrainingComputeOptimalLLMs}, \method's convergence guarantees, reduced communication, and strong long-horizon performance make it a compelling replacement for \ddp, enabling efficient scaling across geographically distributed data centers without additional communication infrastructure.

\section{\fullmethod~(\method)}\label{sec:methods}

We start by characterizing the relation between the rate of change of optimizer states and \localadam, and how these can be leveraged to lower the communication cost. 
Consider the \adam update:
\begin{align}
    u_t &= \beta_1 u_{t-1} + (1-\beta_1)g_t,\\
    v_t &= \beta_2 v_{t-1} + (1-\beta_2) g_t \odot g_t.
\end{align}\label{eq:adam_update}%
For \localadam, convergence is contingent on $\beta_2$ satisfying $1-\beta_2=\widetilde{\mathcal O}\!\bigl(K^{-3/2}R^{-1/2}\bigr)$~\citep{LocalAdam} 
where $K$ is the number of local steps and $R$ the total communication rounds.
Large $K$ or $R$, typical in foundation model training~\citep{Photon}, implies $\beta_2\!\to\!1$, and conversely larger $\beta_2$ permits higher $K$ or $R$. 

A useful summary measure is the number of steps until a state's weight decays to a fraction $\psi$, $\tau_{\psi}(\beta)=\frac{\ln \psi}{\ln\beta}$.
Following \citet{AdemaMix}, we use the half-life $\tau_{0.5}$ as our primary measure, omitting $\beta$ when clear. For typical values of $\beta$, we have $\tau_{0.5}(0.95)\approx13.5$~\citep{SmolLM2}, $\tau_{0.5}(0.999)\approx692.8$~\citep{Adam}, and $\tau_{0.5}(0.9999)\approx6931$~\citep{ADOPT}. Intuitively, larger half-lives imply synchronizing gradients over longer horizons as the optimizer is less sensitive to new gradients; choosing $\beta=0$ ignores all previous momenta, whereas $\beta \to 1$ progressively attenuates signal from the current gradient.

While the half-life captures the horizon for which an optimizer state remains relevant to model updates, it provides no information on its absolute rate of change.
With coordinate-wise clipping, each gradient component satisfies $\lvert(g_t)_i\rvert \le \rho$.
Unrolling \adam's recursions for $K$ local steps gives:
\begin{align}
    u_{t+K} &= \beta_1^{K}u_t
                +(1-\beta_1)\!\sum_{k=0}^{K-1}\beta_1^{k}\,g_{t+K-1-k}, \\
    v_{t+K} &= \beta_2^{K}v_t
                +(1-\beta_2)\!\sum_{k=0}^{K-1}\beta_2^{k}\,
                   \bigl(g_{t+K-1-k}\odot g_{t+K-1-k}\bigr).
\end{align}\label{eq:adam_update_unrolled}
Since $\lvert g_{t,i}\rvert\le \rho$ and $\lvert(g_t\odot g_t)_i\rvert\le \rho^{2}$, the maximal $\ell_{\infty}$ drift of each moment is (see \cref{app:derivation_max_chance}):  
\begin{align}
\label{eq:abslute change_u}
    \bigl\|u_{t+K}-u_t\bigr\|_{\infty} &\le 2
        \rho\,\bigl(1-\beta_1^{K}\bigr),
\end{align}
\vspace{-1.5em}
\begin{align}
\label{eq:abslute change_v}
    \bigl\|v_{t+K}-v_t\bigr\|_{\infty} &\le 2
        \rho^{2}\,\bigl(1-\beta_2^{K}\bigr). 
\end{align}
From the above, large $\beta$ values and small clip bounds $\rho$, a common practice in foundation model training~\citep{gpt3,BLOOM}, limit the absolute changes in optimizer states. We can construct similar reasoning for other optimizers~\citep{NesterovIlya,ADOPT}, and norm-based clipping~\citep{OnTheDifficultyOfTrainingLanguageModels,gpt3}. From the above, the half-life of an optimizer state should inform its synchronization frequency. For example, if $\tau_{0.5}(0.95)\approx13.5$ and $K=256$, synchronization only affects few initial local steps. Over the course of the local training, the impact of the synchronised optimizer state shall decay to $0$ given Equations \ref{eq:abslute change_u} and \ref{eq:abslute change_v}. Conversely, if $K=16$, synchronization approximately matches the half-life, strongly influencing local updates.

\subsection{\method Algorithm}\label{subsec:method_algorithm}
\begin{algorithm}
\caption{\method}
\label{alg:desync_generic}
\footnotesize
\begin{onehalfspace}
\begin{algorithmic}[1]
  \Require \textbf{Model tensors, update functions, hyper-parameters} \\
          \quad $x_0 \in \mathbb{R}^{d}$, $\{s^{j}_{-1}\}_{j=1}^{N} \in (\mathbb{R}^d)^N$ — initial parameter vector, the initial $N$ optimizer states\\
          \quad $\{\upd^{j}\}_{j=1}^N:(\mathbb{R}^d\times\mathbb{R}^d\to\mathbb{R}^d)^N$ — updates optimizer state $j$ from its previous state and the gradient.\\
          \quad $\opt:\mathbb{R}^d\times\mathbb{R}^d\times\mathbb{R}_{+}\times(\mathbb{R}^d)^N\to\mathbb{R}^d$ — update params from the gradient, lr, and optimizer states. \\
          \quad $\rho \in \mathbb{R}_{+}$, $\{\eta_t\}_{t=0}^{T-1} \in (\mathbb{R}_{+})^{T-1}$ — clipping radius for $\clip(\cdot,\rho)$, learning-rate for each time-step\\      
          \quad $T,M \in \mathbb{N}_{+}$ — total optimization steps and number of workers\\
           \quad \textcolor{blue}{$K_x \in \mathbb{N}_{+},\,\{K_j\}_{j=1}^{N} \in (\mathbb{N}_{+})^N$} — communication periods (steps) 
  \Ensure $x_T,\;\{s_{T-1}^{j}\}_{j=1}^{N}$

  \State \textbf{for each worker} $m$: $x_0^m \gets x_0, s_{-1}^{j,m} \gets s_{-1}^j$ 
         \hfill\textcolor{gray}{\scriptsize local init}
  \For{$t = 0,\dots,T-1$} \hfill\textcolor{gray}{\scriptsize training loop}
    \ForAll{workers $m=0,\dots,M-1$ \textbf{in parallel}}
      \State $g_t^m \gets \nabla F(x_t^m;\xi_t^m)$
             \hfill\textcolor{gray}{\scriptsize stochastic grad}
      \State $\widehat{g}_t^m \gets \clip(g_t^m,\rho)$
             \hfill\textcolor{gray}{\scriptsize per-coordinate clipping}
      \For{$j = 1$ \textbf{to} $N$}
        \If{$\textcolor{blue}{t \bmod K_j = 0}$}
             \hfill\textcolor{blue}{\scriptsize sync $s^j$}
          \State $s_t^{j,m} \gets
                 \upd^{j}\,\!\bigl(\mathbb{E}_m[s_{t-1}^{j,m}],\;
                                    \widehat{g}_t^m\bigr)$
        \Else
          \State $s_t^{j,m} \gets
                 \upd^{j}\,\!\bigl(s_{t-1}^{j,m},\;
                                    \widehat{g}_t^m\bigr)$
        \EndIf
      \EndFor
      \If{$\textcolor{blue}{t \bmod K_x = 0}$}
             \hfill\textcolor{blue}{\scriptsize sync $x$}
        \State $x_{t+1}^m \gets
               \opt\,\!\bigl(\mathbb{E}_m[x_t^{\,m}],\;
                              \widehat{g}_t^m,\eta_t,\{s_t^{j,m}\}_{j=1}^{N}\bigr)$
      \Else
        \State $x_{t+1}^m \gets
               \opt\,\!\bigl(x_t^{\,m},\;
                              \widehat{g}_t^m,\eta_t,\{s_t^{j,m}\}_{j=1}^{N}\bigr)$
      \EndIf
    \EndFor
  \EndFor
\end{algorithmic}
\end{onehalfspace}
\end{algorithm}

Motivated by the above insights, we formalize \fullmethod as a family of optimizers offering the same convergence and robustness as \localadam but with significantly lower communication costs. Our approach applies generically to adaptive optimizers parameterized by $\opt : (\mathbb{R}^{d},\mathbb{R}^{d},\mathbb{R}_{>0},\{\mathbb{R}^{d}\})\!\to\!\mathbb{R}^{d}$, with $N$ optimizer states $\{s^{j}_{-1}\}_{j=1}^{N}\subset \mathbb{R}^{d}$, each updated by $\upd^{j} : (\mathbb{R}^{d},\mathbb{R}^{d})\!\to\!\mathbb{R}^{d}$. Coordinate-wise clipping is defined as $[\clip(X,\rho)]_i = \sgn([X]_i)\cdot\min\{|X_i|, \rho\}$. We focus our analysis on \texttt{SGDM} and \adam.

As shown in \cref{alg:desync_generic}, \method synchronizes parameters $x \in \mathbb{R}^{d}$ and optimizer states $\{s^{j}\}_{j=1}^{N}$ at state-specific intervals $K_x,\{K_j\}_{j=1}^{N}\!\in\!\mathbb{N}_{+}$. Setting $N=2$, $s^1_t=u_t$, $s^2_t=v_t$, and using update rules $\upd^{1},\upd^{2}$ from Eq.~\ref{eq:adam_update} yields \methodadam~(see \cref{alg:desync_adam}). 

\takeawaybox[Toy Example]{%
To highlight \method's practical benefit, \cref{fig:toy_example_iid} illustrates a scenario where \method and \localadam converge under noisy gradients, while prior heuristic methods~\citep{DiLoCo,Photon,DEPT,LLMFL} fail.
}

\begin{figure}
    \centering
\noindent\subfloat[Distance to the optimum.]
        {\includegraphics[height=4.25cm, width=0.485\columnwidth]{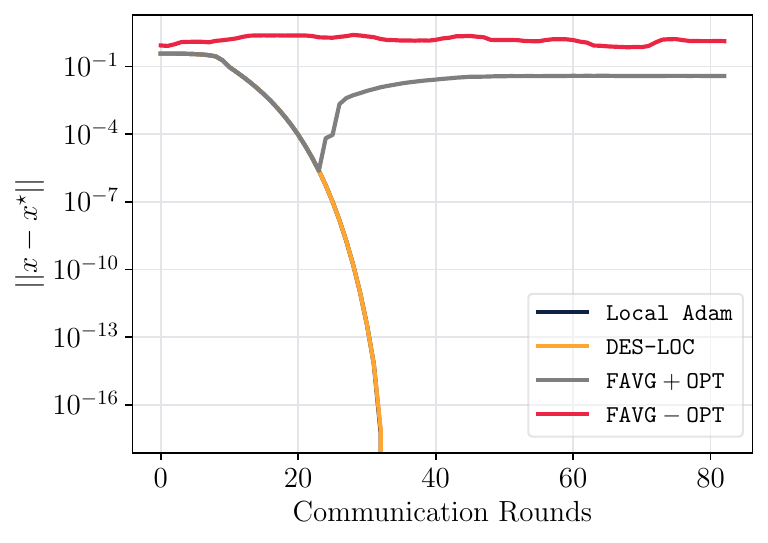}\label{fig:toy_distances_iid}}  \hfill
    \noindent\subfloat[2-D contour.]
    {\includegraphics[height=4.25cm, width=0.485\columnwidth]{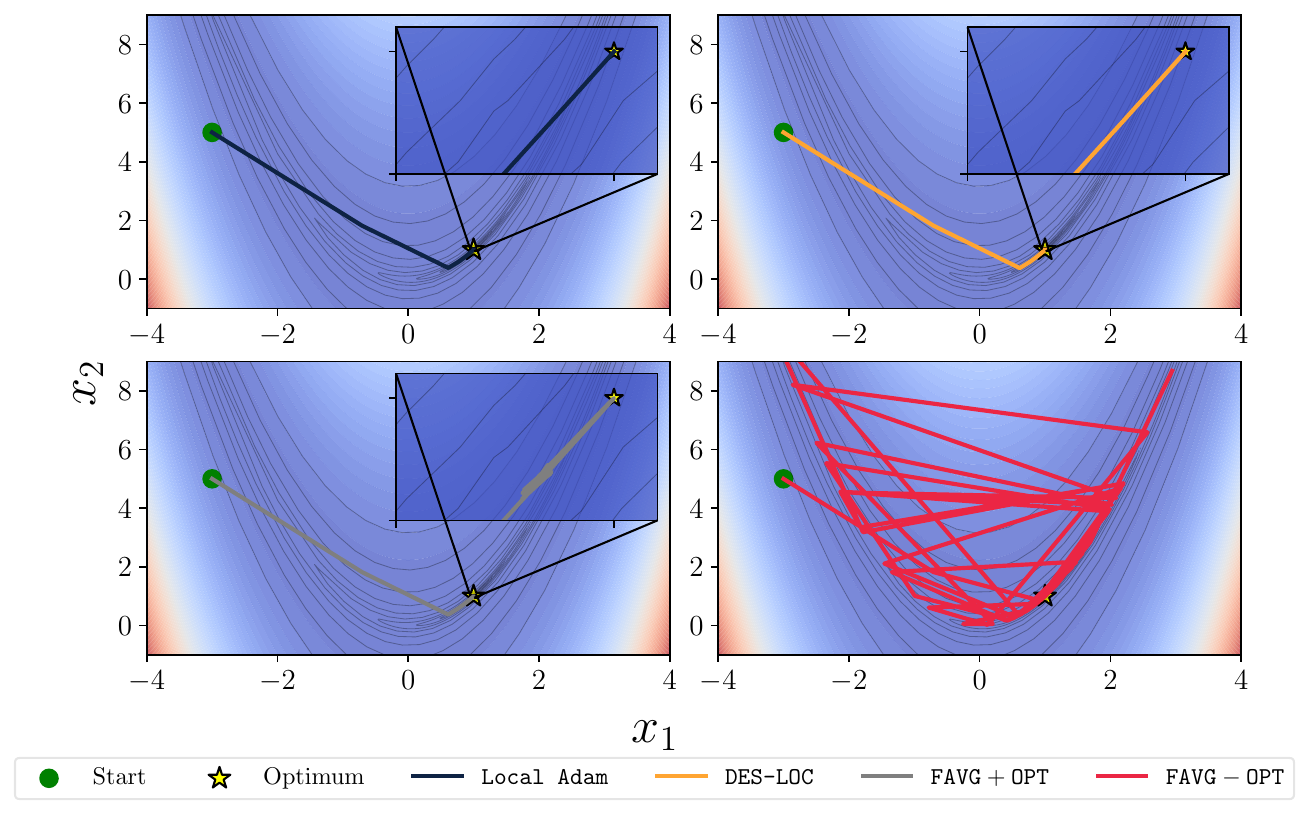}} \hfill
    \caption{We present a toy problem where \method ($K_x=192,K_u=192,K_v=692$) and \localadam ($K=K_x$) both converge to the optimum (overlapping in \cref{fig:toy_distances_iid}). Methods keeping optimizer states local \colorbox{grey_plot}{} ~\citep{DiLoCo,Photon} fail, causing oscillations without convergence. Periodically resetting states \colorbox{red_plot}{}~\citep{LLMFL,DEPT} similarly stalls due to repeated oscillations. We optimize the non-convex function $f(x_1,x_2)=(1 - x_1)^2+100(x_2 - x_1^2)^2$ with $M=256$ workers and IID Gaussian noise ($\sigma=1.5$).
    }
    \label{fig:toy_example_iid}
\end{figure}

\section{Convergence Guarantees for \method}\label{sec:theory}

In this section, we provide theoretical support for the proposed \method approach and demonstrate that synchronizing optimizer states is less critical to overall convergence than model averaging. To keep the presentation concise, we focus on a version of the Adam optimizer that uses only a single momentum state (i.e., SGD with momentum). Extensions to the full Adam optimizer with both momentum states can be carried out using analysis techniques from \cite{li2022distributed} for convergence in expectation, and from \cite{LocalAdam} for high-probability guarantees; we provide an informal result here and defer all detailed proofs and technical discussions to the appendix.


Formally, we consider the following optimization problem:
\begin{equation}\label{opt-problem}
\textstyle
    \min_{x\in \reals^d} f(x):= \frac{1}{M}\sum_{m=1}^M f_m(x),
    \quad\text{with}\quad f_m(x) = \mathbb{E}_{\xi\sim\mathcal{D}_m} [F_m(x;\xi)].
\end{equation}

In this setup, all $M$ machines collaboratively minimize the objective in \eqref{opt-problem}. Generally, we assume each machine $m$ has access to only dataset $\mathcal{D}_m$, which can differ from device to device. This recovers the homogeneous distribution case when all machines have the same dataset $\mathcal{D}_1=\mathcal{D}_2=\dots=\mathcal{D}_M$ and minimize the same loss $f_1(x)=f_2(x)=\dots=f_m(x)=f(x)$. As in practice, we assume each machine $m$ computes mini-batch stochastic gradients corresponding to randomly selected samples $\xi\sim\mathcal{D}_m$ from dataset $\mathcal{D}_m$. To derive convergence bounds, we further assume the following standard technical assumptions on the problem structure and stochastic gradients.

\begin{asp}[Lower bound  and smoothness]\label{ass:smooth}
    The overall loss function $f\colon\R^d\to\R$ is lower bounded by some $f^{*} \in \mathbb{R}$ and all local loss functions $f_m$ are $L$-smooth:
    $$\|\nabla f_m(x) - \nabla f_m(y)\| \leq L \|x-y\|, \quad \text{for any } x,y\in\R^d.$$ 
\end{asp}

\begin{asp}[Unbiased noise with bounded stochastic variance]\label{ass:boundgrad}
    The stochastic gradient $g^m$ of local loss function $f_m$ computed by machine $m$ is unbiased and the noise has bounded variance:
    $$\E[g^m] = \nabla f_m(x), \quad \E[\|g_t^m - \nabla f_m(x)\|^2] \le \sigma^2, \quad \text{for any } x\in\R^d.$$
\end{asp}

\begin{asp}[Bounded heterogeneity]\label{ass:het}
    For any $x\in\R^d$, the heterogeneity is bounded by
    $$\textstyle\frac{1}{M}\sum_{m=1}^M\|\nabla f_m(x) \|^2 \le G^2 + B^2\|\nabla f(x)\|^2.$$
\end{asp}

All three assumptions are standard and widely used in the convergence analysis of optimization algorithms \cite{Yu2019,pmlr-v119-karimireddy20a,wang2021fieldguidefederatedoptimization,Yuan2022}. Note that the bounded heterogeneity condition recovers the homogeneous case when $G^2=0$ and $B^2=1$.
To facilitate the technical presentation of the analysis, we view model and optimizer state synchronizations through assigning probabilities to each averaging event. Particularly, instead of averaging model parameters every $K_x$ steps (i.e., $\textcolor{blue}{t \bmod K_x = 0}$), we average \textcolor{blue}{with probability $p_x = \frac{1}{K_x}$}, which are statistically equivalent. In the following theorem, we provide convergence rate of SGDM optimizer under such probabilistic and decoupled synchronization:

\begin{thm}
Let Assumptions \ref{ass:smooth}, \ref{ass:boundgrad} and \ref{ass:het} hold. Then, choosing the step size $\eta = \min(\eta_0, \frac{1}{\sqrt{T}})$ with
\begin{equation}\label{eta-psi}
\textstyle
\eta_0 \eqdef \frac{1}{4L}\min\left(1-\beta, \frac{1}{6\sqrt{\psi\max(1,B^2-1)}} \right),
\quad\text{where}\quad
\psi \eqdef \frac{4(1-p_x)}{p_x^2} \cdot \frac{(1-\beta)(1-p_u)}{1-(1-p_u)\beta},
\end{equation}
the average iterates $x_t = \E_m[x_t^m]$ of \methodsgdm converge with the following rate:
\begin{equation}\label{rate-sgdm}
\textstyle
\frac{1}{T}\sum_{t=0}^{T-1}\E{\|\nabla f(x_t)\|^2}
\le \frac{4}{\sqrt{T}}\left(f(x_0) - f^*
+ \frac{L\sigma^2 }{2M} \right)
+ \mathcal{O}\left(\frac{1+\psi}{T}\right).
\end{equation}
\end{thm}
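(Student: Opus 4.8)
The plan is to reduce the decoupled, locally-updated iterates to a single \emph{virtual averaged} trajectory and then run a momentum descent argument on it, treating the per-worker deviations as controllable perturbations. First I would define $\bar{x}_t = \mathbb{E}_m[x_t^m]$ and $\bar{u}_t = \mathbb{E}_m[u_t^m]$ and observe that, because the SGDM updates $u_t = \beta u_{t-1} + (1-\beta)g_t$ and $x_{t+1} = x_t - \eta u_t$ are affine in the state and worker-averaging is linear, the synchronization events leave the averaged trajectory unchanged: whether or not a parameter or momentum sync fires at step $t$, one has $\bar{u}_t = \beta\bar{u}_{t-1} + (1-\beta)\bar{g}_t$ and $\bar{x}_{t+1} = \bar{x}_t - \eta\bar{u}_t$ with $\bar{g}_t = \mathbb{E}_m[\widehat{g}_t^m]$. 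Thus the averaged sequence behaves exactly like centralized SGDM driven by the averaged gradient (treating the clipped stochastic gradient as unbiased per Assumption \ref{ass:boundgrad}), and synchronization governs only how far individual workers drift from the mean.

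Next I would remove the momentum lag via the standard auxiliary sequence $z_t := (\bar{x}_t - \beta\bar{x}_{t-1})/(1-\beta)$, which telescopes to the clean recursion $z_{t+1} = z_t - \eta\bar{g}_t$. Applying $L$-smoothness (Assumption \ref{ass:smooth}) to $f(z_{t+1})$ and taking expectations yields a descent inequality $\mathbb{E} f(z_{t+1}) \le \mathbb{E} f(z_t) - \eta\,\mathbb{E}\langle\nabla f(z_t),\tfrac{1}{M}\sum_m\nabla f_m(x_t^m)\rangle + \tfrac{L\eta^2}{2}\mathbb{E}\|\bar{g}_t\|^2$. The variance term contributes $\tfrac{L\eta^2}{2}\sigma^2/M$ (Assumption \ref{ass:boundgrad}, averaged over $M$ workers, giving the $\tfrac{L\sigma^2}{2M}$ in the rate), while I would convert the inner-product term into $-\tfrac{\eta}{2}\|\nabla f(\bar{x}_t)\|^2$ plus error terms capturing (i) the momentum gap $\|z_t - \bar{x}_t\|^2 = \tfrac{\beta^2\eta^2}{(1-\beta)^2}\|\bar{u}_{t-1}\|^2$ and (ii) the consensus error $\tfrac{1}{M}\sum_m\|x_t^m - \bar{x}_t\|^2$, both bounded through smoothness.

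The crux, and where I expect the main difficulty, is bounding these drift terms, so I would set up two coupled recursions under the independent Bernoulli synchronizations. For the momentum deviation $\delta_t^m := u_t^m - \bar{u}_t$, conditioning on the sync event at $t$ gives $\delta_t^m = (1-\beta)(\widehat{g}_t^m-\bar{g}_t)$ with probability $p_u$ and $\delta_t^m = \beta\delta_{t-1}^m + (1-\beta)(\widehat{g}_t^m-\bar{g}_t)$ otherwise; taking expectations and summing the resulting geometric recursion with contraction factor $(1-p_u)\beta$ produces the momentum factor $\tfrac{(1-\beta)(1-p_u)}{1-(1-p_u)\beta}$, where the per-step dissimilarity $\mathbb{E}\|\widehat{g}_t^m-\bar{g}_t\|^2$ is bounded via Assumptions \ref{ass:boundgrad}--\ref{ass:het} by $\sigma^2$ together with $G^2 + B^2\|\nabla f(\bar{x}_t)\|^2$. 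Since the parameter deviation resets to zero at rate $p_x$ and otherwise accumulates $-\eta\delta_s^m$, its expected squared magnitude scales like $\eta^2\cdot\tfrac{1-p_x}{p_x^2}$ times the momentum-drift level; multiplying the two factors reproduces exactly $\psi = \tfrac{4(1-p_x)}{p_x^2}\cdot\tfrac{(1-\beta)(1-p_u)}{1-(1-p_u)\beta}$.

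Finally I would substitute the drift bounds into the summed descent inequality and telescope $f(z_t)$ from $0$ to $T-1$, using $z_0 = x_0$ and $f(z_T)\ge f^*$. The heterogeneity contribution proportional to $B^2\|\nabla f\|^2$ and the momentum-lag contribution proportional to $\|\bar{u}\|^2$ must be absorbed into the leading $-\tfrac{\eta}{2}\|\nabla f(\bar{x}_t)\|^2$ term by imposing $\eta \le \eta_0$; this is precisely why $\eta_0$ carries the factors $(1-\beta)$ (taming the lag) and $1/\sqrt{\psi\max(1,B^2-1)}$ (taming the heterogeneous drift). Dividing by $\eta T$ and setting $\eta = \min(\eta_0, 1/\sqrt{T})$ then yields the advertised leading term $\tfrac{4}{\sqrt{T}}(f(x_0)-f^*+\tfrac{L\sigma^2}{2M})$ together with the $\mathcal{O}((1+\psi)/T)$ remainder collecting all higher-order drift contributions.
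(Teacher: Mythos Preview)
Your proposal is correct and follows essentially the same route as the paper: define the averaged trajectory (which obeys centralized SGDM regardless of sync events), pass to the auxiliary iterate $z_t=\frac{1}{1-\beta}\bar x_t-\frac{\beta}{1-\beta}\bar x_{t-1}$, apply smoothness at $z_t$, and control the two drift sources via the geometric recursions with contraction factors $(1-p_u)\beta$ and reset rate $p_x$ to produce exactly $\psi$, then choose $\eta$. One small slip: the \methodsgdm algorithm analyzed here does \emph{not} clip gradients, so your reference to ``the clipped stochastic gradient'' is extraneous; otherwise the decomposition, the two key drift lemmas, and the final step-size absorption match the paper's argument.
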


We now discuss the convergence result and its implications.
First, the obtained rate \eqref{rate-sgdm} is asymptotically optimal for this setup \citep{arjevani2023lowerbound}. Notably, the leading term $\mathcal{O}(\frac{1}{\sqrt{T}})$ is unaffected by the number of local steps or by the decoupled synchronization approach we propose. Interestingly, probabilities $p_x$, $p_u$, and the momentum parameter $\beta$ appear only in the higher-order term $\mathcal{O}(\frac{1}{T})$, and thus have a limited impact on the convergence speed. In particular, setting $p_x = 1$ and $p_u = 0$ (which implies $\psi = 0$) recovers standard mini-batch SGDM and its corresponding convergence rate \cite{Liu2020}.

Regarding the relative importance of model and optimizer state synchronization steps, it is evident from \eqref{eta-psi} that model synchronization has a greater impact on convergence due to the dependence $\psi = \mathcal{O}(\frac{1}{p_x^2})$. Moreover, momentum averaging can be turned off entirely ($p_u = 0$) without affecting the asymptotic behavior of the rate. Clearly, the same is not true for model averaging: with vanishing $p_x$, the $\psi$ term becomes unbounded and breaks the rate. However, since the $\psi$ term also appears in the step-size restriction \eqref{eta-psi}, increasing the frequency $p_u$ of momentum averaging---while not changing the asymptotic rate---allows for a larger step size in theory, potentially leading to faster convergence in practice. Overall, the obtained theory justifies the hypothesis that momentum states can be synchronized less frequently than the model parameters and that more averaging improves convergence through supporting larger step sizes.

For \methodadam, we generalize the convergence result of \citep{LocalAdam} as follows,

\begin{thm} [Informal]\label{thm:adam_desync}
Let $K_{\mathrm{lcm}} = \mathrm{lcm}\{K_x,K_u,K_\upsilon\}$\footnote{Least common multiple.}, $f$ be weakly-convex and the same assumptions as in Theorem 3 of \citep{LocalAdam},  then with probability $\geq 1-\delta$, \method yields,

\begin{equation}
\frac{1}{K_{\mathrm{lcm}} R} \sum^{R-1}_{r=0} \sum^{K_{\mathrm{lcm}}-1}_{k=0} \|\nabla f(\bar{z}_{r,k})\|^2  = \tilde{\mathcal{O}}\left(  \sqrt{\frac{L\Delta\sigma^2}{MK_{\mathrm{lcm}}R}} \right),
\end{equation}
where $\bar{z}_{r,k}$ is an auxiliary sequence of $x_{r,k}$, and $\sigma$ bounds stochastic noise (see Appendix for details).
\end{thm}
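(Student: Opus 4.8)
The plan is to obtain the \methodadam{} bound as a direct generalization of the \localadam{} analysis (their Theorem 3), exploiting the fact that all three states---parameters $x$, first moment $u$, and second moment $v$---are \emph{simultaneously} synchronized exactly at the multiples of $K_{\mathrm{lcm}} = \mathrm{lcm}\{K_x,K_u,K_\upsilon\}$. I would therefore partition the $T = K_{\mathrm{lcm}} R$ iterations into $R$ super-rounds of length $K_{\mathrm{lcm}}$, so that at every super-round boundary the workers share identical $x$, $u$, and $v$, exactly as at the round boundaries of \localadam{} with $K = K_{\mathrm{lcm}}$. The auxiliary sequence $\bar z_{r,k}$ would be the standard momentum-removing virtual iterate, $z = x + \tfrac{\beta_1}{1-\beta_1}(x - x_{-1})$ averaged over workers, chosen so that $\bar z$ obeys a clean one-step recursion driven by the preconditioned clipped gradient, turning the \adam{} update into a quasi-SGD step amenable to a descent inequality.

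The key steps, in order, would be: (i) establish a per-step descent inequality for $f(\bar z_{r,k})$ using $L$-smoothness together with weak convexity to control the curvature cross terms; (ii) bound the three consensus (drift) errors $\|x^m-\bar x\|$, $\|u^m-\bar u\|$, $\|v^m-\bar v\|$ using the state-specific reset periods---crucially, each error is reset every $K_x$, $K_u$, or $K_\upsilon$ steps respectively, and since each period divides $K_{\mathrm{lcm}}$, the drift accumulated over a super-round is no larger than the corresponding \localadam{} drift with $K=K_{\mathrm{lcm}}$, with the clip bound $\rho$ and the unrolled moment-drift estimates $\|v_{t+K}-v_t\|_\infty \le 2\rho^2(1-\beta_2^{K})$ supplying the explicit constants; (iii) apply a martingale (Freedman-type) concentration to the clipped stochastic-gradient noise to convert the in-expectation descent into the high-probability $1-\delta$ statement; and (iv) sum the descent inequality over $k=0,\dots,K_{\mathrm{lcm}}-1$ and $r=0,\dots,R-1$, telescoping across super-round boundaries where full synchronization holds, then optimize the step size to balance the leading $\tilde{\mathcal O}\!\bigl(\sqrt{L\Delta\sigma^2/(MK_{\mathrm{lcm}}R)}\bigr)$ term against the residual drift contributions.

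I expect the main obstacle to be the adaptive preconditioner $\operatorname{diag}(1/\sqrt{v})$ under desynchronized second moments. Because $v$ is averaged only every $K_\upsilon$ steps---and $K_\upsilon$ is typically the largest period---the workers' preconditioners can disagree more than in \localadam{}, and this disagreement propagates nonlinearly through $1/\sqrt{v}$ into both the effective step size and the descent term. The crux is to show, using $\lvert(\widehat g_t)_i\rvert\le\rho$ and the regime $\beta_2\to 1$ that keeps the per-round drift $2\rho^2(1-\beta_2^{K_\upsilon})$ small, that the $v$-desynchronization enters only the higher-order $\tilde{\mathcal O}(1/(K_{\mathrm{lcm}}R))$ remainder and never the leading term---precisely the quantitative content of the paper's claim that momentum synchronization is less critical than parameter synchronization. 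A secondary difficulty is the bookkeeping of the mismatched sync schedules so that telescoping across partial, state-specific averaging events remains exact; I would handle this by observing that averaging is a contraction that can only shrink a consensus error, so intermediate partial syncs never inflate any drift term beyond its $K_{\mathrm{lcm}}$-round bound, and the $\mathrm{lcm}$ structure guarantees the errors are fully reset at each super-round boundary.
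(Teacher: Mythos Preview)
Your core reduction is exactly the paper's: view the iterations in super-rounds of length $K_{\mathrm{lcm}}$, observe that all three states are fully synchronized at super-round boundaries, and use the fact that intermediate partial averagings are contractions so that every drift term is dominated by its value under \localadam{} with $K=K_{\mathrm{lcm}}$. The paper formalizes precisely this dominance with the elementary calculation
\[
\|s^n_k - s_0\| \;\le\; \sum_{i=K_s+1}^{k}\|s^n_i-s^n_{i-1}\| \;+\; \frac{1}{M}\sum_{m=1}^M\sum_{i=1}^{K_s}\|s^m_i-s^m_{i-1}\| \;\le\; kU,
\]
and then simply \emph{cites} Theorem~D.3 of \citet{LocalAdam} with $K$ replaced by $K_{\mathrm{lcm}}$, rather than re-deriving the descent inequality, the Freedman concentration, or the step-size optimization you outline in (i)--(iv). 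Your plan would work, but is considerably more labor than the paper expends.

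Where your proposal diverges is in the ``main obstacle'' paragraph. You set out to show that the $v$-desynchronization enters only the higher-order $\tilde{\mathcal O}(1/(K_{\mathrm{lcm}}R))$ remainder, calling this ``precisely the quantitative content of the paper's claim that momentum synchronization is less critical.'' That is \emph{not} the content of Theorem~\ref{thm:adam_desync}: the theorem only asserts the $K_{\mathrm{lcm}}$ rate, and the paper explicitly flags this Adam result as ``relatively weak and not tight,'' noting it does not distinguish the sync frequencies at all---the bound is the same as \localadam$(K_{\mathrm{lcm}},K_{\mathrm{lcm}},K_{\mathrm{lcm}})$, which communicates \emph{less} than \method{} under this analysis. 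The quantitative separation of $p_x$ from $p_u$ (and the claim that momentum sync lives in higher-order terms) is established only for \methodsgdm{} in Theorem~1, not for Adam here. So your preconditioner worry, and the $\beta_2\to 1$ argument you sketch to tame it, would be aiming at a stronger statement than the one you are asked to prove; for Theorem~\ref{thm:adam_desync} the contraction-of-averaging observation you already have in your final sentence suffices.
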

Theorem \ref{thm:adam_desync} generalizes the convergence result of local Adam \cite{LocalAdam} to the case of different state-specific intervals $K_x,K_u,K_\upsilon$.
A full version of Theorem \ref{thm:adam_desync} is provided in the Appendix.

\section{Experimental Design}
\label{sec:exp_design}

Our experimental setup addresses the following research questions:

\begin{itemize}[noitemsep,topsep=0pt,parsep=2pt,partopsep=0pt]
    \item[\textbf{RQ1}] Do \emph{theoretical} rates of change predict the \emph{empirical} evolution of optimizer states?
    \item[\textbf{RQ2}] How does the synchronization frequency of a model/optimizer state impact performance?
    \item[\textbf{RQ3}] To what extent can \method cut communication w.r.t. \localadam in practical scenarios?
    \item[\textbf{RQ4}] How does \method scale with increasing model size and longer training horizons?
\end{itemize}


\subsection{Experimental Setup}\label{sec:exp_setup}

\textbf{Models and data.} Unless noted, we train a $135$M-parameter \texttt{GPT}-style model (see \cref{tab:model_architectures}) with sequence length $2048$. Following \citet{Photon}, we distinguish worker batch size $\mathcal{B}_w$ from global batch size $\mathcal{B}=\sum_{w=0}^{M-1}\mathcal{B}_w$. By default, we evenly split a global batch of $2$M tokens across $M=4$ workers, sampling \texttt{IID} from \texttt{SmolLM2}~\citep{SmolLM2}: $70\%$ \texttt{Fineweb-Edu}~\citep{FineWeb}, $10\%$ \texttt{Cosmopedia}~\citep{Cosmopedia}, $10\%$ \texttt{Python-Edu}, $5\%$ \texttt{FineMath 4+}, and $5\%$ \texttt{Infi-WebMath 4+}. The $135$M model trains for $6.4$B tokens ($2.4\times$ compute-optimal~\citep{TrainingComputeOptimalLLMs}). For \textbf{RQ4}, we scale to $1.7$B for $40$B tokens ($2\times$ compute-optimal) following recent practice~\citep{llama3,BeyondChinchilla,SmolLM2}. In heterogeneous experiments, each worker samples one dataset component except the \texttt{Fineweb-Edu} worker, which samples the \texttt{SmolLM2} mixture.

\textbf{Optimizers.} We use \adam~\citep{Adam} and its problem-independent variant \adopt~\citep{ADOPT}. By modifying the second-moment update, \adopt guarantees optimal-rate convergence for any $\beta_2$ and stabilizes small per-worker batches without altering \adam's core properties. For the $135$M-parameter experiments, we grid-search $(\beta_1,\beta_2,\eta)$ under \ddp; the $1.7$B model adopts hyperparameters from~\citet{SmolLM2,ADOPT}. Learning rates follow the warmup-stable-decay (\texttt{WSD}) schedule~\citep{BeyondFixedTrainingDuration,SmolLM2}. We favor \adopt with default $\beta_2=0.9999$ in high-$\beta$ regimes where \adam is often unstable.

\textbf{Baselines.} We compare \method with: (i) fully synchronous \adam/\adopt via \ddp; (ii) Local \adam/\adopt; (iii) \texttt{FedAvg}/\localsgd persistently keeping optimizer states~\citep{Photon,DiLoCo}, which we call \texttt{FAVG}$+$\texttt{OPT}; and (iv) \texttt{FedAvg} resetting optimizer states~\citep{LLMFL,DEPT}, which we call \texttt{FAVG}$-$\texttt{OPT};. Persistent-state \texttt{FedAvg} corresponds to \method with infinite state sync periods ($K_u,K_v=\infty$), providing an upper bound on communication efficiency. We expect \ddp to serve as an upper bound on performance for the machine-learning objective. When discussing hardware robustness, we are concerned with
environments prone to systems failures and the repeated re-allocation of workers.

\textbf{Metrics.} We evaluate \method and baselines by (i) perplexity and (ii) per-worker asymptotic communication cost assuming a bandwidth-optimal \texttt{Ring-AllReduce}~\citep{Horovod} algorithm scaling linearly with model size. For the $1.7$B model, we report standard in-context-learning (\texttt{ICL}) benchmarks~\citep{gpt3} as they become discriminative at larger scales, we use a zero-shot setting for \texttt{ICL} tasks unless stated otherwise following \citet{SmolLM2} and report the best performing communication-efficient method in \textcolor{blue}{blue} with the best-performing overall in \textbf{bold}. To fairly compare optimizer-state changes across decay rates, we measure their \emph{relative} rates of change as $\|s_{t+K}-s_t\|_2/\|s_t\|_2$. For convergence plot comparisons, we report metric means and standard deviations computed over the last round~(shown next to labels). In addition, we provide in the supplementary materials an analysis on the wall-clock time benefits of our approach compared to the baseline, along with our system modeling.


\section{Evaluation}\label{sec:evaluation}

Our results show optimizer states change at different rates~(\cref{subsect:eval:rates_of_change}), forming a clear synchronization hierarchy~(\cref{eval:subsec:relative_importance}). \method reduces communication $2\times$ vs. \localadam~(\cref{eval:subsec:baseline_comparison}) while converging robustly
with adding workers
and scaling effectively to large models~(\cref{eval:subsec:larger_models}).

\subsection{Higher $\beta$ Optimizer States Have Slower Empirical Rates of Change~(RQ1)}\label{subsect:eval:rates_of_change}

\Cref{fig:eval:rates_of_change_adopt} shows that relative rates of change for the two momenta in \local \adopt/\adam scale with their decay rates under gradient clipping ($\rho=1$). Supported by our theoretical discussion on momenta half-lives (\cref{sec:methods}), the second momentum evolves substantially slower than the first at high-$\beta_2$. For \localadam, the second momentum remains slower even when $\beta_2\approx\beta_1$, potentially because gradient variance~\citep{Adam} evolves slower than the mean direction~(first momentum).
\begin{figure}
    \centering
    \noindent\subfloat[First momentum change rate for \localadopt.]
    {\includegraphics[width=0.485\columnwidth]{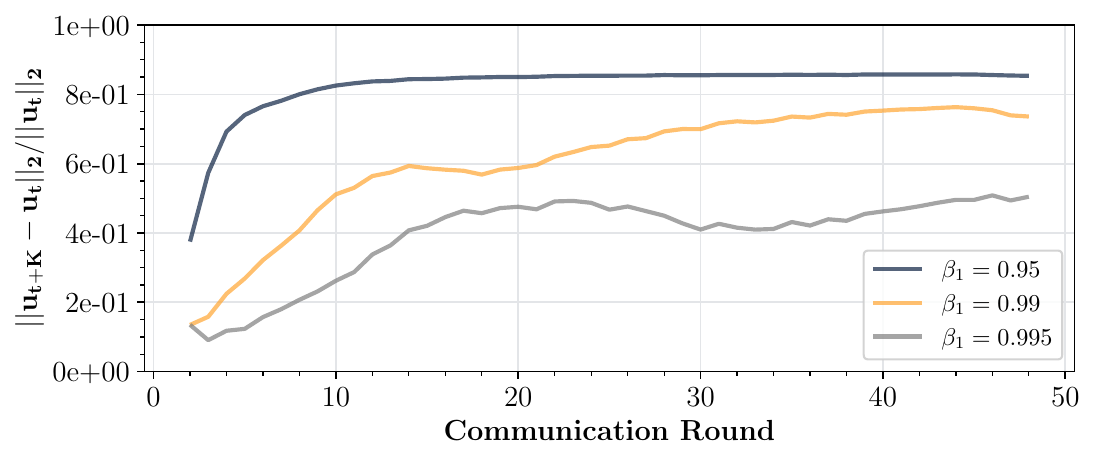}}  \hfill
    \noindent\subfloat[Second momentum change rate for \localadopt.]
    {\includegraphics[width=0.485\columnwidth]{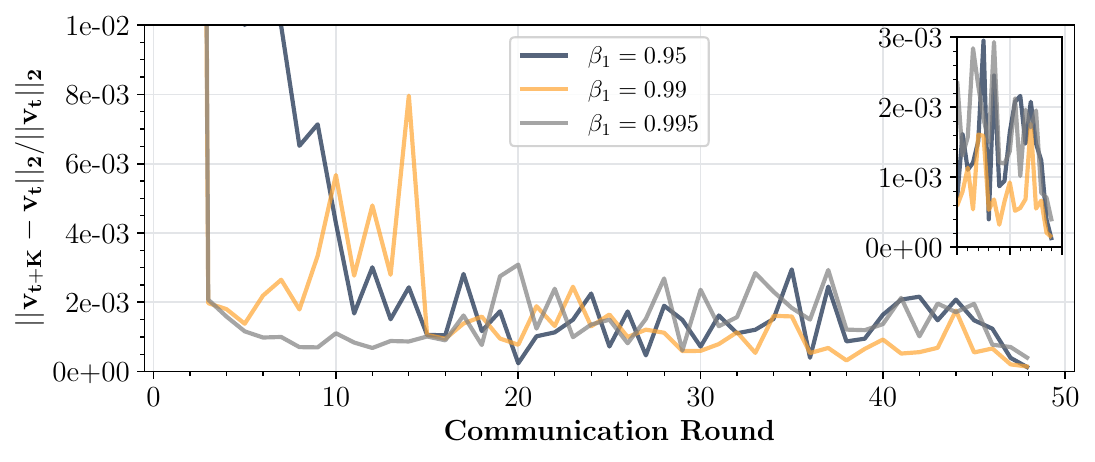}} 
    \hfill
    \noindent\subfloat[First momentum change rate for \localadam.]
    {\includegraphics[width=0.485\columnwidth]{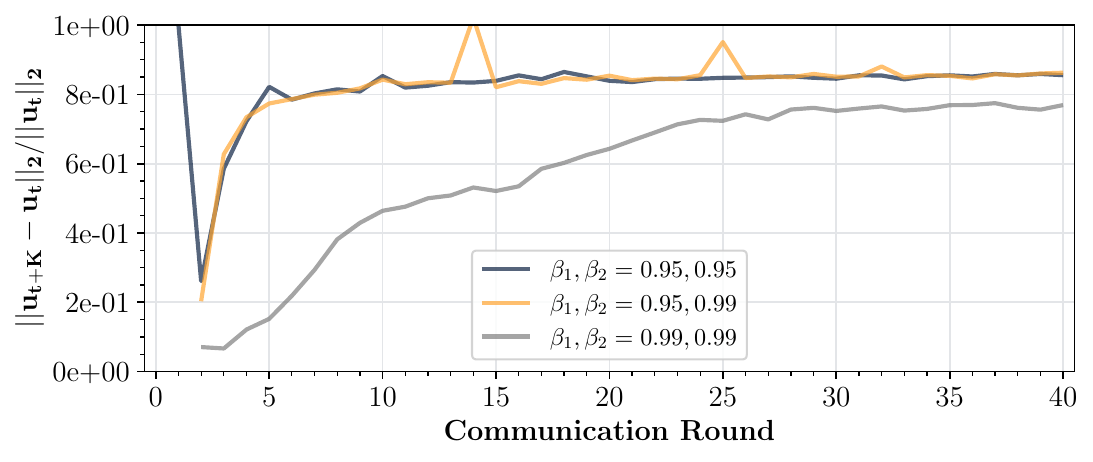}}  \hfill
    \noindent\subfloat[Second momentum change rate for \localadam.]
    {\includegraphics[width=0.485\columnwidth]{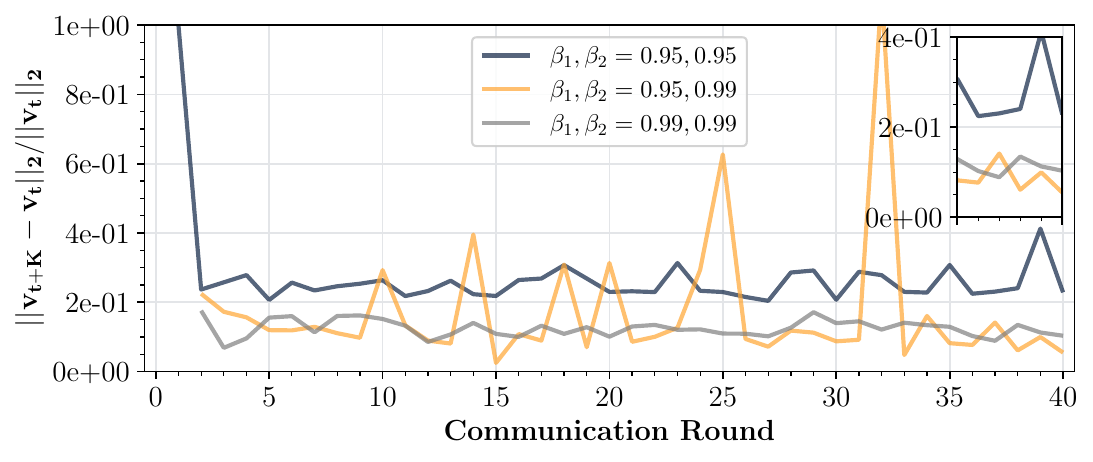}} \hfill
    \caption{Relative rates of change for first and second momenta across rounds using standard \texttt{Local} \adopt/\adam ($K=64$). For \adopt ($\beta_2=0.9999$), increasing $\beta_1\geq0.99$ greatly slows the first-momentum rate of change. The second momentum evolves $\sim100\times$ slower (note y-axis is in log scale), consistent with their decay rates and half-lives. For \adam, higher $\beta_1,\beta_2$ slow both momenta.
    }
    \label{fig:eval:rates_of_change_adopt}

\end{figure}

\takeawaybox[Takeaway:]{%
As discussed in  \cref{sec:methods,sec:theory}, when $\beta_1 \ll \beta_2$, the second momentum evolves slower than the first, proportional to half-life ratio of the two $\frac{\tau_{0.5}(\beta_2)}{\tau_{0.5}(\beta_1)}= \frac{\ln(\beta_1)}{\ln(\beta_2)}$. 
}

\subsection{Parameters Require Frequent Sync, Momenta Sync Proportional to $\beta$~(RQ2)}\label{eval:subsec:relative_importance}

\Cref{fig:eval:independent_sync_frequencies} evaluates the effect of independently varying synchronization periods ($K_x,K_u,K_v$) for parameters and optimizer states. We consider two baseline periods ($K_b=16,256$), chosen based on the fastest state's half-life ($\tau_{0.5}(0.95)\approx13.5$). Frequent parameter synchronization ($K_x$) is crucial for performance, while synchronizing momenta ($K_u,K_v$) significantly impacts training only if their half-lives align with the base frequency $K_b$. Otherwise, synchronization frequency primarily influences communication costs rather than model quality. \adam results can be seen in \cref{app:additional_results}.

\begin{figure}[H]
    \centering
    \noindent\subfloat[\method vary $K_x$ , fixed $K_u=K_v=256$]
    {\includegraphics[width=0.485\columnwidth]{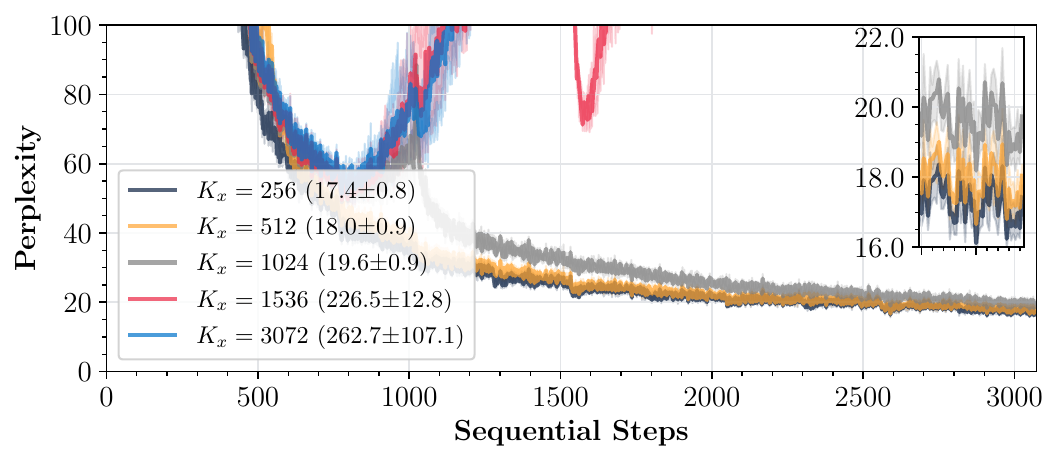}}\label{fig:eval:independent_sync_frequencies_a}  \hfill
    \noindent\subfloat[\method vary $K_v$ , fixed $K_x=K_u=256$]
    {\includegraphics[width=0.485\columnwidth]{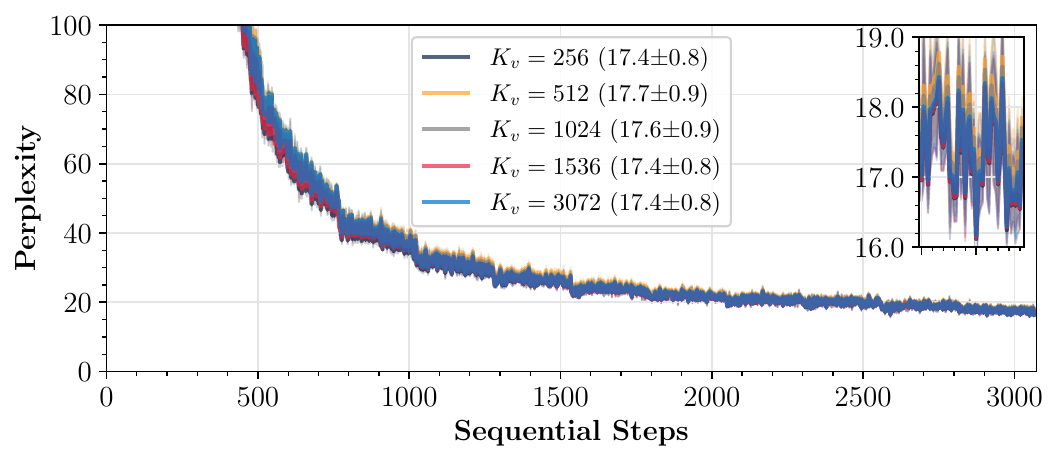}} 
    \hfill
    \noindent\subfloat[\method vary $K_u$, fixed $K_x=K_v=16$]
    {\includegraphics[width=0.485\columnwidth]{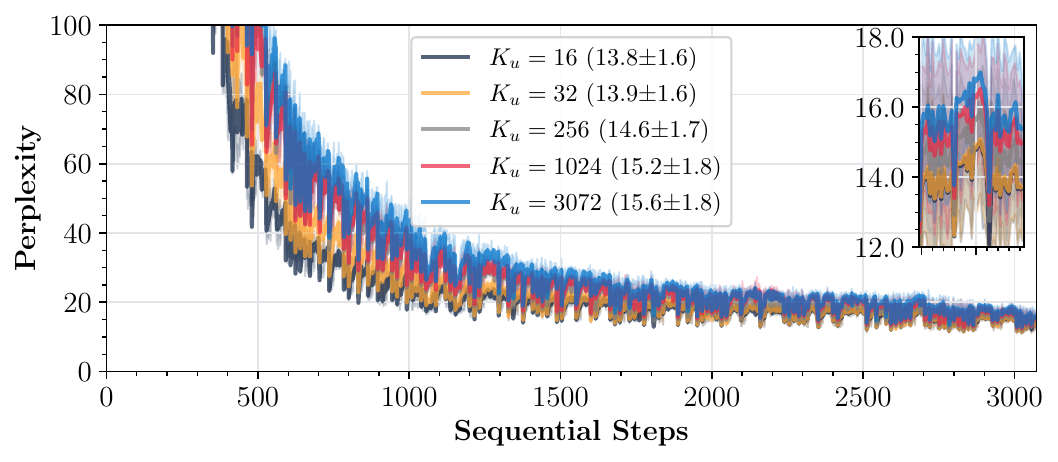}}  \hfill
    \noindent\subfloat[\method vary $K_u$, fixed $K_x=K_v=256$]
    {\includegraphics[width=0.485\columnwidth]{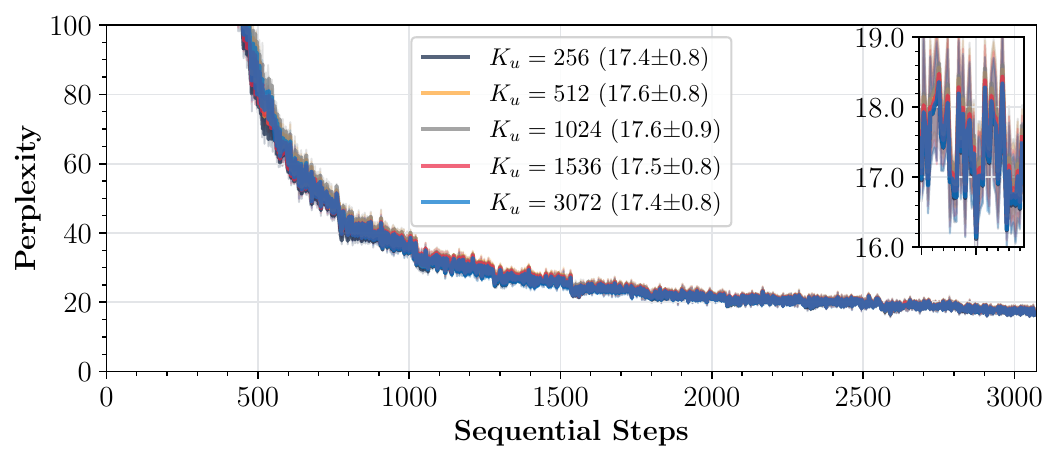}} \hfill
    \caption{Model perplexity for \method (\adopt, $\beta_1=0.95,\beta_2=0.9999$), varying synchronization periods independently (others fixed at $K_b$). Parameter synchronization (a) is critical, with sharp degradation at higher periods. Second-momentum synchronization (b) minimally affects performance due to its large half-life ($\tau_{0.5}(\beta_2)\gg K_b$). First-momentum synchronization significantly improves perplexity~(c) only when the baseline matches its half-life ($K_b=16$), having minimal impact otherwise (d). Parameters and second momentum behave similarly across sync frequencies~(\cref{app:additional_results})
    }
    \label{fig:eval:independent_sync_frequencies}

\end{figure}

\takeawaybox[Takeaway:]{%
Parameter synchronization frequency ($K_x$) strongly impacts performance, motivated by the leading term in theoretical bounds (\cref{sec:theory}). Momentum synchronization periods matter empirically only when chosen near their half-lives, consistent with~\cref{sec:theory,sec:methods}.
}

\subsection{\method Brings $2\times$ Communication Reductions Relative to \localadam ~(RQ3)}\label{eval:subsec:baseline_comparison}

\Cref{fig:eval:baseline_comparison} shows \method achieves a $2\times$ communication reduction over the prior state-of-the-art \localadam~\citep{LocalAdam} without significant perplexity degradation, even
when adding workers.
Synchronizing parameters at $K_x=K$ (matching \localadam) and momenta at $K_u=3K_x$, $K_v=6K_x$ consistently yields minimal degradation, aligning with the slower evolution and lower sensitivity of second-momentum sync frequency~(\cref{fig:eval:independent_sync_frequencies}). Other low communication configurations are in \cref{app:additional_results}.

\begin{figure}[H]
    \centering
    \noindent\subfloat[\methodadopt vs baselines $K_x=16$ IID]
    {\includegraphics[width=0.485\columnwidth]{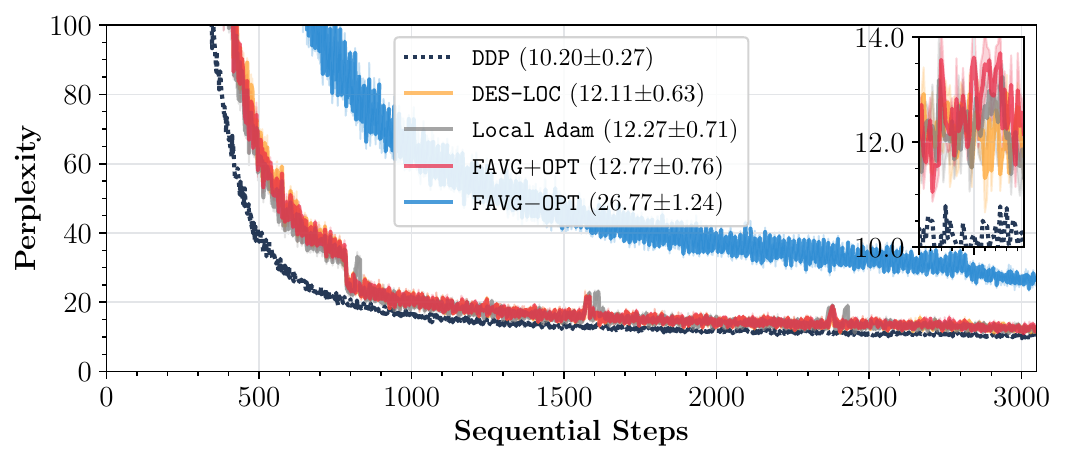}}  \hfill
    \noindent\subfloat[\methodadopt vs baselines $K_x=256$ IID]
    {\includegraphics[width=0.485\columnwidth]{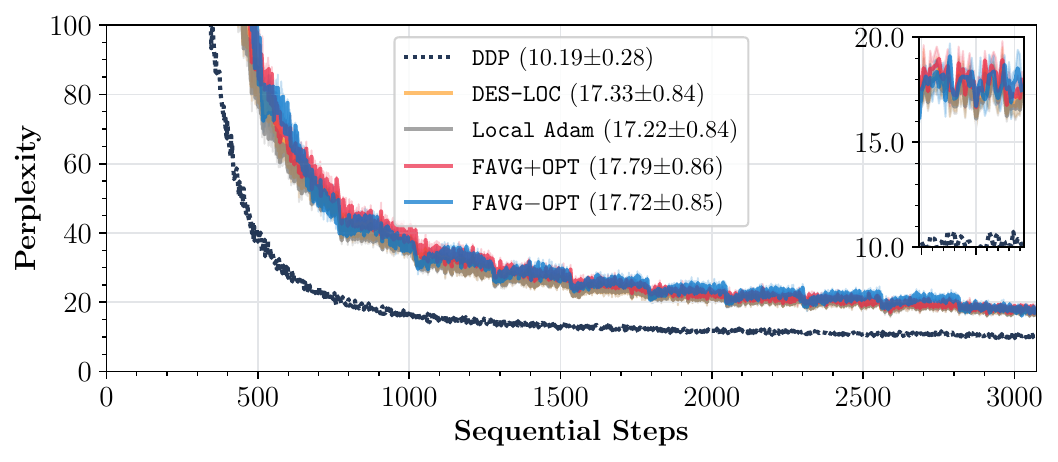}} \hfill
    \noindent\subfloat[Perplexity impact of doubling workers, $K_x=128$.]
    {\includegraphics[width=0.485\columnwidth]{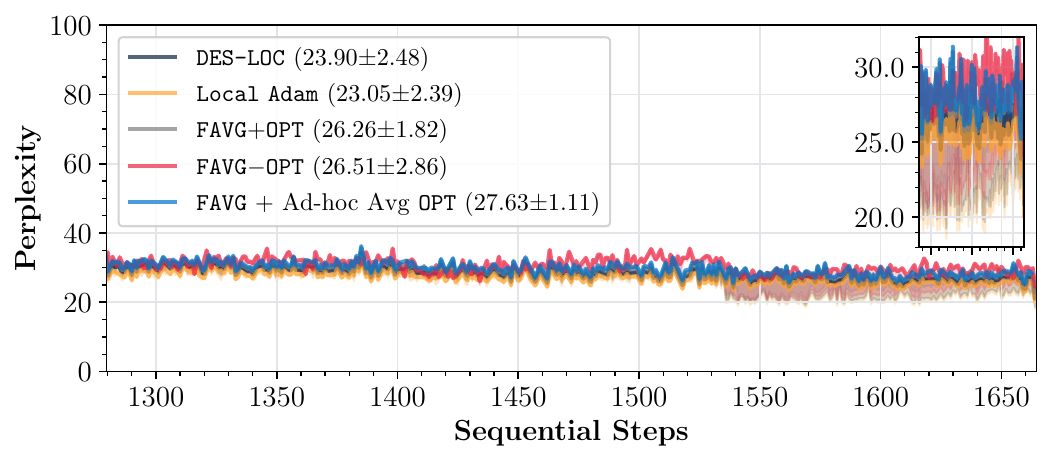}}  \hfill
    \noindent\subfloat[Gradient norms after doubling workers, $K_x=128$.]
    {\includegraphics[width=0.485\columnwidth]{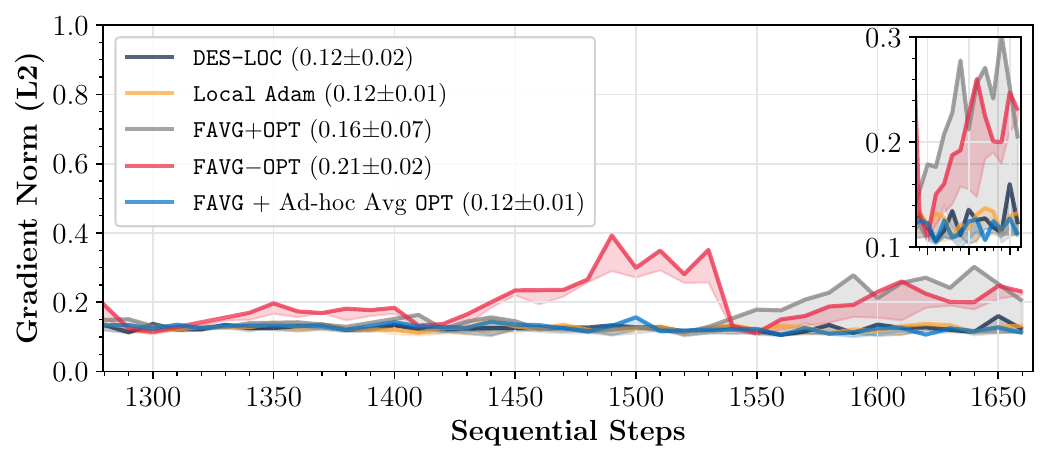}} 
    
    \caption{
    Setting $K_x=K$, $K_u=3K_x$, and $K_v=6K_x$, \method achieves a $\mathbf{2}\times$ communication reduction over \localadam, matching performance at high~(a) and low~(b) frequencies for \localadam and heuristic baselines (see \cref{sec:exp_setup}). We demonstrate robustness to 
    the addition of new workers
    by doubling worker count at step $1536$~(c,d); \method and \localadam remain stable in perplexity/gradient norms, outperforming heuristic methods and ad-hoc optimizer-state averaging.
    }
    \label{fig:eval:baseline_comparison}

\end{figure}

\takeawaybox[Takeaway:]{%
\method achieves a $2\times$ communication reduction over \localadam by leveraging two insights: optimizer-state sync matters less than parameter sync, and slower-changing states (high $\beta_2$) can sync less often. By eventually syncing all optimizer states, \method matches the robustness of \localadam with $K=\max(K_x,K_u,K_v)$ when adding new workers/responding to system failures.
}

\subsection{\method Performs Well At Large-scale Long Horizon Training~(RQ4)}\label{eval:subsec:larger_models}

\Cref{fig:eval:large_models} shows that \method reliably scales to billion-scale models and extensive training workloads. Evaluating the billion-scale models on the \texttt{ICL} tasks (\cref{tab:icl-table}), \method remains competitive with all baselines while significantly reducing communication versus \localadam and \ddp. The heuristic baseline~\citep{Photon} suffers notable training instabilities (\cref{fig:eval:large_models}.b) potentially impacting its downstream performance (\cref{tab:icl-table}) and underscoring the advantage of \method's training stability.
\begin{figure}[H]
    \centering
    \noindent\subfloat[\methodadopt $1$B-model perplexity]
    {\includegraphics[width=0.485\columnwidth]{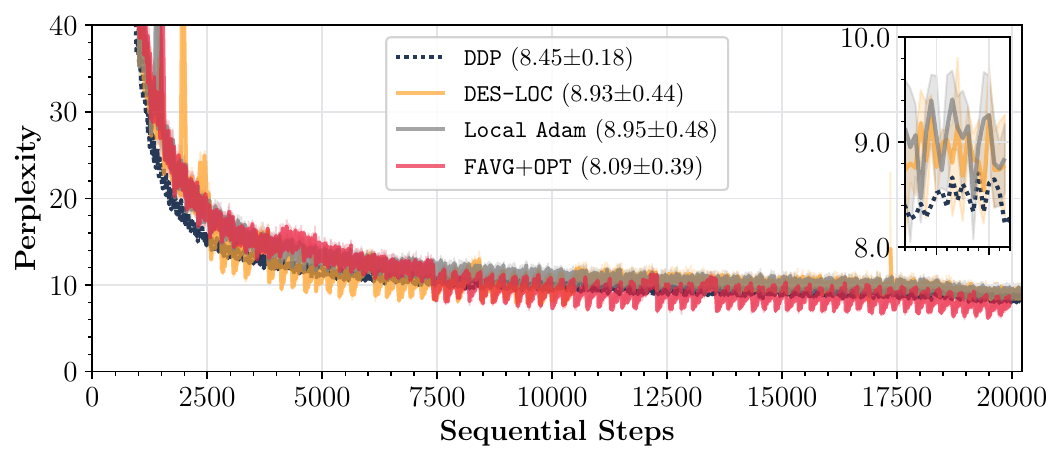}}  \hfill
    \noindent\subfloat[\methodadopt $1$B-model activations]
    {\includegraphics[width=0.485\columnwidth]{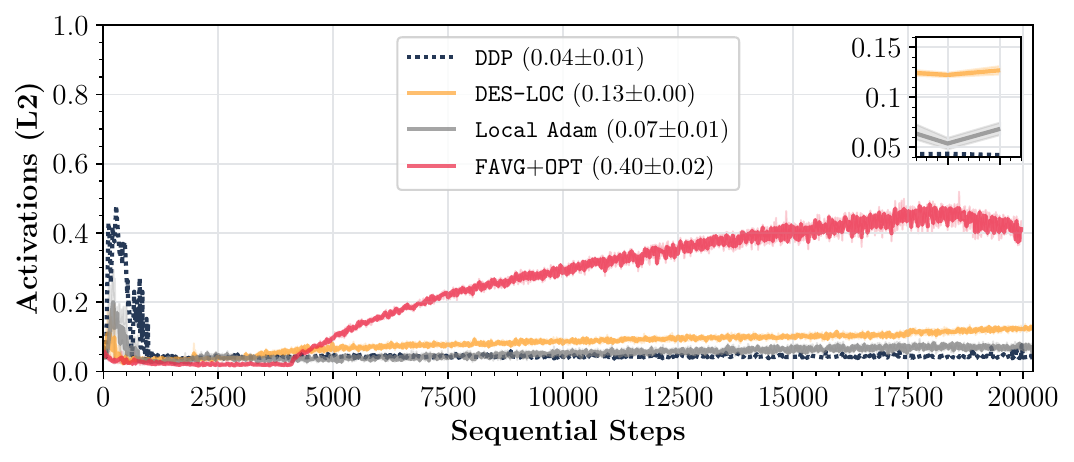}} \hfill
    
    \caption{
   \method matches \localadam perplexity for billion-scale model training at half the communication cost ($K_x=256,K_u=3K_x,K_v=6K_x$), representing a $\mathbf{170}\times$ reduction over \ddp. Though initially behind \ddp, both \method and \localadam quickly converge to competitive perplexity at \textbf{longer training horizons}. Federated Averaging (keeping optimizer states) achieves reasonable performance~(a) but suffers activation growth~(b) and parameter-norm growth~(\cref{app:additional_results}), potentially due to noisy local updates, raising concerns for extended training ($\geq11$ trillion tokens~\citep{SmolLM2}).
    }
    \label{fig:eval:large_models}

\end{figure}

\takeawaybox[Takeaway:]{%
\method enables efficient training of large-scale foundation models, especially at long training horizons, achieving downstream \texttt{ICL} performance competitive with \ddp.
}

\begin{table}
\caption{Our billion-scale model trained with \method matches or surpasses the In-context Learning (\texttt{ICL}) performance of models trained with \localadam and Federated Averaging (keeping local optimizer states), approaching \ddp performance. Federated Averaging (keeping local optimizer states) slightly underperforms compared to its perplexity results from \cref{fig:eval:large_models}.a, indicating that the activation increases~(\cref{fig:eval:large_models}.b) from the unstable training procedure may have damaged the model.}
\label{tab:icl-table}
\centering
\begin{tabular}{@{}lccccc@{}}
\toprule
\textbf{Method} &
  \textbf{Arc Challenge}~\citep{arc_challenge} &
  \textbf{Arc Easy}~\citep{arc_challenge} &
  \textbf{PIQA}~\citep{piqa} &
  \textbf{HellaSwag}~\citep{hellaswag} &
  \textbf{Avg} \\ \midrule
\textbf{\method}                    & $31.8$          & \textcolor{blue}{$59.0$} & \textcolor{blue}{$70.7$} & $44.9$          & $51.6$          \\
\textbf{\localadam} &
  \textcolor{blue}{$31.9$} &
  \textcolor{blue}{$59.0$} &
  $70.6$ &
  \textcolor{blue}{$45.8$} &
  \textcolor{blue}{$51.8$} \\
\textbf{\texttt{FAVG}+\texttt{OPT}} & $30.1$          & $58.0$                            & $70.0$                            & $44.8$          & $50.7$          \\ \midrule
\textbf{\ddp}                       & $\mathbf{33.8}$ & $\mathbf{62.5}$                   & $\mathbf{71.1}$                   & $\mathbf{47.8}$ & $\mathbf{53.8}$ \\ \bottomrule
\end{tabular}
\end{table}

    

\section{Related  Work and Limitations}\label{sec:background}

\textbf{Communication bottlenecks for \ddp.} In synchronous data-parallel training, workers exchange full gradients or parameters \emph{every} iteration, incurring linear communication costs using \texttt{Ring-AllReduce}~\citep{Horovod}. When hardware is weakly connected or widely distributed, communication significantly slows wall-clock training time~\citep{Photon} as workers need to wait for synchronization to finish.

\textbf{Periodic local updates.} \textit{Federated Averaging} (\texttt{FedAvg})~\citep{fedavg} and \texttt{Local SGD}~\citep{LocalSGD} reduce communication by performing $K$ local optimization steps before averaging parameters, decreasing communication rounds by a factor of $K$. Although provably convergent for distributed \texttt{SGD}, these guarantees do not extend to adaptive optimizers commonly used for foundation models, due to local optimizer states. Ad-hoc solutions either keep optimizer states local~\citep{DiLoCo,DiLoCoScalingLaws,AsyncDiLoCo} or reset them after each sync~\citep{LLMFL,Photon}, both lacking robust convergence guarantees, unlike \texttt{Local SGD}.

\textbf{Local stateful optimizers under communication constraints.} \adam~\citep{Adam} is widely adopted for pre-training because it scales to larger batches than \texttt{SGD}~\citep{NoiseIsNotTheMainFactorSGDAdam}, suiting large GPU clusters~\citep{llama3}. It approximates the gradient's sign~\citep{DissectingAdam} using exponential moving averages of gradients and their squares; however, its convergence is not guaranteed as it requires $\beta_1<\sqrt{\beta_2}<1$, with large, problem-specific $\beta_2$~\citep{OnTheConvergenceOfAdamAndBeyond,AdamCanConvergeWithoutAnyModificationToUpdateRules}. Other momentum-based and adaptive optimizers~\citep{NesterovIlya,LION,LAMB,ADOPT} also track gradient moments. \emph{Local Adam}~\citep{LocalAdam} reduces communication by allowing multiple local optimization steps before global averaging and converges faster than \ddp per communication round, provided each worker syncs parameters and optimizer states. However, synchronizing these states triples communication relative to \texttt{Local SGD}/\ddp, offsetting the reduced frequency. In general, sync costs scale linearly with the number of optimizer states.

\textbf{Limitations.} First, while our main non-convex convergence result holds for \texttt{SGDM}, in the case of \adam we discuss the analyses (both in expectation and high-probability results) with additional assumptions such as bounded gradient condition and homogeneous data distribution. Nevertheless, these assumptions are commonly used in the non-convex adaptive optimization. Second, due to compute limitations (two machines with 4$\times$\texttt{A40}s and two with 4$\times$\texttt{H100}s), our hyperparameter search was extensive yet constrained to smaller models. Lastly, while our analysis uses \adam/AMSGrad, many experiments use modified \adam~(\adopt)~\citep{ADOPT}.


\section{Conclusion}

\method reconciles communication efficiency with rigorous convergence guarantees in distributed adaptive optimization. By extending theory to the independent synchronization of \adam and \texttt{SGDM} optimizer states, we empirically demonstrate convergence alongside $\mathbf{170\times}$ and $\mathbf{2\times}$ communication reductions over \ddp and prior state-of-the-art methods at billion-scale LLM training, even in 
environments prone to system failures.
Our findings yield clear guidelines: i) \textbf{frequently} synchronize parameters, and ii) synchronize optimizer states \textbf{less often}, proportional to their half-lives. These insights open avenues for future research, including layer-wise synchronization, adaptive frequencies, compressed updates, as well as emerging applications, such as worldwide cross-data center training and collaborative training. As training workloads scale, we envision \method becoming the standard for efficient, resilient foundation-model training in data centers and general distributed environments.

\subsubsection*{Acknowledgments}
All costs for the computational resources used for this work were funded by Flower Labs, and the research conducted by a team of researchers from Flower Labs, The University of Cambridge, The Institute of Science and Technology of Austria, The University of Warwick, and Mohamed bin Zayed University of Artificial Intelligence. Support for university-based researchers came from a variety of sources, but in particular, the following funding organizations are acknowledged: the European Research Council (REDIAL), the Royal Academy of Engineering (DANTE), the Ministry of Education of Romania through the Credit and Scholarship Agency, and the European Union’s Horizon 2020 research and innovation programme under the Marie Skłodowska-Curie grant agreement No 101034413.

{
\bibliographystyle{abbrvnat}
\bibliography{neurips}
}

\clearpage
\newpage

\appendix
\setcounter{parttocdepth}{2}
\mtcsettitle{parttoc}{A Table of Contents}
\addtocounter{section}{1}

\addcontentsline{toc}{section}{Appendix} %

\renewcommand \thepart{} %
\renewcommand \partname{}
\part{\Large{\centerline{Appendix}}}

\parttoc

\clearpage

\section{Experimental Details and Optimizer Hyperparameter Sweeps~(See \cref{sec:exp_setup})}~\label{app:exp_details}

Here we provide additional experimental details complementing those in \cref{sec:exp_setup}, including: a) model architecture details and hyperparameters independent of optimizer choice (\cref{app:subsec:architecture_details}), b) our hyperparameter sweep procedure to select optimizer-specific settings (\cref{app:subsec:hyperparameter_sweeping_procedure}), and c) the optimal hyperparameters with those used in \cref{sec:evaluation} highlighted in bold. 

\subsection{Architecture Details and Hyperparameters}\label{app:subsec:architecture_details}

\begin{table}[H]
\caption{Model architecture and training parameters. We denote the number of transformer blocks by \#Blocks, number of attention heads by \#Heads, embedding dimension by $d_\mathrm{model}$, vocabulary size by $|\mathcal{V}|$, and feedforward-layer expansion by Exp.~Ratio. All models use positional embeddings~\citep{RopeEmbeddings}, the \texttt{silu} activation function, and norm-based gradient clipping with clip-bound $\rho$. Global batch size (summed across all workers) is $|\mathcal{B}_{\mathrm{G}}|$, and sequence length is standard for models at these scales. For model initialization we use $\sigma=\nicefrac{1}{\sqrt{d_{\mathrm{model}}}}$. The total number of steps is denoted by $T$.}\label{tab:model_architectures} 
\centering
\resizebox{\textwidth}{!}{%
\begin{tabular}{@{}lcccccccccccc@{}}
\toprule
\textbf{Model Size} &
  \textbf{Blocks} &
  \textbf{$\boldsymbol{d_{\mathrm{model}}}$} &
  \textbf{$\mathbf{|\mathcal{V}|}$} &
  \textbf{\#Heads} &
  \textbf{Exp.~Ratio} &
  \textbf{ROPE $\theta$} &
  \textbf{ACT} &
  \textbf{Init $\sigma$} &
  \textbf{$\rho$} &
  \textbf{Seq Len} &
  \textbf{$\mathbf{|\mathcal{B}_{\mathrm{G}}|}$} &
  \textbf{$\mathbf{T}$} \\ \midrule
$135$M &
  $30$ &
  $576$ &
  $50$K &
  $9$ &
  $4$ &
  $10000$ &
  \texttt{silu} &
  $0.04$ &
  $1.0$ &
  $2048$ &
  $1024$ &
  \textbf{$1536,3072$} \\
$1.7$B &
  $24$ &
  $2048$ &
  $50$K &
  $16$ &
  $4$ &
  $10000$ &
  \texttt{silu} &
  $0.02$ &
  $1.0$ &
  $2048$ &
  $1024$ &
  \textbf{$20480$} \\ \bottomrule
\end{tabular}%
}
\end{table}

\Cref{tab:model_architectures} summarizes the architectural details of our models, following established practices for large language models at their respective scales. Unless otherwise stated, we adopt the hyperparameters recommended by \citet{SmolLM2} for both the $135$M and the $1.7$B models. We operate at a batch size of $2$M tokens, which is very large for the $135$M model at the length of training we perform~\citep{HowDoesBatchSizeScaleInPreTraining} and industry-standard for the $1.7$B model~\citep{llama2}, we chose to operate at large batch sizes because adaptive optimizers provide benefits primarily in large-batch training regimes~\citep{NoiseIsNotTheMainFactorSGDAdam}. Moreover, we intend \method for use in cross data-center scenarios, where effectively utilizing available accelerators naturally demands large batch sizes and/or model scales. For both model sizes, we train for approximately $2\times$ the compute-optimal token budget~\citep{TrainingComputeOptimalLLMs}, placing our evaluations within the context of extended-duration foundation model training~\citep{SmolLM2}. Our chosen token budget is conservative due to resource constraints; for comparison, \citet{SmolLM2} used $11$ trillion tokens which is over $4000\times$ compute-optimal for the $135$M model, and $300\times$ for the $1.7$B.

We select warmup and decay schedules following recommendations from \citet{HowDoesBatchSizeScaleInPreTraining,BeyondFixedTrainingDuration,SmolLM2}. For the $135$M model, the warmup period is set to $T_{\mathrm{WARM}} = 512$ steps, corresponding to the roughly $40\%$ of the compute-optimal training tokens recommended by \citet{HowDoesBatchSizeScaleInPreTraining}. For the $1.7$B model, we use the recommended $T_{\mathrm{WARM}} = 2048$ steps from \citet{SmolLM2}, roughly $10\%$ of total training. The stable-decay period uses a $1-\mathrm{SQRT}$ schedule over the final $T_{\mathrm{DECAY}}=10\%\times T$ steps~\citep{BeyondFixedTrainingDuration}. For shorter runs, such as $T=1536$ during heterogeneous-data evaluations, we keep the warmup fixed and proportionally scale the decay to ensure well-conditioned parameter updates during the stable learning rate period. The seeds we use for data sampling and for controlling the training algorithms and model are provided in the code accompanying the appendix.

\subsection{Optimizer Parameters Sweeping Procedure}\label{app:subsec:hyperparameter_sweeping_procedure}

As detailed in \cref{sec:methods} and verified empirically in \cref{eval:subsec:relative_importance}, the choice of decay rates $\beta_1,\beta_2$ strongly influences the effective synchronization frequencies achievable by both \method and \localadam. This relationship arises directly from the half-life of optimizer states, given by $\tau_{0.5} = \frac{\ln(0.5)}{\ln(\beta)}$.

For \adam, prior studies such as \citet{StableLowPrecisionTrainingLLMLVM} have demonstrated a critical interplay between the learning rate ($\eta$), batch size, and the second-momentum decay $\beta_2$. Specifically, increasing either the learning rate or batch size typically demands a lower $\beta_2$ to maintain training stability and avoid loss spikes. Conversely, higher $\beta_2$ values constrain the learning rate and batch size. Such dynamics have also been recently observed between the learning rate and the first-momentum decay $\beta_1$ in \citet{AdemaMix}. Given that all our experiments use a fixed large batch size of roughly $2$ million tokens (appropriate for billion-scale training), we systematically tune the learning rate $\eta$ in response to changes in $\beta_1,\beta_2$. We try values of $\beta_1,\beta_2$ based on previous works~\citep{HowDoesBatchSizeScaleInPreTraining} and follow the theoretical convergence requirement of \citet{AdamCanConvergeWithoutAnyModificationToUpdateRules} setting $\beta_1 \leq \sqrt{\beta_2}$.

Due to computational constraints, we cannot jointly optimize synchronization periods, data distributions, and decay parameters, and instead adopt a structured two-stage tuning approach:

\begin{enumerate}
    \item \textbf{Stage 1: Tuning $\eta$ for \ddp}. Starting from the recommended baseline learning rate ($\eta_0$) from \citet{SmolLM2}, we conduct a grid search as outlined by \citet{DiLoCoScalingLaws}:
    $
        \{\dots,\sqrt{2}^{-2}\eta_0, \sqrt{2}^{-1}\eta_0, \eta_0, \sqrt{2}\eta_0, \sqrt{2}^{2}\eta_0,\dots\}
    $ 
    We expand this search until perplexity stops improving, identifying an optimal learning rate $\eta^\ast_{\ddp}$ for each $(\beta_1,\beta_2)$ configuration.
    
    \item \textbf{Stage 2: Tuning $\eta$ for \localadam}. We then repeat this procedure for \localadam, using $\eta^\ast_{\ddp}$ as the new baseline. To balance generalizability and computational cost, we set the synchronization period to an intermediate value of $K=64$, between high-frequency ($K=16$) and low-frequency ($K=256$) scenarios.
\end{enumerate}

Additionally, following \citet{HowDoesBatchSizeScaleInPreTraining}, we omit weight decay (set to zero) to simplify the hyperparameter tuning process, as it directly affects only model parameters, not optimizer states.

\subsubsection{Optimizers' Hyperparameter Configurations}

\begin{table}[H]
\caption{Optimal learning rates $\eta^\ast$ for $\beta_1,\beta_2$ configurations of \adopt/\adam. The hyperparameter sweep procedure (see \cref{app:subsec:hyperparameter_sweeping_procedure}) involves incrementally adjusting the learning rate by factors of $\sqrt{2}$ around the initial value from \citet{SmolLM2} until performance stops improving. 
}
\label{tab:optimizer_optimal_values}
\centering
\begin{tabular}{lccc}
\hline
\textbf{Optimizer} & \textbf{$\mathbf{\beta_1}$} & \textbf{$\mathbf{\beta_2}$} & \textbf{$\eta^\ast$} \\ \hline
\multirow{4}{*}{\adopt} & $0.9$           & $0.9999$          & $0.0021$          \\
                        & $\mathbf{0.95}$ & $\mathbf{0.9999}$ & $\mathbf{0.0021}$ \\
                        & $0.99$          & $0.9999$          & $0.0014$          \\
                        & $0.995$         & $0.9999$          & $0.0007$          \\ \hline
\multirow{5}{*}{\adam}  & $0.9$           & $0.95$            & $0.0042$          \\
                        & $\mathbf{0.95}$ & $\mathbf{0.95}$   & $\mathbf{0.003}$           \\
                        & $0.9$           & $0.99$            & $0.003$           \\
                        & $0.95$          & $0.99$            & $0.003$           \\
                        & $0.99$          & $0.99$            & $0.0021$          \\ \hline
\end{tabular}
\end{table}

Our hyperparameter sweep (\cref{tab:optimizer_optimal_values}) indicates that the optimal learning rate $\eta^\ast$ under the warmup-stable-decay scheduler~\citep{BeyondFixedTrainingDuration} strongly depends on both optimizer type and the chosen $\beta_1,\beta_2$ values. For \adam, optimal learning rates and second-momentum decay ($\beta_2$) align closely with recommendations from \citet{SmolLM2}, though a slightly higher first-momentum decay ($\beta_1$) consistently performs better, in agreement with prior findings~\citep{HowDoesBatchSizeScaleInPreTraining}. For \adopt (default $\beta_2$), we observe a lower optimal learning rate compared to \adam, but similar best-performing $\beta_1$ values. We also find that the optimal learning rate does not differ between \ddp and \localadam for given $\beta_1,\beta_2$ when $K=64$ and using a $\sqrt{2}$ sweep, higher learning rates either do not provide a benefit or diverge while lower learning rates are only necessary when pushing $K$ far closer to the complete training duration.

We find that increasing $\beta_1$ for \adopt, and $\beta_1,\beta_2$ for \adam, leads to rapid performance degradation, particularly at or above $0.99$. Since the half-life at $\beta=0.99$ ($\tau_{0.5}\approx69$) is not sufficiently longer than at $\beta=0.95$ ($\tau_{0.5}\approx13.5$) to justify the observed performance drop, we select $\beta_1=0.95$ for all experiments, along with the default $\beta_2$ for \adopt and $\beta_2 = 0.95$ for \adam.

\takeawaybox[Takeaway:]{%
Increasing an optimizer state's $\beta$ significantly affects performance. Since linear increases in $\beta$ cause only logarithmic changes in half-life $\tau_{0.5}$, raising $\beta$ beyond the optimal value degrades performance without substantially improving the achievable synchronization frequency~(\cref{eval:subsec:relative_importance}).
}

\section{Complementary Results to \cref{subsec:method_algorithm,sec:evaluation} }\label{app:additional_results}

We now provide additional results supplementing those presented in the main text. Specifically:

\begin{enumerate}
    \item \textbf{\cref{app:subsec:toy_problem}} complements \cref{fig:toy_distances_iid} by including results on the heterogeneous data distribution described in \cref{sec:exp_setup}. This highlights \method's robustness under imperfect sampling or strongly \texttt{Non-IID} federated scenarios~\citep[see][Sec 3.1]{AdancesAndOpenProblems}.

    \item \textbf{\cref{app:subsubsec:independent_sync_params_second_momentum}} complements \cref{fig:eval:independent_sync_frequencies} by showing the separate impact of varying synchronization frequencies for parameters and the second momentum when the base frequency is $K_b=16$. It supports our claim that parameters and second momentum exhibit similar behavior across different synchronization regimes, unlike the first momentum.

    \item \textbf{\cref{app:subsubsec:independent_sync_adam_results}} extends \cref{fig:eval:independent_sync_frequencies} by evaluating \methodadam. We confirm that the parameter synchronization frequency is the most important, as predicted by our theory. In contrast, the momenta sync frequency is far less impactful, especially for low parameter sync frequencies.

    \item \textbf{\cref{app:subsubsec:comm_reduction_het_data}} complements \cref{fig:eval:baseline_comparison} by showing \methodadopt's perplexity against baseline methods on heterogeneous data (as defined in \cref{sec:exp_setup}). This validates our claim from Contribution 2 regarding \method's effectiveness on heterogeneous datasets.

    \item \textbf{\cref{app:subsubsec:low_comms_configurations_ablation}} presents an ablation study examining alternative low-communication configurations of \method, justifying our choice of $K_u=3K_x, K_v=6K_x$ used in \cref{fig:eval:baseline_comparison}.

    \item \textbf{\cref{app:subsubsec:baselines_adam_results}} repeats the baseline comparison from \cref{fig:eval:baseline_comparison} for \methodadam, demonstrating that \method achieves similar communication reductions and performance when using \adam instead of \adopt.

    \item \textbf{\cref{app:subsec:additional_metrics_favg_opt}} provides additional metrics illustrating training instabilities for the \texttt{FAVG}+\texttt{OPT} baseline, including rapidly growing parameter norms, supporting observations in \cref{fig:eval:large_models}.b.

\end{enumerate}

\subsection{Toy Problem on \texttt{Non-IID} Data~(See  \cref{fig:toy_distances_iid})}\label{app:subsec:toy_problem}

\takeawaybox[Toy Example \texttt{Non-IID}:]{
\cref{fig:toy_example_niid} simulates the scenario from \cref{sec:theory}, where each worker $m$ optimizes a distinct loss $f_m$ on heterogeneous data. Both \method and \localadam show more stable convergence and get closer to the optimum than heuristic baselines.
}

\begin{figure}[H]
    \centering
    \noindent\subfloat[Distance to the optimum.]
        {\includegraphics[height=4.25cm, width=0.485\columnwidth]{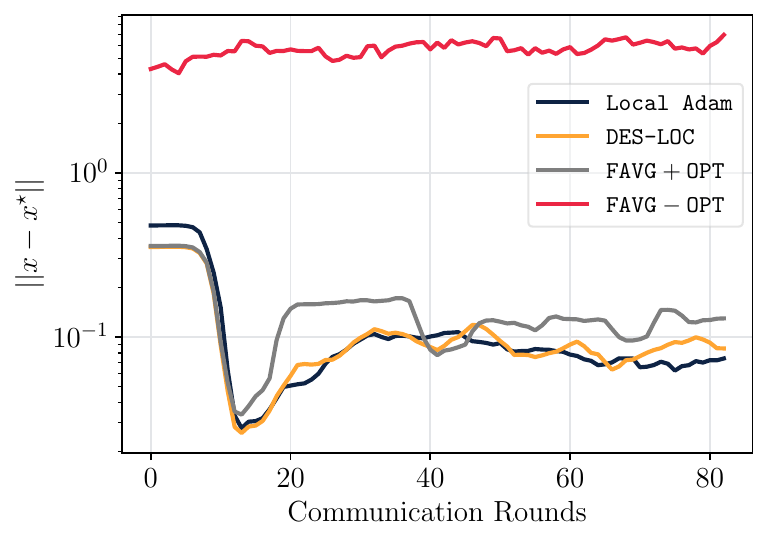}\label{fig:toy_distances_niid}}  \hfill
    \noindent\subfloat[2-D contour.]
    {\includegraphics[height=4.25cm, width=0.485\columnwidth]{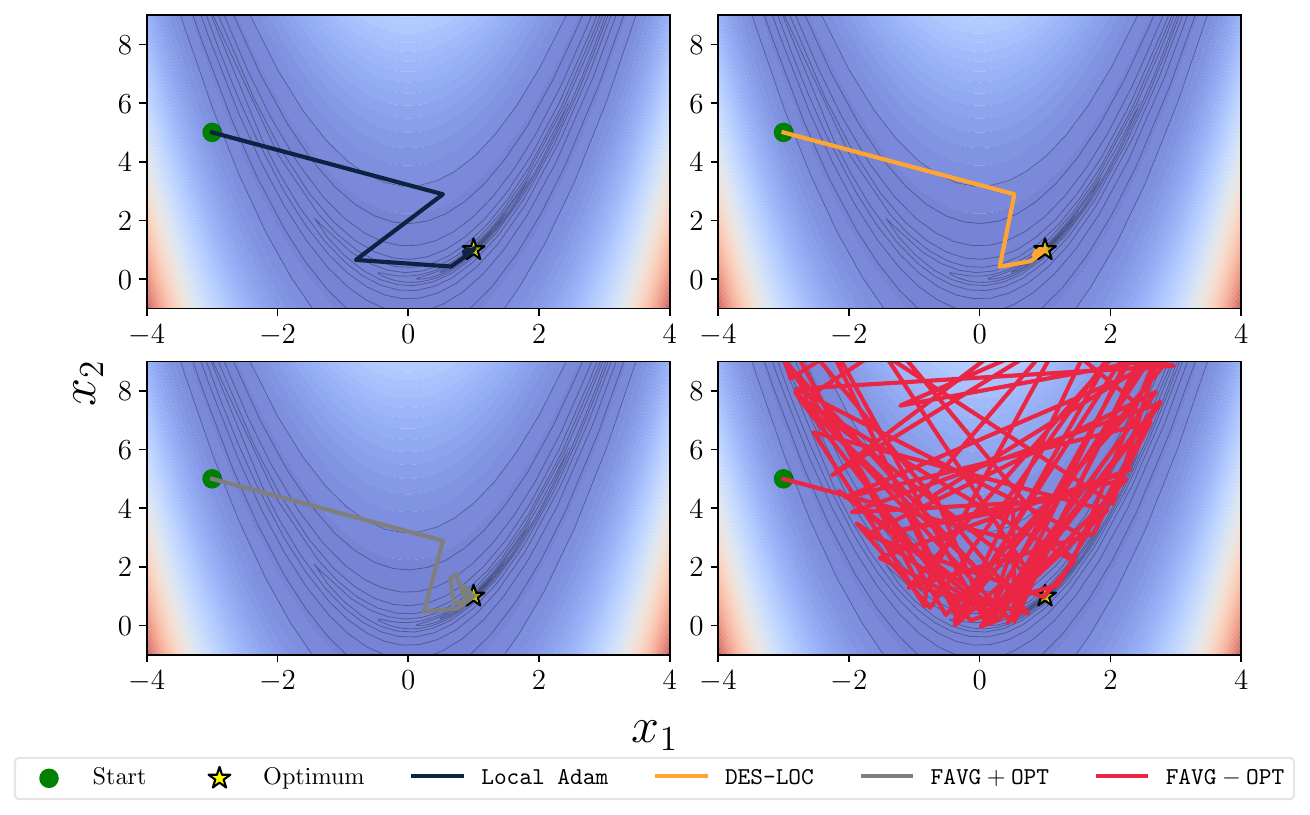}} \hfill
    \caption{We present a toy problem in a \texttt{Non-IID} setting, where \method (with synchronization periods $K_x=192, K_u=192, K_v=692$) and Local Adam (with $K=K_x$) converge to a superior solution compared to methods that keep optimizer states local \colorbox{grey_plot}{}~\citep{DiLoCo,Photon} or periodically reset them \colorbox{red_plot}{}~\citep{LLMFL,DEPT}. Like the \texttt{IID} scenario, resetting optimizer states prevents convergence due to repeated oscillations caused by reinitializations. Additionally, as seen in panel (a) between rounds $15$ and $40$, methods keeping optimizer states local suffer from larger oscillations further away from the optimum. The function optimized is $f(x_1,x_2)=(1-x_1)^2+100(x_2 - x_1^2)^2$, and we simulate $M=256$ workers, each adding Gaussian noise with worker-specific standard deviation $\sigma^m \sim \mathcal{N}(0,3)$.}
    \label{fig:toy_example_niid}
    \vspace{-0.2cm} 
\end{figure}

\subsection{\textbf{RQ2:} Independent Sync Frequencies}

This section provides supplementary results for \textbf{RQ2}, complementing \cref{eval:subsec:relative_importance}. \Cref{app:subsubsec:independent_sync_params_second_momentum} shows that perplexity has similar sensitivity to the first and second momentum synchronization frequencies at both high and low base synchronization frequencies. Additionally, \cref{app:subsubsec:independent_sync_adam_results} repeats the comparison from \cref{fig:eval:independent_sync_frequencies} for \methodadam, revealing similar trends regarding the importance of the parameters, with a reduced importance for the momenta due to lower $\beta_2$.

\subsubsection{Parameter and Second Momentum At $K_b=16$~(See \cref{fig:eval:independent_sync_frequencies}.a,\cref{fig:eval:independent_sync_frequencies}.b)}\label{app:subsubsec:independent_sync_params_second_momentum}
\Cref{app:fig:eval:independent_sync_frequencies_params_second_momentum} examines the effects of independently varying synchronization periods ($K_x,K_v$) for parameters and second momentum under \methodadopt in the high-frequency regime ($K_b=16$), chosen based on the first momentum's half-life ($\tau_{0.5}\approx13.5$). Similar to the low-frequency results in \cref{fig:eval:independent_sync_frequencies}.a, parameter synchronization frequency~($K_x$) strongly influences perplexity, while the second momentum~($K_v$) has minimal impact due to its long half-life. This contrasts with the first momentum, whose half-life closely matches the high-frequency period.

\begin{figure}[H]
    \centering
    \noindent\subfloat[\method vary $K_x$ , fixed $K_u=K_v=16$]
    {\includegraphics[width=0.485\columnwidth]{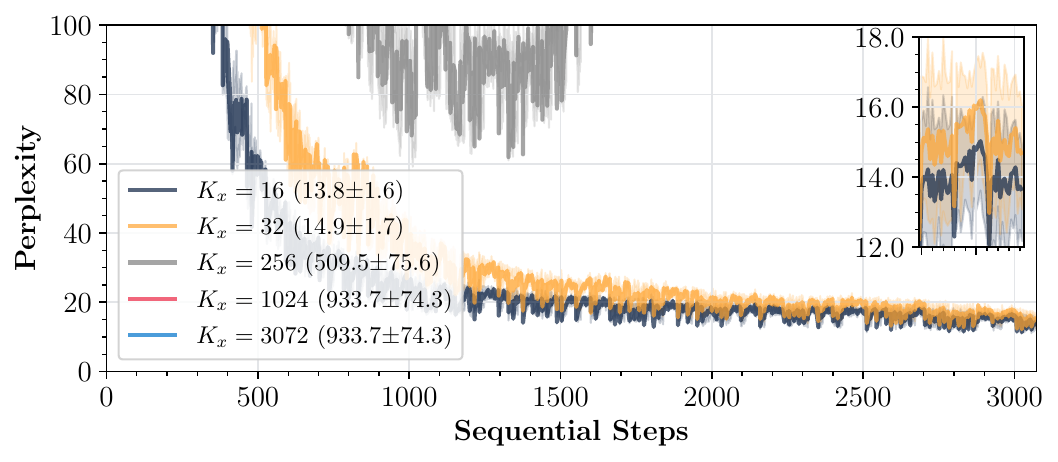}} \hfill
    \noindent\subfloat[\method vary $K_v$ , fixed $K_x=K_u=16$]
    {\includegraphics[width=0.485\columnwidth]{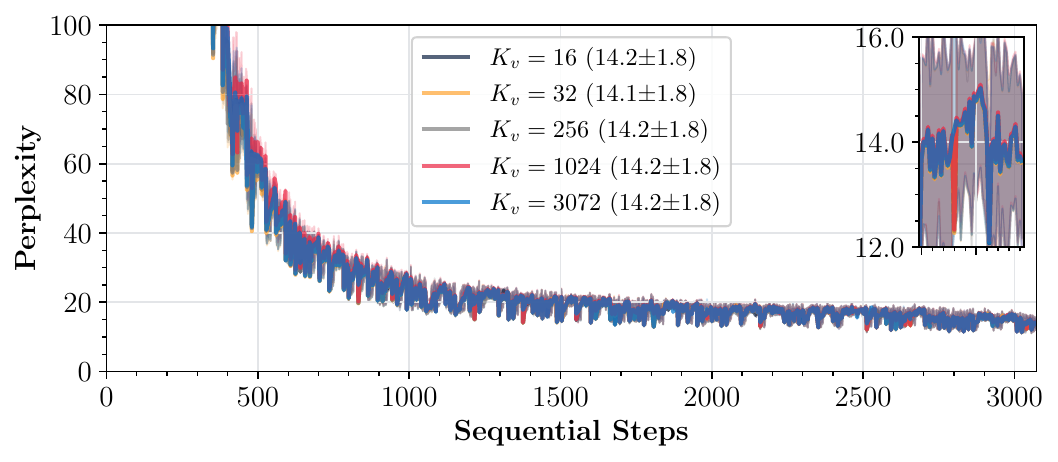}} 

    \caption{Model perplexity for \method (\adopt, $\beta_1=0.95,\beta_2=0.9999$), independently varying synchronization periods at a high baseline frequency ($K_b=16$). Similar to \cref{fig:eval:independent_sync_frequencies}, parameter synchronization~(a) is critical, with performance sharply degrading at higher periods, while second-momentum synchronization~(b) has minimal impact due to its large half-life ($\tau_{0.5}(\beta_2)\gg K_b$).
    }
    \label{app:fig:eval:independent_sync_frequencies_params_second_momentum}
\end{figure}

\takeawaybox[Takeaway:]{%
In high-frequency synchronization regimes, the importance of parameters and the second momentum remains similar to the low-frequency regime shown in \cref{eval:subsec:relative_importance}, 
}

\subsubsection{\adam Results~(See \cref{fig:eval:independent_sync_frequencies})}\label{app:subsubsec:independent_sync_adam_results}

\Cref{app:fig:eval:independent_sync_frequencies_adam} provides complementary results to \cref{fig:eval:independent_sync_frequencies,app:fig:eval:independent_sync_frequencies_params_second_momentum} using \methodadam with $\beta_1=\beta_2=0.95$. Unlike \adopt, the relatively low $\beta$ result in both the first and second momentum quickly adapting to the local gradients, reducing the impact of their sync frequency. 

\begin{figure}
    \centering
    \noindent\subfloat[\method vary $K_x$ , fixed $K_u=K_v=16$]
    {\includegraphics[width=0.485\columnwidth]{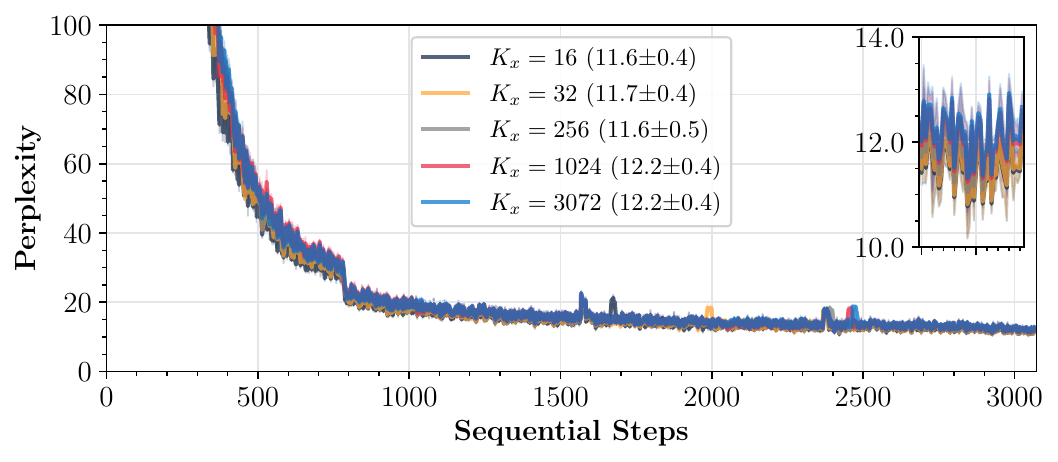}}  \hfill
    \noindent\subfloat[\method vary $K_x$ , fixed $K_u=K_v=256$]
    {\includegraphics[width=0.485\columnwidth]{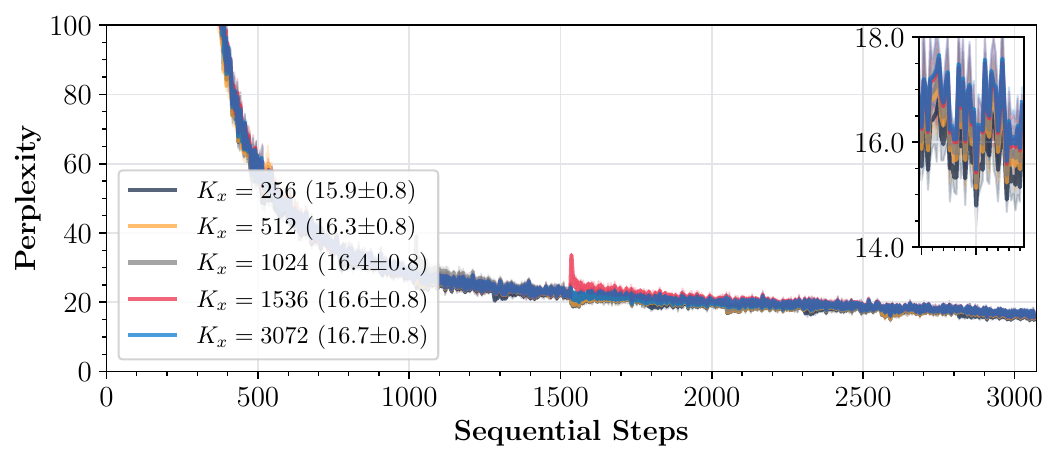}} 
    \hfill
    \noindent\subfloat[\method vary $K_u$, fixed $K_x=K_v=16$]
    {\includegraphics[width=0.485\columnwidth]{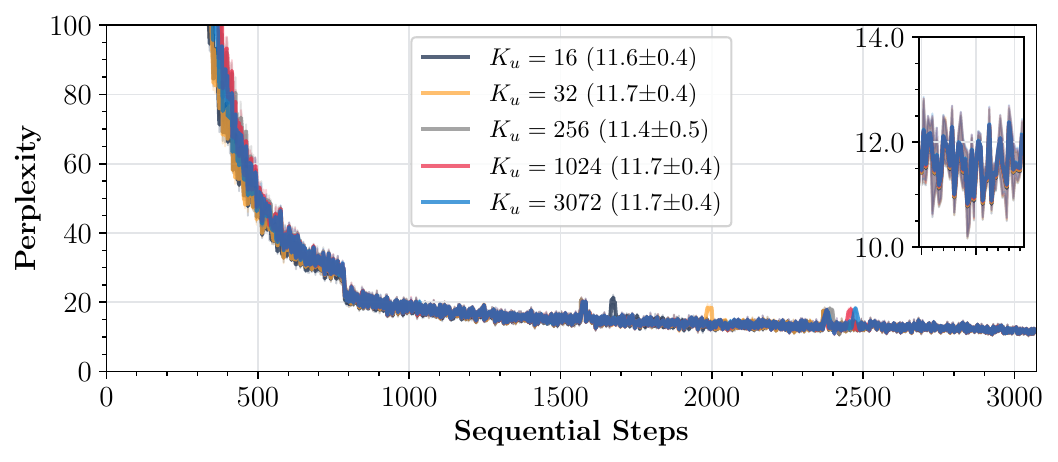}}  \hfill
    \noindent\subfloat[\method vary $K_u$, fixed $K_x=K_v=256$]
    {\includegraphics[width=0.485\columnwidth]{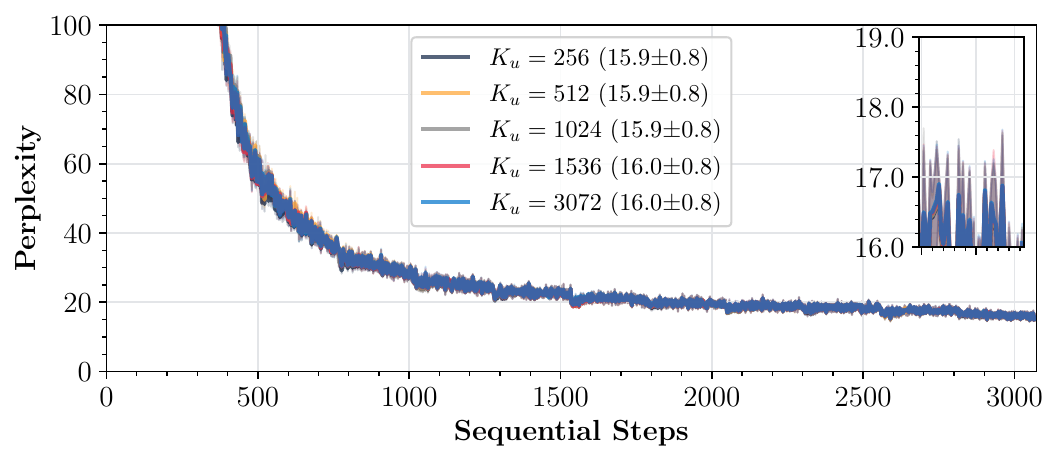}} \hfill
    
    \noindent\subfloat[\method vary $K_v$, fixed $K_x=K_u=16$]{\includegraphics[width=0.485\columnwidth]{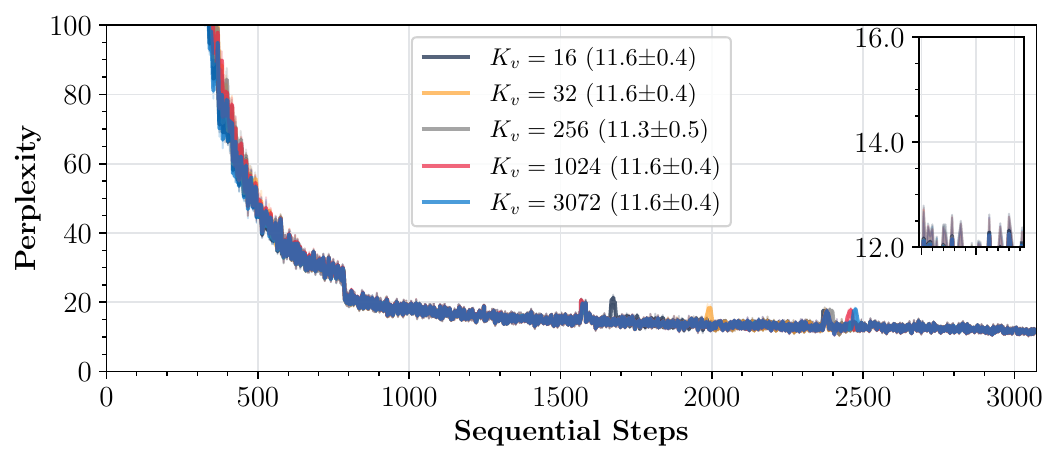}}  \hfill
    \noindent\subfloat[\method vary $K_v$, fixed $K_x=K_u=256$]
   {\includegraphics[width=0.485\columnwidth]{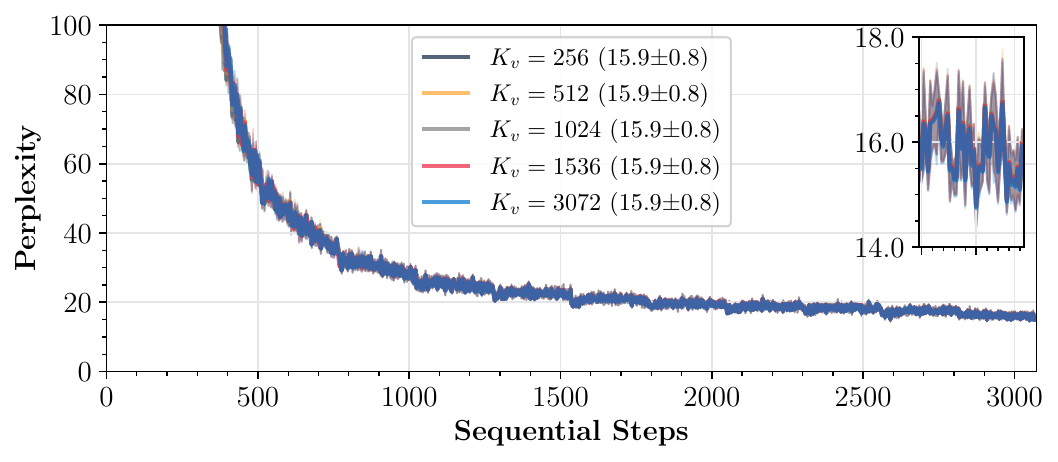}} \hfill
    \caption{Model perplexity for \methodadam~($\beta_1=\beta_2=0.95$) when independently varying sync periods ($K_x,K_u,K_v$) while fixing others at baseline $K_b$. Parameter synchronization (a,b) influences performance in both high ($K_b=16$) and low ($K_b=256$) frequency regimes. Momenta synchronization minimally impacts perplexity due to both states' high adaptivity (low $\beta$), with potentially minor effects during the early stages of training in high-frequency regimes~(c,e).
    }
    \label{app:fig:eval:independent_sync_frequencies_adam}

\end{figure}

\takeawaybox[Takeaway:]{%
For \methodadam, parameter synchronization remains critical, consistent with theory. However, due to reduced $\beta_2$, momenta synchronization is less impactful since both the numerator and denominator of \adam updates are driven by local worker gradients after a few initial steps.
}

\subsection{\textbf{RQ3:} Communication Reduction And Baseline Comparisons}

This section provides supplementary results for \textbf{RQ3}, complementing \cref{eval:subsec:baseline_comparison}. \Cref{app:subsubsec:comm_reduction_het_data} shows the perplexity of different configurations providing a $2\times$ communication reduction over \localadam. Additionally, \cref{app:subsubsec:baselines_adam_results} repeats the comparison against baselines from \cref{eval:subsec:baseline_comparison} for \methodadam, showing similar communication reductions relative to \localadam.

\subsubsection{\method on Heterogeneous Data~(See Contribution 2)}\label{app:subsubsec:comm_reduction_het_data}

\Cref{app:fig:eval:baseline_comparison_niid_data} evaluates the robustness of \method against baselines under heterogeneous (\texttt{Non-IID}) data distributions as described in \cref{sec:exp_setup}. We set synchronization periods to $K_x=K$, $K_u=3K_x$, and $K_v=6K_x$ to achieve a targeted $\mathbf{2}\times$ communication reduction over \localadam.

\begin{figure}[H]
    \centering
    \noindent\subfloat[\methodadam, high sync frequency $K_x=16$.]
    {\includegraphics[width=0.485\columnwidth]{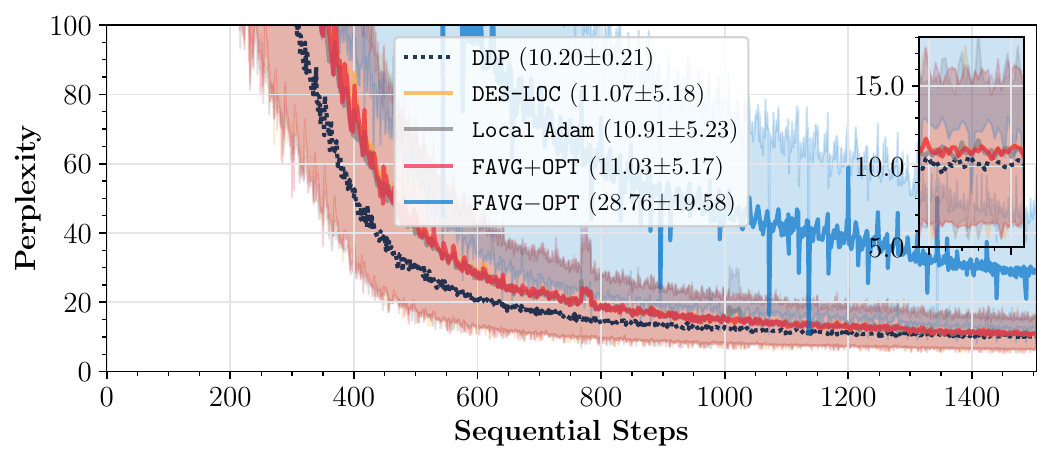}}  \hfill
    \noindent\subfloat[\methodadam, low sync frequency $K_x=128$.]
    {\includegraphics[width=0.485\columnwidth]{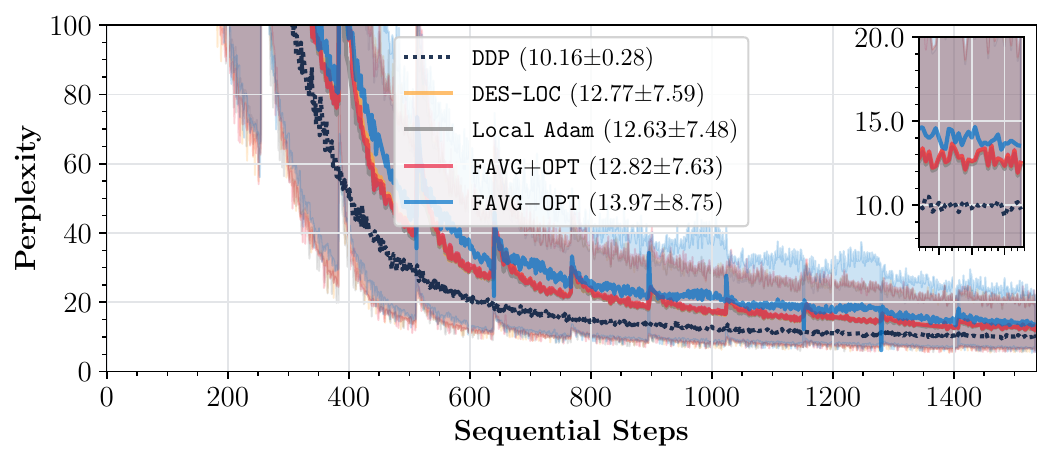}} 
    \caption{Comparison of perplexity under \texttt{Non-IID} conditions for \method, \localadam ($K_x=K_u=K_v$), and heuristic baselines (defined in \cref{sec:exp_setup}) at high (a) and low (b) synchronization frequencies. Due to higher cross-worker variance caused by heterogeneous data, parameters require slightly more frequent synchronization in the low-frequency regime ($K_x=128<256$). Experiments are limited to $T=1536$ steps ($\sim$compute-optimal) for computational feasibility.}
    \label{app:fig:eval:baseline_comparison_niid_data}
\end{figure}

\takeawaybox[Takeaway:]{%
\method effectively converges on heterogeneous data distributions, maintaining the $\mathbf{2}\times$ communication reduction observed in homogeneous settings. This aligns with our theoretical convergence results for heterogeneous losses (\cref{sec:theory}) and shows applicability in federated scenarios.
}

\subsubsection{\method Low Communication Configurations Ablation~(See \cref{fig:eval:baseline_comparison})}\label{app:subsubsec:low_comms_configurations_ablation}

\Cref{app:fig:eval:comms_reduction_ablation} explores alternative synchronization configurations enabling \method to achieve improved communication efficiency over \localadam. Motivated by theoretical insights (\cref{sec:methods,sec:theory}) and empirical evidence (\cref{subsect:eval:rates_of_change,eval:subsec:relative_importance}), we only consider settings where parameter synchronization is most frequent ($K_x \leq \min(K_u,K_v)$). This constraint follows from experiments in \cref{eval:subsec:relative_importance}, which show that infrequent parameter synchronization significantly degrades perplexity, while momentum synchronization frequency has a smaller impact. For a fixed $2\times$ communication reduction over \localadam, our findings confirm that synchronizing the first momentum more frequently than the second aligns with their respective half-lives and maintains performance close to \localadam.

\begin{figure}[H]
    \centering
    \noindent\subfloat[\methodadopt, high sync frequency $K_x=16$.]
    {\includegraphics[width=0.485\columnwidth]{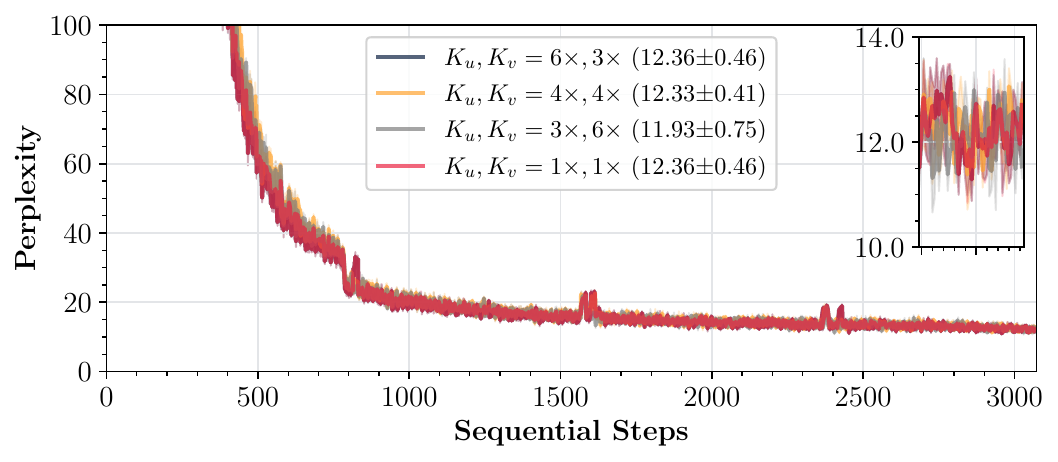}}  \hfill
    \noindent\subfloat[\methodadopt, low sync frequency $K_x=256$.]
    {\includegraphics[width=0.485\columnwidth]{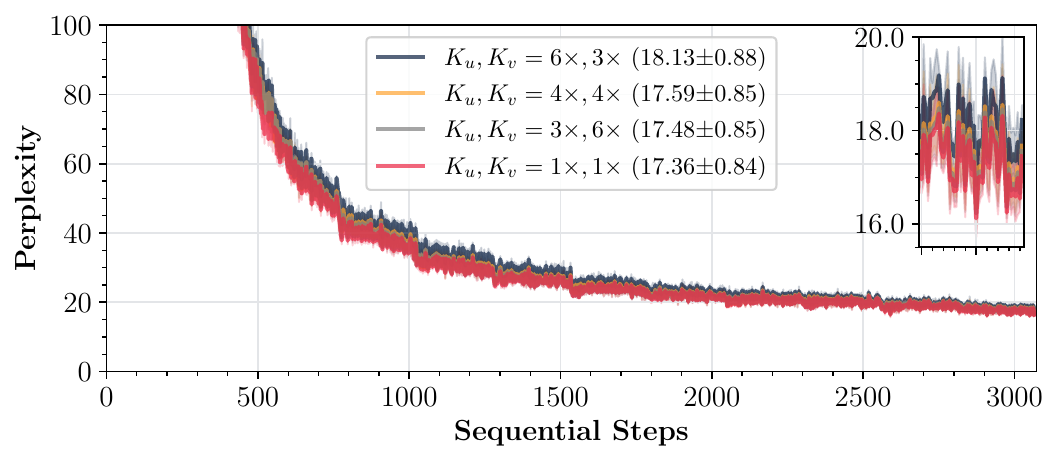}} 
    \caption{Configurations of \method targeting $2\times$ lower communication than \localadam ($K_x=K_u=K_v$), setting $K_u,K_v$ as multiples of $K_x$. In both high (a) and low-frequency (b) regimes, performance depends on how communication is split between momenta for $\beta_1 \ll \beta_2$. Syncing the first momentum less often ($K_u=6K_x,K_v=3K_x$) degrades performance, wasting communication on the slow second momentum. Conversely, syncing it frequently ($K_u=3K_x,K_v=6K_x$) yields performance comparable to \localadam. Setting $K_u=K_v=4 K_x$ produces intermediate results. 
    }
    \label{app:fig:eval:comms_reduction_ablation}
\end{figure}

\takeawaybox[Takeaway:]{%
For a given parameter synchronization period $K_x$ determined by bandwidth constraints, choose momentum synchronization periods $K_u,K_v$ as multiples of $K_x$. When $\beta_1 \ll \beta_2$, set $K_u < K_v$, with $K_u=3 \times K_x$ and $K_v=6 \times K_x$ providing robust default choices.}

\subsubsection{Adam Results~(See \cref{fig:eval:baseline_comparison})}\label{app:subsubsec:baselines_adam_results}

We now present results for \methodadam with $\beta_1=\beta_2=0.95$. \methodadam achieves similar communication reductions over \localadam and \ddp as \adopt. However, due to the lower $\beta_2$, the second-momentum half-life ($\tau_{0.5}(0.95)\approx13.5$) is significantly shorter than for \adopt ($\tau_{0.5}(0.9999)\approx6931$). \Cref{app:fig:eval:comms_reduction_ablation_adam} shows that with both momenta evolving at similar rates, the benefit of selecting $K_u<K_v$ diminishes. For consistency and due to meaningful empirical differences in rates of change (\cref{subsect:eval:rates_of_change}), we keep $K_u=3\times K_x$ and $K_v=6\times K_x$ in subsequent comparisons.

\begin{figure}[H]
    \centering
    \noindent\subfloat[\methodadam, high sync frequency $K_x=16$.]
    {\includegraphics[width=0.485\columnwidth]{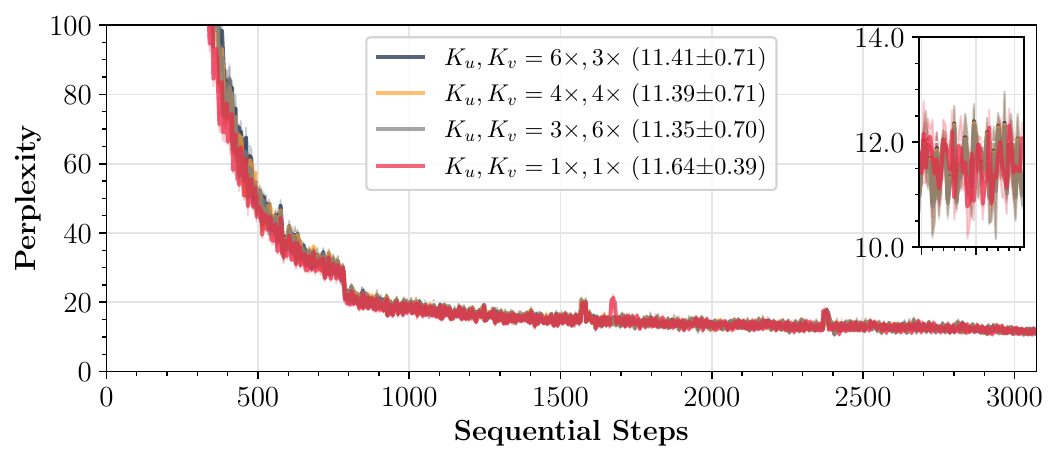}}  \hfill
    \noindent\subfloat[\methodadam, low sync frequency $K_x=256$.]
    {\includegraphics[width=0.485\columnwidth]{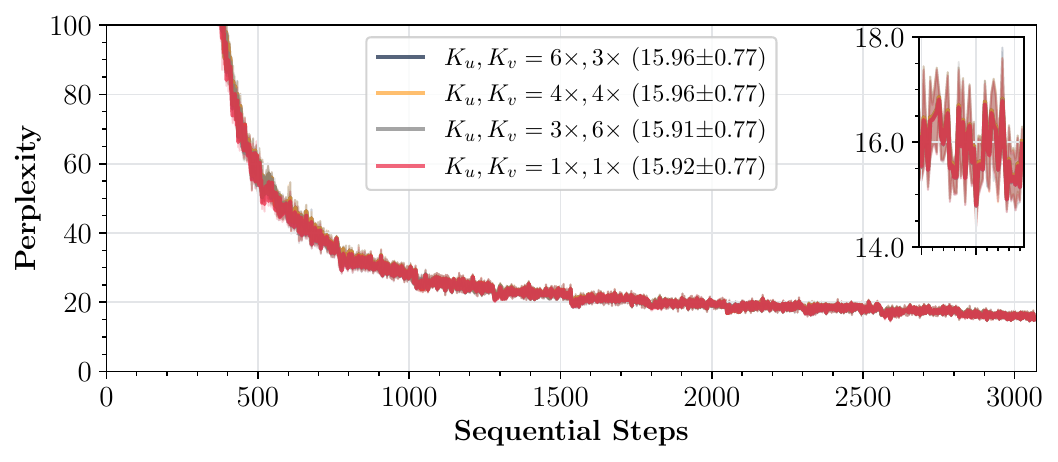}} 
    \caption{Configurations of \method targeting $2\times$ lower communication than \localadam ($K_x=K_u=K_v$), using \adam ($\beta_1=\beta_2=0.95$). In contrast to \methodadopt (where $\beta_1 \ll \beta_2$ yields an advantage for $K_u < K_v$ as shown in \cref{app:fig:eval:comms_reduction_ablation}), the similar half-lives in \adam make perplexity insensitive to how communication is split between momenta for high (a) and low-frequencies (b).}
    \label{app:fig:eval:comms_reduction_ablation_adam}
\end{figure}

\Cref{app:fig:eval:baseline_comparison_adam} shows \methodadam achieves a $2\times$ communication reduction over the prior state-of-the-art \localadam~\citep{LocalAdam} without significant perplexity degradation. Due to the much faster evolution of the optimizer states using \adam compared to \adopt, local worker gradients drive the optimization reducing the benefit of allocating more of the communication budget to the first momentum. 

\begin{figure}[H]
    \centering
    \noindent\subfloat[\methodadam, high sync frequency $K_x=16$.]
    {\includegraphics[width=0.485\columnwidth]{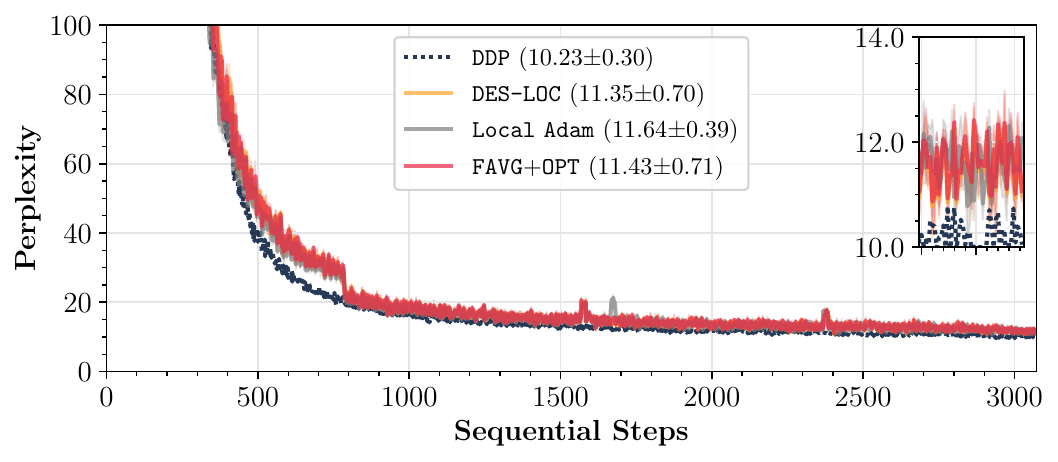}}  \hfill
    \noindent\subfloat[\methodadam, low sync frequency $K_x=256$.]
    {\includegraphics[width=0.485\columnwidth]{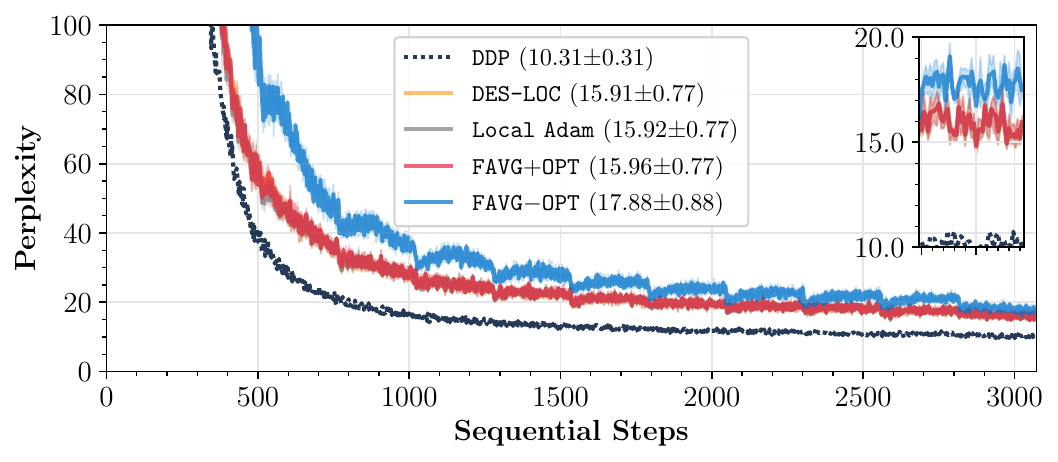}} 
    \caption{Setting $K_x=K$, $K_u=3K_x$, and $K_v=6K_x$, \methodadam achieves a $\mathbf{2}\times$ communication reduction over \localadam, matching performance at high~(a) and low~(b) frequencies for \localadam and heuristic baselines (see \cref{sec:exp_setup}).}
    \label{app:fig:eval:baseline_comparison_adam}
\end{figure}

\takeawaybox[Takeaway:]{%
\methodadam achieves a similar $2\times$ communication reduction over \localadam as \methodadopt by exploiting the reduced importance of optimizer-state synchronization relative to parameters. However, due to the smaller $\beta_2$ in \adam, there is limited benefit from assigning different synchronization frequencies to the first and second momenta compared to \adopt.}

\subsection{\textbf{RQ4:} Additional Metrics and Training Instabilities of \texttt{FAVG}$+$\texttt{OPT}~(See \cref{fig:eval:large_models}.b)} \label{app:subsec:additional_metrics_favg_opt}

\Cref{app:fig:eval:params_and_momentum_large_scale} complements \cref{fig:eval:large_models}.b by showing parameter and update norms for \method and baseline methods when training billion-scale models. 
Both \method and \localadam regularize updates by synchronizing optimizer states, effectively reducing update norms due to averaging across workers (triangle inequality). 
In contrast, the heuristic baseline~\citep{Photon} experiences large updates, leading to uncontrolled parameter growth, increased activations (\cref{fig:eval:large_models}.b), and degraded performance on downstream \texttt{ICL} tasks (\cref{tab:icl-table}) relative to its perplexity~(\cref{fig:eval:large_models}.a).

\begin{figure}[H]
    \centering
      \noindent\subfloat[\methodadopt $1$B-update norms]
    {\includegraphics[width=0.485\columnwidth]{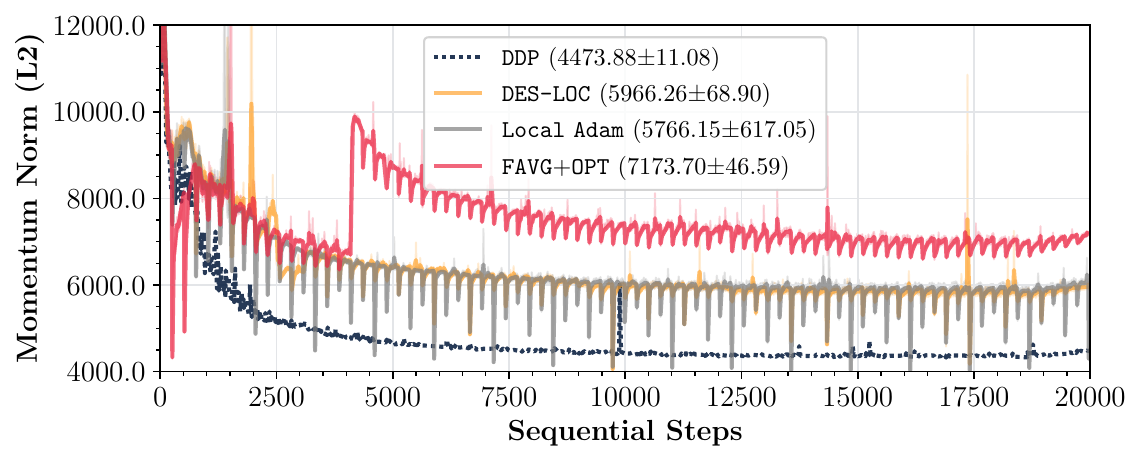}} \hfill
    \noindent\subfloat[\methodadopt $1$B-parameter norms]
    {\includegraphics[width=0.485\columnwidth]{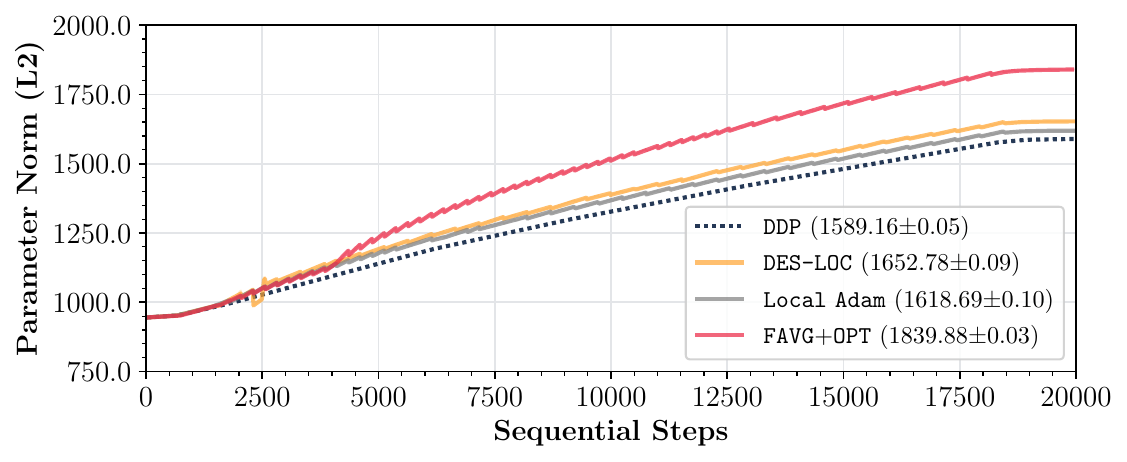}}  
    \caption{Comparison of update~(a) and parameter norms~(b) for billion-scale models trained with \method ($K_x=256,K_u=768,K_v=1536$), \localadam ($K=256$), \ddp, and Federated Averaging with persistent optimizer states (\texttt{FAVG}$+$\texttt{OPT}). Frequent synchronization in \localadam and \ddp consistently reduces update and parameter norms. Similarly, \method achieves comparable reductions at intervals corresponding to multiples of $\mathtt{lcm}(K_x,K_u,K_v)$, with smaller intermediate drops. Conversely, \texttt{FAVG}$+$\texttt{OPT}, which does not synchronize optimizer states, experiences persistently larger and noisier updates, becoming vulnerable to spikes (notably before step $5000$). This leads to uncontrolled parameter growth~(b).}
    \label{app:fig:eval:params_and_momentum_large_scale}
\end{figure}

\takeawaybox[Takeaway:]{%
Unlike heuristic methods, which maintain purely local optimizer states leading to unstable, noisy updates, \method provides stable regularization similar to \localadam and \ddp by periodically synchronizing parameters and momenta, reducing training instabilities.
}

\section{Deterministic Optimizer-specific Variants of \cref{alg:desync_generic}}
\addcontentsline{toc}{subsection}{\methodadam}
\begin{algorithm}[H]
\caption{\methodadam}

\label{alg:desync_adam}
\small
\begin{algorithmic}[1]
  \Require \textbf{Model tensors, Hyper-parameters} \\
          \quad $x_0$,$u_{-1},v_{-1} \in \mathbb{R}^{d}$ — initial parameter vector,seeds for first and second moments \\
          \quad $\{\eta_t\}_{t=0}^{T-1} \subset \mathbb{R}_{>0}$ — step-size schedule \\
          \quad $\beta_{1},\beta_{2} \in [0,1)$ — Adam decay factors \\
          \quad $\rho,\lambda \in \mathbb{R}_{>0}$ — gradient clipping term, $\ell_{2}$ stability term \\
         \quad $T,M \in \mathbb{N}_{+}$ — total iterations, number of workers \\
          \quad \textcolor{blue}{$K_x,K_u,K_v$} $\in \mathbb{N}_{+}$ — sync periods for parameters, first and second moments 
  \Ensure $x_T,\;u_{T-1},v_{T-1}$
  \State \textbf{for each worker} $m$: $x_0^m = x_0,\;u_{-1}^m = v_{-1}^m = 0$
         \hfill\textcolor{gray}{\scriptsize local init ($t=-1$ seeds)}
  \For{$t = 0,\dots,T-1$} \hfill\textcolor{gray}{\scriptsize training loop}
      \ForAll{workers $m=0,\dots,M-1$ \textbf{in parallel}}
    \State $g_t^m \gets \nabla F(x_t^m;\xi_t^m)$
           \hfill\textcolor{gray}{\scriptsize stochastic gradient}
    \State $\widehat{g}_t^m \gets \clip(g_t^m,\rho)$
           \hfill\textcolor{gray}{\scriptsize clip to radius $\rho$}
    \If{\textcolor{blue}{$t \bmod K_u = 0$}} \hfill\textcolor{gray}{\scriptsize sync $u$}
      \State $u_t^m \gets
             \beta_1\,\mathbb{E}_m[u_{t-1}^m] +
             (1-\beta_1)\widehat{g}_t^m$
    \Else
      \State $u_t^m \gets
             \beta_1\,u_{t-1}^m +
             (1-\beta_1)\widehat{g}_t^m$
    \EndIf
    \If{\textcolor{blue}{$t \bmod K_v = 0$}} \hfill\textcolor{gray}{\scriptsize sync $v$}
      \State $v_t^m \gets
             \beta_2\,\mathbb{E}_m[v_{t-1}^m] +
             (1-\beta_2)(\widehat{g}_t^m\odot\widehat{g}_t^m)$
    \Else
      \State $v_t^m \gets
             \beta_2\,v_{t-1}^m +
             (1-\beta_2)(\widehat{g}_t^m\odot\widehat{g}_t^m)$
    \EndIf
    \State $d_t^m \gets
           \dfrac{\eta_t}{\sqrt{v_t^m+\lambda^{2}}}\odot u_t^m$
           \hfill\textcolor{gray}{\scriptsize bias-corrected step}
    \If{\textcolor{blue}{$t \bmod K_x = 0$}} \hfill\textcolor{gray}{\scriptsize sync $x$}
      \State $x_{t+1}^m \gets \mathbb{E}_m[x_t^m] - d_t^m$
    \Else
      \State $x_{t+1}^m \gets x_t^m - d_t^m$
    \EndIf
    \EndFor
  \EndFor
\end{algorithmic}
\end{algorithm}

\addcontentsline{toc}{subsection}{\methodadopt}
\begin{algorithm}[H]
\caption{\methodadopt}
\label{alg:desync_adopt}
\small
\begin{algorithmic}[1]
  \Require \textbf{Model tensors,Hyper-parameters} \\
          \quad $x_0,m_{-1}, v_{-1} \in \mathbb{R}^{d}$ --- initial parameter vector and momenta \\
          \quad $\{\eta_t\}_{t=0}^{T-1} \subset \mathbb{R}_{>0}$ --- learning rate schedule \\
          \quad $\beta_{1},\beta_{2} \in [0,1)$ --- decay factors \\
          \quad $\rho,\epsilon \in \mathbb{R}_{>0}$ --- gradient clipping term,small stability constant \\
         \quad $T,M \in \mathbb{N}_{+}$ --- total iterations,number of workers \\
          \quad \textcolor{blue}{$K_x,K_m,K_v$} $\in \mathbb{N}_{+}$ --- sync periods for parameters, first and second moments 
  \Ensure $x_T,\;m_{T-1},v_{T-1}$
  \State \textbf{for each worker} $m$: $x_0^m = x_0,\;m_{-1}^m = v_{-1}^m = 0$
         \hfill\textcolor{gray}{\scriptsize local initialization}
  \For{$t = 0,\dots,T-1$}
      \ForAll{workers $m=0,\dots,M-1$ \textbf{in parallel}}
    \State $g_t^m \gets \nabla F(x_t^m;\xi_t^m)$
           \hfill\textcolor{gray}{\scriptsize stochastic gradient}
    \State $\widehat{g}_t^m \gets \clip(g_t^m,\rho)$
           \hfill\textcolor{gray}{\scriptsize gradient clipping}
    \If{\textcolor{blue}{$t \bmod K_v = 0$}}
      \State $v_t^m \gets
             \beta_2\,\mathbb{E}_m[v_{t-1}^m] +
             (1-\beta_2)(\widehat{g}_t^m\odot\widehat{g}_t^m)$
    \Else
      \State $v_t^m \gets
             \beta_2\,v_{t-1}^m +
             (1-\beta_2)(\widehat{g}_t^m\odot\widehat{g}_t^m)$
    \EndIf
    \If{\textcolor{blue}{$t \bmod K_m = 0$}}
      \State $m_t^m \gets
             \beta_1\,\mathbb{E}_m[m_{t-1}^m] +
             (1-\beta_1)\frac{\widehat{g}_t^m}{\max\{\sqrt{v_{t-1}^m},\epsilon\}}$
    \Else
      \State $m_t^m \gets
             \beta_1\,m_{t-1}^m +
             (1-\beta_1)\frac{\widehat{g}_t^m}{\max\{\sqrt{v_{t-1}^m},\epsilon\}}$
    \EndIf
    \State $d_t^m \gets \eta_t m_t^m$
           \hfill\textcolor{gray}{\scriptsize ADOPT update}
    \If{\textcolor{blue}{$t \bmod K_x = 0$}}
      \State $x_{t+1}^m \gets \mathbb{E}_m[x_t^m] - d_t^m$
    \Else
      \State $x_{t+1}^m \gets x_t^m - d_t^m$
    \EndIf
    \EndFor
  \EndFor
\end{algorithmic}
\end{algorithm}

\clearpage

\section{Convergence Analysis of \methodsgdm (in expectation bounds)}\label{app:proof-sgdm}

Here we provide a non-convex convergence analysis of the proposed \method approach applied to the SGDM optimizer which has a single state ($N=1$, momentum). The complete description of the algorithm can be found in Algorithm \ref{alg:desync_sgdm}.

\begin{algorithm}[H]
\caption{\methodsgdm}
\label{alg:desync_sgdm}
\small
\begin{algorithmic}[1]
  \Require \textbf{Model tensors} \\
          \quad $x_0 \in \mathbb{R}^{d}$ — initial parameter vector \\
          \quad $u_{-1} \in \mathbb{R}^{d}$ — seed for the momentum, initialised to \textbf{0} 
  \Require \textbf{Hyper-parameters} \\
          \quad $\{\eta_t\}_{t=0}^{T-1} \subset \mathbb{R}_{>0}$ — step-size schedule \\
          \quad $\beta \in [0,1)$ — Momentum decay factor \\
          \quad $T \in \mathbb{N}_{+}$ — total optimisation iterations \\
          \quad $M \in \mathbb{N}_{+}$ — number of workers \\
          \quad \textcolor{blue}{$p_x = \frac{1}{K_x},p_u = \frac{1}{K_u}$} $\in [0,1]$ — synchronization probabilities for parameters and momentum
  \Ensure $x_T,\;u_{T-1},v_{T-1}$
  \State \textbf{for each worker} $m$: $x_0^m = x_0,\;u_{-1}^m = v_{-1}^m = 0$
         \hfill\textcolor{gray}{\scriptsize local init ($t=-1$ seeds)}
  \For{$t = 0,\dots,T-1$} \hfill\textcolor{gray}{\scriptsize training loop}
      \ForAll{workers $m=0,\dots,M-1$ \textbf{in parallel}}
    \State $g_t^m \gets \nabla F_m(x_t^m;\xi_t^m)$
           \hfill\textcolor{gray}{\scriptsize stochastic gradient}
\State $u_t^{m} \gets
    \begin{cases}
    \E_m[\beta u_{t-1}^m + (1-\beta)g_{t}^m], & \textcolor{blue}{\text{with probability } p_u} \\ 
    \beta u_{t-1}^m + (1-\beta)g_{t}^m, & \text{with probability } 1-p_u 
    \end{cases}$ \hfill\textcolor{gray}{\scriptsize sync $u$}
\State $x_{t+1}^{m} \gets
    \begin{cases}
    \E_m[x_{t}^m - \eta_t u_{t}^m], & \textcolor{blue}{\text{with probability } p_x} \\ 
    x_{t}^m - \eta_t u_{t}^m, & \text{with probability } 1-p_x 
    \end{cases}$ \hfill\textcolor{gray}{\scriptsize sync $x$}
    \EndFor
  \EndFor
\end{algorithmic}
\end{algorithm}

In order to facilitate the technical presentation, we model synchronization frequencies by assigning probabilities to each averaging event. For example, the parameters $x_t^m$ are synchronized with the probability $p_x = \frac{1}{K_x}$, which is statistically equivalent to performing the averaging in every $\frac{1}{p_x} = K_x$ iteration. Similarly, momentum $u_t^m$ synchronization happens with probability $p_u = \frac{1}{K_u}$, which can differ from $p_x$.

\underline{Step 1 (virtual iterates).} For each step $t\ge0$, denote the average parameters, momentum and gradient as follows:
$$
x_t \eqdef \E_m[x_t^m],
\quad
u_t \eqdef \E_m[u_t^m],
\quad
g_t \eqdef \E_m[g_t^m].
$$
Then these averaged variables follow the ``standard'' centralized SGDM dynamics:
\begin{eqnarray*}
u_t &=& \beta u_{t-1} + (1-\beta)g_{t} \\
x_{t+1} &=& x_{t} - \eta u_t.
\end{eqnarray*}
Letting $x_{-1} = x_0$, define the global virtual iterations as follows
$$
z_t \eqdef \frac{1}{1-\beta} x_t - \frac{\beta}{1-\beta} x_{t-1},
\quad t\ge0.
$$
The key property of this virtual iterates we are going to exploit in the next steps is that they follow averaged gradients, namely for any $t\ge0$ we have
\begin{eqnarray*}
z_{t+1} - z_t
&=& \frac{1}{1-\beta}(x_{t+1} - x_{t}) - \frac{\beta}{1-\beta}(x_{t} - x_{t-1}) \\
&=& -\frac{\eta}{1-\beta} u_t + \frac{\eta\beta}{1-\beta}u_{t-1}
= -\frac{\eta}{1-\beta} (u_t - \beta u_{t-1})
= - \eta g_t.
\end{eqnarray*}

\underline{Step 2 (smoothness over virtual iterates).} Then we apply smoothness of the global loss function $f$ over these global virtual iterates.
\begin{eqnarray*}
f(z_{t+1})
&\le& f(z_t) + \langle \nabla f(z_t), z_{t+1} - z_t \rangle + \frac{L}{2}\|z_{t+1} - z_t\|^2 \\
&=& f(z_t)
+ \underbrace{\langle \nabla f(x_t), z_{t+1} - z_t \rangle}_{I}
+ \underbrace{\langle \nabla f(z_t) - \nabla f(x_t), z_{t+1} - z_t \rangle}_{II}
+ \underbrace{\frac{L}{2}\|z_{t+1} - z_t\|^2}_{III}.
\end{eqnarray*}

In the next step, we separately bound each term appearing in the above bound.

\underline{Step 3a (one step progress).} Bounding term I.
\begin{eqnarray*}
&& \E{\langle \nabla f(x_t), z_{t+1} - z_t \rangle} \\
&=& -\eta \E{\left\langle \nabla f(x_t), \frac{1}{M}\sum_{m=1}^M g_t^m \right\rangle}
= -\eta \E{\left\langle \nabla f(x_t), \frac{1}{M}\sum_{m=1}^M \nabla f_m(x_t^m) \right\rangle} \\
&=& -\frac{\eta}{2}\E{\|\nabla f(x_t)\|^2} - \frac{\eta}{2}\E{\left\|\frac{1}{M}\sum_{m=1}^M \nabla f_m(x_t^m) \right\|^2} + \frac{\eta}{2}\E{\left\| \nabla f(x_t) - \frac{1}{M}\sum_{m=1}^M \nabla f_m(x_t^m) \right\|^2} \\
&=& -\frac{\eta}{2}\E{\|\nabla f(x_t)\|^2} - \frac{\eta}{2}\E{\left\|\frac{1}{M}\sum_{m=1}^M \nabla f_m(x_t^m) \right\|^2} + \frac{\eta}{2}\E{\left\| \frac{1}{M}\sum_{m=1}^M \nabla f_m(x_t) - \nabla f_m(x_t^m) \right\|^2} \\
&\le& -\frac{\eta}{2}\E{\|\nabla f(x_t)\|^2} - \frac{\eta}{2}\E{\left\|\frac{1}{M}\sum_{m=1}^M \nabla f_m(x_t^m) \right\|^2} + \frac{\eta}{2M}\sum_{m=1}^M\E{\left\| \nabla f_m(x_t) - \nabla f_m(x_t^m) \right\|^2} \\
&\le& -\frac{\eta}{2}\E{\|\nabla f(x_t)\|^2} - \frac{\eta}{2}\E{\left\|\frac{1}{M}\sum_{m=1}^M \nabla f_m(x_t^m) \right\|^2} + \frac{\eta L^2}{2M}\sum_{m=1}^M \underbrace{\E{\left\| x_t - x_t^m \right\|^2}}_{\rm Lemma\;\ref{lem:x-xm}}.
\end{eqnarray*}

\underline{Step 3b (one step progress).} Bounding term II.
\begin{eqnarray*}
\E{\langle \nabla f(z_t) - \nabla f(x_t), z_{t+1} - z_t \rangle}
&=& -\eta \E{\left\langle \nabla f(z_t) - \nabla f(x_t), \frac{1}{M}\sum_{m=1}^M \nabla f_m(x_t^m) \right\rangle} \\
&\le& \frac{\eta\rho}{2} \E{\| \nabla f(z_t) - \nabla f(x_t)\|^2} + \frac{\eta}{2\rho} \E{\left\|\frac{1}{M}\sum_{m=1}^M \nabla f_m(x_t^m) \right\|^2} \\
&\le& \frac{\eta\rho L^2}{2} \underbrace{\E{\| z_t - x_t \|^2}}_{\rm Lemma\;\ref{lem:z-x}} + \frac{\eta}{2\rho} \E{\left\|\frac{1}{M}\sum_{m=1}^M \nabla f_m(x_t^m) \right\|^2}.
\end{eqnarray*}

\underline{Step 3c (one step progress).} Bounding term III.
\begin{eqnarray*}
\frac{L}{2}\E{\|z_{t+1} - z_t\|^2}
&=& \frac{\eta^2L}{2}\E{\left\| \frac{1}{M}\sum_{m=1}^M g_t^m \right\|^2} \\
&=& \frac{\eta^2L}{2}\E{\left\| \frac{1}{M}\sum_{m=1}^M g_t^m - \nabla f_m(x_t^m) \right\|^2} + \frac{\eta^2L}{2}\E{\left\| \frac{1}{M}\sum_{m=1}^M \nabla f_m(x_t^m) \right\|^2} \\
&=& \frac{\eta^2L}{2M^2}\sum_{m=1}^M\E{\left\| g_t^m - \nabla f_m(x_t^m) \right\|^2} + \frac{\eta^2L}{2}\E{\left\| \frac{1}{M}\sum_{m=1}^M \nabla f_m(x_t^m) \right\|^2} \\
&\le& \frac{\eta^2L}{2M}\sigma^2 + \frac{\eta^2L}{2}\E{\left\| \frac{1}{M}\sum_{m=1}^M \nabla f_m(x_t^m) \right\|^2}.
\end{eqnarray*}

\underline{Step 3abc (one step progress).} Combining previous bounds.
\begin{eqnarray*}
\E{f(z_{t+1})} - \E{f(z_t)}
&\le& \E \underbrace{\langle \nabla f(x_t), z_{t+1} - z_t \rangle}_{I}
+ \E \underbrace{\langle \nabla f(z_t) - \nabla f(x_t), z_{t+1} - z_t \rangle}_{II}
+ \E \underbrace{\frac{L}{2}\|z_{t+1} - z_t\|^2}_{III} \\
&\le& -\frac{\eta}{2}\E{\|\nabla f(x_t)\|^2} - \frac{\eta}{2}\E{\left\|\frac{1}{M}\sum_{m=1}^M \nabla f_m(x_t^m) \right\|^2} + \frac{\eta L^2}{2M}\sum_{m=1}^M \underbrace{\E{\left\| x_t - x_t^m \right\|^2}}_{\rm Lemma\;\ref{lem:x-xm}} \\
&& +\; \frac{\eta\rho L^2}{2} \underbrace{\E{\| z_t - x_t \|^2}}_{\rm Lemma\;\ref{lem:z-x}} + \frac{\eta}{2\rho} \E{\left\|\frac{1}{M}\sum_{m=1}^M \nabla f_m(x_t^m) \right\|^2} \\
&& +\; \frac{\eta^2L}{2K}\sigma^2 + \frac{\eta^2L}{2}\E{\left\| \frac{1}{M}\sum_{m=1}^M \nabla f_m(x_t^m) \right\|^2} \\
&\le& -\frac{\eta}{2}\E{\|\nabla f(x_t)\|^2}
      - \frac{\eta}{2} \left( 1 - \frac{1}{\rho} - \eta L \right)\E{\left\|\frac{1}{M}\sum_{m=1}^M \nabla f_m(x_t^m) \right\|^2} \\
&& +\; \frac{\eta\rho L^2}{2} \underbrace{\E{\| z_t - x_t \|^2}}_{\rm Lemma\;\ref{lem:z-x}} + \frac{\eta L^2}{2M}\sum_{m=1}^M \underbrace{\E{\left\| x_t - x_t^m \right\|^2}}_{\rm Lemma\;\ref{lem:x-xm}} + \frac{\eta^2L}{2M}\sigma^2.
\end{eqnarray*}

\underline{Step 4 (final).} Now we average over the iterates and apply the bounds derived in Lemmas 1,2.
\begin{eqnarray*}
\frac{\E[f(z_T) - f(z_0)]}{T}
&=& \frac{1}{T}\sum_{t=0}^{T-1} \E[f(z_{t+1}) - f(z_t)] \\
&\le& -\frac{\eta}{2T}\sum_{t=0}^{T-1}\E{\|\nabla f(x_t)\|^2}
      - \frac{\eta}{2} \left( 1 - \frac{1}{\rho} - \eta L \right) \frac{1}{T}\sum_{t=0}^{T-1}\E{\left\|\frac{1}{M}\sum_{m=1}^M \nabla f_m(x_t^m) \right\|^2} \\
&& +\; \frac{\eta\rho L^2}{2} \underbrace{\frac{1}{T}\sum_{t=0}^{T-1}\E{\| z_t - x_t \|^2}}_{\rm Lemma\;1}
+ \frac{\eta L^2}{2} \underbrace{\frac{1}{TM}\sum_{t=0}^{T-1}\sum_{m=1}^M\E{\left\| x_t - x_t^m \right\|^2}}_{\rm Lemma\;2} + \frac{\eta^2L}{2M}\sigma^2 \\
&\le& -\frac{\eta}{2T}\sum_{t=0}^{T-1}\E{\|\nabla f(x_t)\|^2}
      - \frac{\eta}{2} \left( 1 - \frac{1}{\rho} - \eta L \right) \frac{1}{T}\sum_{t=0}^{T-1}\E{\left\|\frac{1}{M}\sum_{m=1}^M \nabla f_m(x_t^m) \right\|^2}
       + \frac{\eta^2L}{2M}\sigma^2 \\
&& +\; \frac{\eta\rho L^2}{2} \left( \frac{\eta^2\beta^2}{(1-\beta)^2 M}\sigma^2 + \frac{\eta^2\beta^2}{(1-\beta)^2} \frac{1}{T}\sum_{\tau=0}^{T-1} \E\left\| \frac{1}{M}\sum_{m=1}^M \nabla f_m(x_{\tau}^m) \right\|^2 \right) \\
&& +\;   \frac{\eta L^2}{2} \left( 12\eta^2 (B^2-1) \psi \cdot \frac{1}{T}\sum_{t=0}^{T-1}\E\|\nabla f(\theta^{t})\|^2
+ 4\eta^2\psi(\sigma^2 + 3G^2) \right) \\
&\le& -\frac{\eta}{2}\left(1 - 12\eta^2L^2(B^2-1)\psi\right) \frac{1}{T}\sum_{t=0}^{T-1}\E{\|\nabla f(x_t)\|^2} \\
&& -\; \frac{\eta}{2} \left( 1 - \frac{1}{\rho} - \eta L - \frac{\eta^2\beta^2 \rho L^2}{(1-\beta)^2} \right) \frac{1}{T}\sum_{t=0}^{T-1}\E{\left\|\frac{1}{M}\sum_{m=1}^M \nabla f_m(x_t^m) \right\|^2} \\
&& +\; \frac{\eta^2L}{2M}\sigma^2 
+ \frac{\eta^3\rho L^2 \beta^2}{2(1-\beta)^2 M}\sigma^2 
+ 2\eta^3 L^2\psi(\sigma^2 + 3G^2).
\end{eqnarray*}
Next, we choose $\rho=2$ and step size $\eta$ such that
\begin{eqnarray*}
12\eta^2L^2(B^2-1)\psi \le \frac{1}{2} &\iff& \text{to bound the first term} \\
\eta L + \frac{2\eta^2\beta^2 L^2}{(1-\beta)^2} \le \frac{1}{2} &\iff& \text{to bound the second term} \\
12\eta^2 L^2 \psi \le \frac{1}{2} &\iff& \text{from Lemma~\ref{lem:x-xm}}
\end{eqnarray*}
Note that
$$
\eta_0 \eqdef \frac{1}{4L}\min\left(1-\beta, \frac{1}{6\sqrt{\psi\max(1,B^2-1)}} \right)
$$
satisfies all three bounds. Then, with any $\eta\le\eta_0$ we get
\begin{eqnarray*}
\frac{\E[f(z_T) - f(z_0)]}{T}
&\le& -\frac{\eta}{4T}\sum_{t=0}^{T-1}\E{\|\nabla f(x_t)\|^2} \\
&& +\; \frac{\eta^2L}{2M}\sigma^2 
+ \frac{\eta^3\rho L^2 \beta^2}{2(1-\beta)^2 M}\sigma^2 
+ 2\eta^3 L^2\psi(\sigma^2 + 3G^2).
\end{eqnarray*}
Noticing that $z_0=x_0$ and $f^* \le f(z_T)$, we have
\begin{eqnarray*}
\frac{1}{T}\sum_{t=0}^{T-1}\E{\|\nabla f(x_t)\|^2}
\le \frac{4(f(x_0) - f^*)}{\eta T}
+ \frac{2\eta L}{M}\sigma^2 
+ \frac{4\eta^2 L^2 \beta^2}{(1-\beta)^2 M}\sigma^2 
+ 8\eta^2 L^2\psi(\sigma^2 + 3G^2).
\end{eqnarray*}

Furthermore, choosing $\eta = \min(\eta_0, \frac{1}{\sqrt{T}})$, we get the following rate:
\begin{eqnarray*}
&& \frac{1}{T}\sum_{t=0}^{T-1}\E{\|\nabla f(x_t)\|^2} \\
&\le& \max\left(1,\frac{1}{\eta_0\sqrt T}\right) \frac{4(f(x_0) - f^*)}{\sqrt T}
+ \frac{2L\sigma^2 }{M\sqrt{T}}
+ \frac{4 L^2 \beta^2 \sigma^2 }{(1-\beta)^2 MT}
+ \frac{8 L^2\psi(\sigma^2 + 3G^2)}{T} \\
&\le& \frac{4(f(x_0) - f^*)}{\sqrt T}
+ \frac{2L\sigma^2 }{M\sqrt{T}}
+ \frac{4(f(x_0) - f^*)}{\eta_0 T}
+ \frac{4 L^2 \beta^2 \sigma^2 }{(1-\beta)^2 MT}
+ \frac{8 L^2\psi(\sigma^2 + 3G^2)}{T} \\
&=& \frac{4}{\sqrt{T}}\left(f(x_0) - f^*
+ \frac{L\sigma^2 }{2M} \right)
+ \mathcal{O}\left(\frac{1+\psi}{T}\right).
\end{eqnarray*}

\subsection{Extension to Adam optimizer}

Here we discuss extension of the previous analysis for the Adam optimizer including the second-order momentum in the analysis. The addition is similar to the first-order momentum while the synchronization probability $p_v$ can differ from other probabilities $p_u$ and $p_u$. The complete description of the
algorithm can be found in Algorithm~\ref{alg:desync_adam_prob}. Instead of bounded heterogeneity Assumption \ref{ass:het}, in this analysis we use stronger condition mentioned below: 

\begin{asp}[Bounded gradient]\label{ass:bounded-grad}
    For any iterate $t\ge0$ and worker $m$, the local stochastic gradient is bounded, namely $\|g_t^m\|_2 \le G$.
\end{asp}

This condition facilitates the analysis by providing uniform upper bounds for gradients/momentum variables and is commonly used in the analysis of adaptive optimization.

\begin{algorithm}[h]
\caption{\methodadam (with probabilistic synchronization)}
\label{alg:desync_adam_prob}
\small
\begin{algorithmic}[1]
  \Require \textbf{Model tensors} \\
          \quad $x_0 \in \mathbb{R}^{d}$ — initial parameter vector \\
          \quad $u_{-1},v_{-1} \in \mathbb{R}^{d}$ — seeds for first and second moments, initialised to \textbf{0} 
  \Require \textbf{Hyper-parameters} \\
          \quad $\{\eta_t\}_{t=0}^{T-1} \subset \mathbb{R}_{>0}$ — step-size schedule \\
          \quad $\beta_{1},\beta_{2} \in [0,1)$ — Adam decay factors \\
          \quad $\lambda \in \mathbb{R}_{\ge 0}$ — $\ell_{2}$ stability term \\
         \quad $T \in \mathbb{N}_{+}$ — total optimisation iterations \\
        \quad $M \in \mathbb{N}_{+}$ — number of workers \\
          \quad \textcolor{blue}{$p_x = \frac{1}{K_x},p_u = \frac{1}{K_u},p_v = \frac{1}{K_v}$} $\in [0,1]$ — synchronization probabilities for parameters and momentums
  \Ensure $x_T,\;u_{T-1},v_{T-1}$
  \State \textbf{for each worker} $m$: $x_0^m = x_0,\;u_{-1}^m = v_{-1}^m = 0$
         \hfill\textcolor{gray}{\scriptsize local init ($t=-1$ seeds)}
  \For{$t = 0,\dots,T-1$} \hfill\textcolor{gray}{\scriptsize training loop}
      \ForAll{workers $m=0,\dots,M-1$ \textbf{in parallel}}
    \State $g_t^m \gets \nabla F(x_t^m;\xi_t^m)$
           \hfill\textcolor{gray}{\scriptsize stochastic gradient}
    \State $u_t^{m} \gets
        \begin{cases}
            \E_m[\beta_1 u_{t-1}^m + (1-\beta_1) g_{t}^m], & \textcolor{blue}{\text{with probability } p_u} \\ 
            \beta_1 u_{t-1}^m + (1-\beta_1)g_{t}^m, & \text{with probability } 1-p_u 
        \end{cases}$ \hfill\textcolor{gray}{\scriptsize sync $u$}
    \State $v_t^{m} \gets
        \begin{cases}
            \E_m[\beta_2 v_{t-1}^m + (1-\beta_2)(g_t^m\odot g_t^m)], & \textcolor{blue}{\text{with probability } p_v} \\ 
            \beta_2 v_{t-1}^m + (1-\beta_2)(g_t^m\odot g_t^m), & \text{with probability } 1-p_v 
        \end{cases}$ \hfill\textcolor{gray}{\scriptsize sync $u$}
    \State $\tilde v_t^m \gets \max(v_t^m, \tilde v_{t-1}^m)$
           \hfill\textcolor{gray}{\scriptsize AMSGrad Normalization, $\tilde v_{-1}=v_{-1}$}
    \State $d_t^m \gets
           \dfrac{\eta_t}{\sqrt{\tilde v_t^m+\lambda^{2}}}\odot u_t^m$
           \hfill\textcolor{gray}{\scriptsize bias-corrected update}
    \State $x_{t+1}^{m} \gets
        \begin{cases}
            \E_m[x_{t}^m - d_{t}^m], & \textcolor{blue}{\text{with probability } p_x} \\ 
            x_{t}^m - d_{t}^m, & \text{with probability } 1-p_x 
        \end{cases}$ \hfill\textcolor{gray}{\scriptsize sync $x$}
    \EndFor
  \EndFor
\end{algorithmic}
\end{algorithm}

\underline{Step 1 (preconditioning and virtual iterates).}
Let $\Gamma_t^m \eqdef \diag^{-\nicefrac{1}{2}}(\tilde{v}_t^m + \lambda^2)$ be the preconditioning matrix and for each step $t\ge0$, denote the averaged variables
$$
x_t \eqdef \E_m[x_t^m],
\quad
u_t \eqdef \E_m[u_t^m],
\quad
v_t \eqdef \E_m[v_t^m],
\quad
\tilde v_t \eqdef \E_m[\tilde v_t^m],
\quad
g_t \eqdef \E_m[g_t^m].
$$
Then
\begin{eqnarray*}
u_t &=& \beta_1 u_{t-1} + (1-\beta_1) g_{t} \\
x_{t+1} &=& x_{t} - d_t = x_t - \eta \E_m[\Gamma_t^m u_t^m].
\end{eqnarray*}

Consider the same averaged iterates $x_t$ and virtual iterates $z_t$ as before:
\begin{equation*}\label{def:virtual-iter}
z_{t}
= \frac{1}{1-\beta_1} x_{t} - \frac{\beta_1}{1-\beta_1} x_{t-1}.
\end{equation*}
In particular, $z_0 = x_0$. Then,
\begin{eqnarray*}
z_{t+1} - z_t
&=& \frac{1}{1-\beta_1}(x_{t+1}-x_t) - \frac{\beta_1}{1-\beta_1}(x_{t} - x_{t-1}) \\
&=& -\frac{\eta}{1-\beta_1}\E_m[\Gamma_t^m u_t^m] + \frac{\eta\beta_1}{1-\beta_1} \E_m[\Gamma_{t-1}^m u_{t-1}^m] \\
&=& -\frac{\eta}{1-\beta_1}\E_m[\Gamma_t^m u_t^m] + \frac{\eta\beta_1}{1-\beta_1} \E_m[\Gamma_{t-1}^m u_{t-1}^m] \pm \frac{\eta\beta_1}{1-\beta_1}\E_m[\Gamma_t^m u_{t-1}^m] \\
&=& -\frac{\eta}{1-\beta_1} \E_m[ \Gamma_t^m (u_t^m - \beta_1 u_{t-1}^m)] + \frac{\eta\beta_1}{1-\beta_1} \E_m[(\Gamma_{t-1}^m - \Gamma_{t}^m )u_{t-1}^m] \\
&=& -\eta \E_m[\Gamma_t^m \widetilde{g}_t^m] + \frac{\eta\beta_1}{1-\beta_1} \E_m[(\Gamma_{t-1}^m - \Gamma_{t}^m ) u_{t-1}^m] \\
&=& -\eta \E_m[\Gamma_t^m g_t] + \eta \E_m[\Gamma_t^m (g_t - \widetilde{g}_t^m)] + \frac{\eta\beta_1}{1-\beta_1} \E_m[(\Gamma_{t-1}^m - \Gamma_{t}^m ) u_{t-1}^m] \\
&=& -\eta \Gamma_t g_t + \eta \cdot \underbrace{\E_m[\Gamma_t^m (g_t - \widetilde{g}_t^m)]}_{\eqdef U_t} + \eta\cdot \underbrace{\frac{\beta_1}{1-\beta_1} \E_m[(\Gamma_{t-1}^m - \Gamma_{t}^m ) u_{t-1}^m]}_{\eqdef V_t},
\end{eqnarray*}
where $\Gamma_t \eqdef \E_m[\Gamma_t^m]$ and $\widetilde{g}_t^m \eqdef \frac{u_t^m - \beta_1 u_{t-1}^m}{1-\beta_1}$
for which, $\E_m[\widetilde{g}_t^m] = \E_m[g_t^m] = g_t$.

\underline{Step 2 (smoothness over virtual iterates).} Then we apply smoothness of the global loss function $f$ over these global virtual iterates.
\begin{eqnarray*}
f(z_{t+1}) - f(z_t)
&\le& \langle \nabla f(z_t), z_{t+1} - z_t \rangle + \frac{L}{2}\|z_{t+1} - z_t\|^2 \\
&=&  -\eta \left\langle \nabla f(z_t), \Gamma_t g_t \right\rangle
+ \eta\langle \nabla f(z_t), U_t \rangle
+ \eta\langle \nabla f(z_t), V_t \rangle
+ \frac{L}{2}\|z_{t+1} - z_t\|^2 \\
&=&  \underbrace{-\eta \left\langle \nabla f(x_t), \Gamma_t g_t \right\rangle}_{I}
+ \underbrace{\eta\langle \nabla f(z_t), U_t \rangle}_{II}
+ \underbrace{\eta \left\langle \nabla f(z_t), V_t \right\rangle}_{III} \\
&& +\; \underbrace{\frac{\eta^2 L}{2}\left\| \Gamma_tg_t - U_t -V_t \right\|^2}_{IV}
+ \underbrace{\eta \left\langle \nabla f(x_t) - \nabla f(z_t), \Gamma_tg_t \right\rangle}_{V}.
\end{eqnarray*}

In the next step, we separately bound each term appearing in the above bound. For clarity, we are also going to use $\|\nabla f(x_t)\| \le G$ and $\|\nabla f(z_t)\| \le G$. However, these conditions can be avoided through linking $\nabla f(z_t)$ term to $\nabla f(x_t)$, and $\nabla f(x_t)$ term to $\E_m\nabla f_m(x_t^m)$ with the bound for $\E[\|x_t-x_t^m\|^2]$.

\underline{Step 3a (one step progress).} Bounding term I.

\begin{eqnarray*}
    I&=&
    -\eta \left\langle \nabla f(x_t), \Gamma_t g_t] \right\rangle \\
    &=& -\eta\E\left[\left\langle \nabla f(x_t), \Gamma_{t-1} g_t \right\rangle\right] + \eta\E\left[\left\langle \nabla f(x_t), (\Gamma_{t-1} - \Gamma_{t})g_t\right\rangle\right] \nonumber\\
    &\leq& -\eta\E\left[\left\langle \nabla f(x_t), \frac{1}{M}\sum_{m=1}^M \nabla f_m(x_t^m)\right\rangle_{\Gamma_{t-1}}\right]+\eta G^2\E[\|\Gamma_{t-1} - \Gamma_{t}\|].  \nonumber\\
    &\leq& - \frac{\eta}{2}\E\left[\|\nabla f(x_t)\|^2_{\Gamma_{t-1}}\right]
    - \frac{\eta}{2}\E\left[\left\|\frac{1}{M}\sum_{m=1}^M\nabla f_m(x_t^m)\right\|^2_{\Gamma_{t-1}}\right] \\
    && +\; \frac{\eta}{2}\E\left[\left\| \nabla f(x_t) - \frac{1}{M}\sum_{m=1}^M\nabla f_m(x_t^m)\right\|^2_{\Gamma_{t-1}}\right]
    + \eta G^2\E[\|\Gamma_{t-1} - \Gamma_{t}\|]  \nonumber\\
    &\leq& - \frac{\eta}{2}\|\Gamma_{t-1}\|_{\min}\E\|\nabla f(x_t)\|^2
    - \frac{\eta}{2}\E\left[\left\|\frac{1}{M}\sum_{m=1}^M\nabla f_m(x_t^m)\right\|^2_{\Gamma_{t-1}}\right] \\
    && +\; \frac{\eta}{2}\|\Gamma_{t-1}\|_{\max}\E\left[\left\| \frac{1}{M}\sum_{m=1}^M \nabla f_m(x_t) - \nabla f_m(x_t^m) \right\|^2\right]
    + \eta G^2\E[\|\Gamma_{t-1} - \Gamma_{t}\|]  \nonumber\\
    &\leq& - \frac{\eta}{2C_0}\E\|\nabla f(x_t)\|^2
    - \frac{\eta}{2}\E\left[\left\|\frac{1}{M}\sum_{m=1}^M\nabla f_m(x_t^m)\right\|^2_{\Gamma_{t-1}}\right] \\
    && +\; \frac{\eta}{2\lambda M} \sum_{m=1}^M \E\left[\left\|\nabla f_m(x_t) - \nabla f_m(x_t^m) \right\|^2\right]
    + \eta G^2\E[\|\Gamma_{t-1} - \Gamma_{t}\|]  \nonumber\\
    &\leq& - \frac{\eta}{2C_0}\E\|\nabla f(x_t)\|^2
    + \frac{\eta L^2}{2\lambda M} \sum_{m=1}^M \E\left[\|x_t - x_t^m\|^2\right]
    + \eta G^2\E[\|\Gamma_{t-1} - \Gamma_{t}\|],  \nonumber\\
\end{eqnarray*}
where $\|\cdot\|$ indicates the spectral norm for matrices, and we used the following inequalities:
$$
\|\Gamma_{t-1}\|_{\min}
= \left\|\frac{1}{M}\sum_{m=1}^M \Gamma_{t-1}^m\right\|_{\min}
= \frac{1}{M}\sum_{m=1}^M \Gamma_{t-1}^m[i,i]
= \frac{1}{M}\sum_{m=1}^M \frac{1}{\sqrt{\tilde v_{t-1}[i] + \lambda^2}}
\ge \frac{1}{\sqrt{G^2 + \lambda^2} } \eqdef \frac{1}{C_0}.
$$

\underline{Step 3b (one step progress).} Bounding term II.
\begin{eqnarray*}
    II&=&
    \eta\langle \nabla f(z_t), U_t \rangle
    \le \eta \|\nabla f(z_t)\| \|U_t\|
    \le \frac{\eta G}{M}\sum_{m=1}^M \|\Gamma_t^m(g_t - \widetilde{g}_t^m)\| \\
    &\le& \frac{\eta G}{\lambda M}\sum_{m=1}^M \|g_t - \widetilde{g}_t^m\|.
\end{eqnarray*}

\underline{Step 3c (one step progress).} Bounding term III.
\begin{eqnarray*}
    III&=&
    \eta\langle \nabla f(z_t), V_t \rangle
    \le \eta \|\nabla f(z_t)\| \|V_t\|
    \le \frac{\eta\beta_1}{1-\beta_1}\frac{G}{M}\sum_{m=1}^M \|(\Gamma_{t-1}^m-\Gamma_t^m)u_{t-1}^m\| \\
    &\le& \frac{\eta\beta_1}{1-\beta_1}\frac{G^2}{M}\sum_{m=1}^M \|\Gamma_{t-1}^m-\Gamma_t^m\|.
\end{eqnarray*}

\underline{Step 3d (one step progress).} Bounding term IV.
\begin{eqnarray*}
    IV&=&
    \frac{\eta^2 L}{2}\left\| \Gamma_tg_t - U_t -V_t \right\|^2 \\
    &\le& \frac{3\eta^2 L}{2}\left\| \Gamma_tg_t \right\|^2
        + \frac{3\eta^2 L}{2}\left\| U_t \right\|^2
        + \frac{3\eta^2 L}{2}\left\| V_t \right\|^2 \\
    &\le& \frac{3\eta^2 L G^2}{2\lambda^2}
        + \frac{3\eta^2 L}{2\lambda^2M}\sum_{m=1}^M \left\| g_t - \widetilde{g}_t^m \right\|^2
        + \frac{3\eta^2 \beta_1 LG}{2(1-\beta_1)M}\sum_{m=1}^M\left\| \Gamma_{t-1}^m - \Gamma_t^m \right\|^2 \\
\end{eqnarray*}

\underline{Step 3e (one step progress).} Bounding term V.

\begin{eqnarray*}
    V
    &=& \eta \left\langle \nabla f(x_t) - \nabla f(z_t), \Gamma_tg_t \right\rangle \\
    &=& \eta\E\left[\left\langle \nabla f(x_t)-\nabla f(z_t), \Gamma_{t-1}g_t \right\rangle\right] + \eta\E\left[\left\langle \nabla f(x_t)-\nabla f(z_t), (\Gamma_t - \Gamma_{t-1})g_t \right\rangle\right] \nonumber\\
    &\leq& \eta\E\left[\left\langle \nabla f(x_t)-\nabla f(z_t), \frac{1}{M}\sum_{m=1}^M \nabla f_m(x_t^m) \right\rangle_{\Gamma_{t-1}}\right] + \frac{\eta^2 L\beta_1}{1-\beta_1}\E\left[\left\| \E_m[\Gamma_{t-1}^m u^m_{t-1}] \right\| \|(\Gamma_t-\Gamma_{t-1}) g_t\|\right] \nonumber\\
    &\leq& \eta\E\left[\left\langle \nabla f(x_t)-\nabla f(z_t), \nabla f(x_t) \right\rangle_{\Gamma_{t-1}}\right] \\
    && +\; \eta\E\left[\left\langle \nabla f(x_t)-\nabla f(z_t), \frac{1}{M}\sum_{m=1}^M \nabla f_m(x_t^m) - \nabla f_m(x_t) \right\rangle_{\Gamma_{t-1}}\right]
    + \frac{\eta^2 L\beta_1 G^2}{(1-\beta_1)\lambda}\E\left[ \|\Gamma_t-\Gamma_{t-1}\|\right] \nonumber\\
    &\leq& \frac{\eta}{\lambda}\E\left[ \| \nabla f(x_t)-\nabla f(z_t)\| \|\nabla f(x_t)\|\right] \\
    && +\; \frac{\eta}{\lambda}\E\left[ \|\nabla f(x_t)-\nabla f(z_t)\| \cdot \frac{1}{M}\sum_{m=1}^M \|\nabla f_m(x_t^m) - \nabla f_m(x_t) \| \right]
    + \frac{\eta^2 L\beta_1 G^2}{(1-\beta_1)\lambda}\E\left[ \|\Gamma_t-\Gamma_{t-1}\|\right] \nonumber\\
    &\leq& \frac{\eta}{\lambda}\E\left[ \frac{1}{2\rho}\|\nabla f(x_t)-\nabla f(z_t)\|^2 +  \frac{\rho}{2}\|\nabla f(x_t)\|^2 \right] \\
    && +\; \frac{\eta}{\lambda}\E\left[ \frac{1}{2}\|\nabla f(x_t)-\nabla f(z_t)\|^2 + \frac{1}{2}\frac{L^2}{M}\sum_{m=1}^M \|x_t^m-x_t \|^2 \right]
    + \frac{\eta^2 L\beta_1 G^2}{(1-\beta_1)\lambda}\E\left[ \|\Gamma_t-\Gamma_{t-1}\|\right] \nonumber,
\end{eqnarray*}

where we used the following uniform bound on $\| \nabla f(x_t)-\nabla f(z_t) \|$:
\begin{eqnarray*}
    \left\| \nabla f(x_t)-\nabla f(z_t) \right\|
    &\le& L\left\|x_t-z_t \right\|
    \le \frac{\beta_1 L}{1-\beta_1} \left\|x_t-x_{t-1} \right\|
    = \frac{\eta\beta_1 L}{1-\beta_1} \left\|\E_m[\Gamma_{t-1}^m u_{t-1}^m] \right\| \\
    &\le& \frac{\eta\beta_1 L}{1-\beta_1} \E_m[\left\|\Gamma_{t-1}^m\| \| u_{t-1}^m] \right\|
    \le \frac{\eta\beta_1 L}{1-\beta_1} \frac{G}{\lambda}.
\end{eqnarray*}

Therefore, ignoring the constants, we have the following bounds:
\begin{eqnarray*}
V &\le&
  \mathcal{O}\left(\frac{\eta^2}{\rho}\right)
+ \frac{\eta\rho}{2\lambda} \cdot \E[\|\nabla f(x_t)\|^2]
+ \mathcal{O}\left(\eta\right) \cdot \frac{1}{M}\sum_{m=1}^M \E[\|x_t^m-x_t \|^2] + \mathcal{O}\left(\eta^2\right) \\
IV &\le&
  \mathcal{O}\left(\eta^2\right) \\
III &\le&
  \mathcal{O}\left(\eta\right) \cdot \frac{1}{M}\sum_{m=1}^M \E[\|\Gamma_{t-1}^m-\Gamma_t^m\|] \\
II &\le&
  \mathcal{O}\left(\eta\right) \cdot \frac{1}{M}\sum_{m=1}^M \E[\|g_t - \widetilde{g}_t^m\|] \\
I &\le&
  - \frac{\eta}{2C_0}\E\|\nabla f(x_t)\|^2
    + \mathcal{O}\left(\eta\right) \cdot \frac{1}{M} \sum_{m=1}^M \E\left[\|x_t - x_t^m\|^2\right]
    + \mathcal{O}\left(\eta\right) \cdot \frac{1}{M}\sum_{m=1}^M \E[\|\Gamma_{t-1}^m-\Gamma_t^m\|]
\end{eqnarray*}


To get the $\mathcal{O}\left(\frac{1}{\sqrt{T}}\right)$ bound for the averaged gradients $\E[\|\nabla f(x_t)\|^2]$, note that we are left to choose small value for $\rho = \frac{\lambda}{2C_0}$ and show the following bounds:
\begin{eqnarray*}
    && \frac{1}{TM}\sum_{t=0}^{T-1}\sum_{m=1}^M \E[\|x_t^m-x_t \|^2] = \mathcal{O}(\eta^2), \qquad \textrm{(extension of Lemma \ref{lem:x-xm})} \\
    && \sum_{t=0}^{T-1} \E[\|\Gamma_{t-1}^m-\Gamma_t^m\|] = \mathcal{O}(1), \qquad \textrm{(follows from AMSGrad normalization)} \\
    && \frac{1}{M}\sum_{t=0}^{T-1}\sum_{m=1}^M \E[\|g_t - \widetilde{g}_t^m\|] = \mathcal{O}(1), \qquad \textrm{(see below)}.
\end{eqnarray*}

For the last bound, we can use similar steps as in Lemma~\ref{lem:x-xm}, namely

\begin{eqnarray*}
\E[\|u_t - u_t^m\|]
&=& p_u \cdot 0 + (1-p_u)\E[\|\beta_1 u_{t-1} + (1-\beta_1)g_{t} - (\beta_1 u_{t-1}^m + (1-\beta_1)g_{t}^m)\|] \\
&\le& (1-p_u)\beta_1\E[\|u_{t-1} - u_{t-1}^m)\|] + (1-p_u)(1-\beta_1)\E[\|g_{t} - g_{t}^m)\|] \\
&\le& (1-p_u)(1-\beta_1)\sum_{\tau=0}^t ((1-p_u)\beta_1)^{t-\tau}\E[\|g_{\tau}-g_{\tau}^m\|]. \\
\E[\|g_t - \widetilde{g}_t^m\|]
&=& \E\left\|\frac{u_t - \beta_1 u_{t-1}}{1-\beta_1} - \frac{u_t^m - \beta_1 u_{t-1}^m}{1-\beta_1}\right\| \\
&\le& \frac{\beta_1}{1-\beta_1}\E\|u_{t-1}-u_{t-1}^m\| + \frac{1}{1-\beta_1}\E[\|u_t - u_t^m\|] \\
&=& \frac{1}{1-\beta_1} \sum_{\tau = t-1}^t \beta_1^{t-\tau}\E\|u_{\tau}-u_{\tau}^m\| \\
&=& (1-p_u)\sum_{\tau = t-1}^t \sum_{\nu=0}^{\tau} \beta_1^{t-\tau}((1-p_u)\beta_1)^{\tau-\nu}\E[\|g_{\nu}-g_{\nu}^m\|] \\
&=& \sum_{\tau = t}^{t+1} \sum_{\nu=0}^{\tau-1} \beta_1^{t-\tau}(\underbrace{(1-p_u)\beta_1}_{= q_2})^{\tau-\nu}\E[\|g_{\nu}-g_{\nu}^m\|],
\end{eqnarray*}
which has the same double geometric sum structure as \eqref{double-geom-sum}.

\subsection{Key Lemmas}

\begin{lemma}\label{lem:z-x}
  For all $T\geq 1$, we have
  \begin{align}
    \sum_{t=0}^{T-1} \norm{z_t - x_t}^2
    \leq \frac{\eta^2\beta^2}{(1-\beta)^2 M} T\sigma^2 + \frac{\eta^2\beta^2}{(1-\beta)^2} \sum_{t=0}^{T-1} \E\left\| \frac{1}{M}\sum_{m=1}^M \nabla f_m(x_{t}^m) \right\|^2.
  \end{align}
\end{lemma}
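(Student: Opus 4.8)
The plan is to reduce the lemma to a bound on the averaged momentum and then control that via a Jensen-plus-telescoping argument. First I would compute $z_t - x_t$ explicitly. From the definition $z_t = \frac{1}{1-\beta}x_t - \frac{\beta}{1-\beta}x_{t-1}$ one gets $z_t - x_t = \frac{\beta}{1-\beta}(x_t - x_{t-1})$, and since the averaged iterates obey the centralized SGDM dynamics $x_t - x_{t-1} = -\eta u_{t-1}$ established in Step 1, this gives
\[
z_t - x_t = -\frac{\eta\beta}{1-\beta}\,u_{t-1}, \qquad \|z_t - x_t\|^2 = \frac{\eta^2\beta^2}{(1-\beta)^2}\,\|u_{t-1}\|^2.
\]
Hence it suffices to prove $\sum_{t=0}^{T-1}\E\|u_{t-1}\|^2 \le \frac{T\sigma^2}{M} + \sum_{t=0}^{T-1}\E\|\frac{1}{M}\sum_{m=1}^M\nabla f_m(x_t^m)\|^2$; multiplying through by $\eta^2\beta^2/(1-\beta)^2$ then yields the claimed inequality.

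Second, I would bound $\|u_t\|^2$ by unrolling the averaged momentum. Since $u_t = \beta u_{t-1} + (1-\beta)g_t$ with $u_{-1}=0$, we have $u_t = (1-\beta)\sum_{\tau=0}^t \beta^{t-\tau}g_\tau$, whose coefficients together with the weight $\beta^{t+1}$ on the zero initial condition sum to one. Convexity of $\|\cdot\|^2$ (Jensen) then gives $\|u_t\|^2 \le (1-\beta)\sum_{\tau=0}^t \beta^{t-\tau}\|g_\tau\|^2$. Summing over $t$, dropping the vanishing $t=0$ term ($u_{-1}=0$), and exchanging the order of summation, each $\|g_\tau\|^2$ accumulates a geometric factor $(1-\beta)\sum_{s\ge\tau}\beta^{s-\tau}\le 1$, so that $\sum_{t=0}^{T-1}\|u_{t-1}\|^2 \le \sum_{\tau=0}^{T-1}\|g_\tau\|^2$.

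Third, I would take expectations and split the averaged stochastic gradient $g_\tau = \frac{1}{M}\sum_m g_\tau^m$ into its conditional mean $\bar h_\tau := \frac{1}{M}\sum_m \nabla f_m(x_\tau^m)$ and the centered noise. Conditioning on the history up to the gradient draw at step $\tau$, the iterates $x_\tau^m$ (and hence $\bar h_\tau$) are measurable while the noise is zero-mean by Assumption \ref{ass:boundgrad}, so the cross term drops and $\E\|g_\tau\|^2 = \E\|\bar h_\tau\|^2 + \E\|\frac{1}{M}\sum_m(g_\tau^m - \nabla f_m(x_\tau^m))\|^2$. Independence of the noise across workers collapses the last term to $\frac{1}{M^2}\sum_m\E\|g_\tau^m - \nabla f_m(x_\tau^m)\|^2 \le \sigma^2/M$. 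Chaining the three steps produces the stated bound.

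The main obstacle is this final variance decomposition: because the iterates $x_\tau^m$ carry accumulated noise from earlier steps, $\bar h_\tau$ is \emph{not} independent of the noise at other time indices, so one must condition on the correct filtration at each $\tau$ to kill the inner product $\langle\bar h_\tau, \frac1M\sum_m(g_\tau^m-\nabla f_m(x_\tau^m))\rangle$ and to invoke the per-worker variance bound under cross-worker independence. The remaining pieces—the algebraic identity for $z_t - x_t$ and the Jensen/summation-swap estimate—are routine.
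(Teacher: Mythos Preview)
Your proposal is correct and follows essentially the same route as the paper's proof: express $z_t-x_t$ in terms of the averaged momentum, apply Jensen to the unrolled momentum recursion, swap the order of summation to collapse the geometric weights, and then split $\E\|g_\tau\|^2$ into the noise variance (bounded by $\sigma^2/M$ via cross-worker independence after conditioning) and the squared mean gradient. The only cosmetic differences are that the paper writes $z_t-x_t=-\frac{\eta\beta}{1-\beta}u_t$ (an index shift that does not affect the summed bound) and normalizes by $s_t=\sum_\tau\beta^{t-\tau}$ before applying Jensen rather than padding with the zero weight $\beta^{t+1}$ on $u_{-1}$ as you do.
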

\begin{proof}
  Since $u_{-1} = 0$, unrolling the update rule of momentum, for any $t\geq 0$ we get
  \begin{align*}
    u_t = \beta u_{t-1} + (1-\beta)g_t = (1-\beta)\sum_{\tau=0}^{t} \beta^{t-\tau} g^{\tau}.
  \end{align*}
Using this and the definition of the average iterates, we have
\begin{align*}
  z_t - x_t
  = \frac{\beta}{1-\beta} (x_t - x_{t-1}) 
  = -\frac{\beta \eta }{1-\beta} u_t
  = -\beta\eta \sum_{\tau=0}^{t} \beta^{t-\tau} g_{\tau}.
\end{align*}

Using convexity of squared norm function and letting $s_t \eqdef \sum_{\tau=0}^{t} \beta^{t-\tau} = \frac{1-\beta^{t+1}}{1-\beta}$, for all $t\geq 0$, we have 
\begin{eqnarray*}
  \norm{z_t - x_t}^2
  = \eta^2\beta^2 s_t^2 \left\| \sum_{\tau=0}^{t} \frac{\beta^{t-\tau}}{s_t} g_{\tau} \right\|^2
  \le \eta^2\beta^2 s_t^2 \sum_{\tau=0}^{t} \frac{\beta^{t-\tau}}{s_t} \|g_{\tau}\|^2
  \le \frac{\eta^2\beta^2}{1-\beta} \sum_{\tau=0}^{t} \beta^{t-\tau} \|g_{\tau}\|^2.
\end{eqnarray*}
Summing over the iterates yields
\begin{eqnarray*}
  \sum_{t=0}^{T-1} \E\norm{z_t - x_t}^2
  &\le& \frac{\eta^2\beta^2}{1-\beta} \sum_{t=0}^{T-1}\sum_{\tau=0}^{t} \beta^{t-\tau} \E\|g_{\tau}\|^2 \\
  &=& \frac{\eta^2\beta^2}{1-\beta} \sum_{\tau=0}^{T-1} \sum_{t=\tau}^{T-1} \beta^{t-\tau} \E\|g_{\tau}\|^2 \\
  &=& \frac{\eta^2\beta^2}{1-\beta} \sum_{\tau=0}^{T-1} \frac{1-\beta^{T-\tau}}{1-\beta} \E\|g_{\tau}\|^2 \\
  &\le& \frac{\eta^2\beta^2}{(1-\beta)^2} \sum_{\tau=0}^{T-1} \E\|g_{\tau}\|^2 \\
  &=& \frac{\eta^2\beta^2}{(1-\beta)^2} \sum_{\tau=0}^{T-1} \E\left\| \frac{1}{M}\sum_{m=1}^M g_{\tau}^m - \nabla f_m(x_{\tau}^m) \right\|^2
  + \frac{\eta^2\beta^2}{(1-\beta)^2} \sum_{\tau=0}^{T-1} \E\left\| \frac{1}{M}\sum_{m=1}^M \nabla f_m(x_{\tau}^m) \right\|^2 \\
  &=& \frac{\eta^2\beta^2}{(1-\beta)^2 M^2} \sum_{\tau=0}^{T-1} \sum_{m=1}^M \E\left\| g_{\tau}^m - \nabla f_m(x_{\tau}^m) \right\|^2
  + \frac{\eta^2\beta^2}{(1-\beta)^2} \sum_{\tau=0}^{T-1} \E\left\| \frac{1}{M}\sum_{m=1}^M \nabla f_m(x_{\tau}^m) \right\|^2 \\
  &=& \frac{\eta^2\beta^2}{(1-\beta)^2 M} T\sigma^2
  + \frac{\eta^2\beta^2}{(1-\beta)^2} \sum_{\tau=0}^{T-1} \E\left\| \frac{1}{M}\sum_{m=1}^M \nabla f_m(x_{\tau}^m) \right\|^2.
\end{eqnarray*}
\end{proof}

\begin{lemma} \label{lem:x-xm}
If $24\eta^2 L^2 \psi \le 1$, then
\begin{align*}
\frac{1}{MT} \sum_{t=0}^{T-1} \sum_{m=1}^{M} \E\norm{x_t - x_t^m}^2
\leq 12\eta^2 (B^2-1) \psi \cdot \frac{1}{T}\sum_{t=0}^{T-1}\E\|\nabla f(x_t)\|^2
+ 4\eta^2\psi(\sigma^2 + 3G^2),
\end{align*}
where
$$\psi = \frac{4(1-p_x)}{p_x^2} \cdot \frac{(1-\beta)(1-p_u)}{1-(1-p_u)\beta}$$
\end{lemma}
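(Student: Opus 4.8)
The plan is to track two \emph{consensus errors} in expectation, the parameter error $E_t \eqdef \frac{1}{M}\sum_{m=1}^M \E\|x_t - x_t^m\|^2$ and the momentum error $D_t \eqdef \frac{1}{M}\sum_{m=1}^M \E\|u_t - u_t^m\|^2$, and to derive coupled one-step recursions for them from the probabilistic synchronization model. Since each averaging event is an independent Bernoulli trial (parameters with probability $p_x$, momentum with probability $p_u$) that exactly resets the corresponding error to zero, conditioning on whether the event fires at step $t$ and taking total expectation produces a multiplicative $(1-p_x)$ or $(1-p_u)$ factor in front of the ``no-sync'' update. Both errors vanish at initialization ($E_0 = 0$ since $x_0^m = x_0$, and $D_{-1}=0$ since $u_{-1}^m=0$), which anchors the unrolling.

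First I would handle the momentum error. In the no-sync branch $u_t^m - u_t = \beta(u_{t-1}^m - u_{t-1}) + (1-\beta)(g_t^m - g_t)$, so convexity of $\|\cdot\|^2$ (Jensen with weights $\beta,1-\beta$) yields the clean recursion $D_t \le (1-p_u)\beta D_{t-1} + (1-p_u)(1-\beta) H_t$, where $H_t \eqdef \frac{1}{M}\sum_m \E\|g_t^m - g_t\|^2$. Unrolling this geometric recursion with ratio $(1-p_u)\beta$ and summing over $t$ gives $\sum_t D_t \le \tfrac{(1-\beta)(1-p_u)}{1-(1-p_u)\beta}\sum_\nu H_\nu$, which is exactly the momentum factor in $\psi$. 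Next I would treat the parameter error: the no-sync branch gives $x_{t+1}^m - x_{t+1} = (x_t^m - x_t) - \eta(u_t^m - u_t)$, and Young's inequality $\|a-b\|^2 \le (1+\gamma)\|a\|^2 + (1+\gamma^{-1})\|b\|^2$ with $\gamma = \frac{p_x/2}{1-p_x}$ chosen so that $(1-p_x)(1+\gamma) = 1 - p_x/2$ produces $E_{t+1} \le (1-p_x/2)E_t + \frac{2(1-p_x)}{p_x}\eta^2 D_t$. Summing this contracting recursion (the contraction coming purely from the reset probability, as $x$ carries no intrinsic decay) gives $\sum_t E_t \le \frac{4(1-p_x)}{p_x^2}\eta^2 \sum_t D_t$, and chaining the two bounds delivers $\frac{1}{T}\sum_t E_t \le \frac{\eta^2\psi}{T}\sum_\nu H_\nu$ with $\psi$ exactly as defined.

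It then remains to bound the driving term $H_\nu$. Writing $g_\nu^m = \nabla f_m(x_\nu^m) + \zeta_\nu^m$ with fresh zero-mean noise of variance $\le\sigma^2$, the noise cross-terms vanish in expectation and the variance decomposition $\frac1M\sum_m\|a_m - \bar a\|^2 = \frac1M\sum_m\|a_m\|^2 - \|\bar a\|^2$ gives $H_\nu \le \sigma^2 + \frac1M\sum_m\|\nabla f_m(x_\nu^m)\|^2 - \|\tfrac1M\sum_m\nabla f_m(x_\nu^m)\|^2$. I would then replace the local points $x_\nu^m$ by the consensus point $x_\nu$ using $L$-smoothness (Assumption \ref{ass:smooth}), so the second moment becomes $\frac1M\sum_m\|\nabla f_m(x_\nu)\|^2$ plus an $L^2 E_\nu$ remainder, and apply bounded heterogeneity (Assumption \ref{ass:het}) to bound $\frac1M\sum_m\|\nabla f_m(x_\nu)\|^2 \le G^2 + B^2\|\nabla f(x_\nu)\|^2$. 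Once the subtracted $\|\tfrac1M\sum_m\nabla f_m(x_\nu^m)\|^2$ is likewise tied to $\|\nabla f(x_\nu)\|^2$, it cancels one copy and converts $B^2$ into $B^2 - 1$; the surviving $L^2 E_\nu$ is self-referential, reproducing the very consensus error being bounded.

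The main obstacle, and the step needing care with constants, is closing this self-reference. Substituting the $H_\nu$ bound into $\frac{1}{T}\sum_t E_t \le \frac{\eta^2\psi}{T}\sum_\nu H_\nu$ yields an inequality of the form $\frac{1}{T}\sum_t E_t \le \eta^2\psi\big(\sigma^2 + 3G^2 + c_1(B^2-1)\tfrac{1}{T}\sum_\nu\E\|\nabla f(x_\nu)\|^2\big) + c_2\eta^2 L^2\psi\cdot\tfrac{1}{T}\sum_t E_t$; the hypothesis $24\eta^2 L^2\psi \le 1$ is exactly what forces the self-coefficient $c_2\eta^2 L^2\psi \le \tfrac12$, so that term can be absorbed into the left-hand side and the inequality rearranged (at the cost of a factor $2$ in the constants) into the stated bound. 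The only delicate bookkeeping is propagating the numerical constants ($12$, $4$, $3$) through the Jensen/Young splits and the absorption; the structural content lies entirely in the two geometric recursions and the smoothness-induced feedback loop.
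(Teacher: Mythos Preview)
Your plan mirrors the paper's proof closely: the coupled recursions for the parameter and momentum consensus errors, their geometric unrolling, and the self-referential closure via the step-size hypothesis are exactly the paper's strategy. Two small stylistic differences are worth noting. You fix the Young parameter at $\gamma=\tfrac{p_x/2}{1-p_x}$ to obtain contraction $1-p_x/2$ directly, whereas the paper carries a free parameter $s$ and optimizes it afterward (finding $s^*=1/\sqrt{1-p_x}-1$); both routes land on the same $\tfrac{4(1-p_x)}{p_x^2}$ prefactor, and yours is a bit cleaner. Your noise treatment (zero-mean cross-terms vanish, leaving $\sigma^2$) is also tighter than the paper's crude $\|a+b\|^2\le 2\|a\|^2+2\|b\|^2$ split, which produces $2\sigma^2$.

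There is, however, one step that does not go through as you sketch it. Bounding the positive term $\tfrac1M\sum_m\|\nabla f_m(x_\nu^m)\|^2$ and the subtracted term $\|\tfrac1M\sum_m\nabla f_m(x_\nu^m)\|^2$ \emph{separately}, each via a Young-type inequality that inserts a $1{+}\epsilon$ or $\tfrac12$ factor, will not yield the clean $(B^2-1)$ coefficient: you end up with something like $2B^2-\tfrac12$, which does not vanish in the homogeneous case $B^2=1$ and therefore cannot match the lemma's stated constant. The paper sidesteps this by first collapsing the difference back into the single variance $\tfrac1M\sum_m\|\nabla f_m(x_\nu^m)-\bar\nabla_\nu\|^2$ (your own variance identity, applied in reverse) and then performing a \emph{three-way} split of each summand through the consensus point: $\nabla f_m(x_\nu^m)-\nabla f_m(x_\nu)$, $\nabla f_m(x_\nu)-\nabla f(x_\nu)$, and $\nabla f(x_\nu)-\bar\nabla_\nu$ (this is the paper's Lemma~\ref{lem:het-var}). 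The middle piece, averaged over $m$, is exactly $\tfrac1M\sum_m\|\nabla f_m(x_\nu)\|^2 - \|\nabla f(x_\nu)\|^2 \le G^2+(B^2-1)\|\nabla f(x_\nu)\|^2$ by Assumption~\ref{ass:het}, which is where the $B^2-1$ genuinely originates. With that one adjustment your argument is complete and the constants line up.
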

\begin{proof}
Let us expand the term $\E{\|x_{t+1} - x_{t+1}^m\|^2}$ using $x_{t+1}^m$'s probabilistic update rule:
\begin{eqnarray*}
\E{\|x_{t+1} - x_{t+1}^m\|^2}
&=& p_x\cdot 0 + (1-p_x)\cdot \E{\|x_{t} - \eta u_{t} - (x_{t}^m - \eta u_{t}^m) \|^2}\\
&=& (1-p_x)\cdot \E{\|x_{t} - x_{t}^m - \eta(u^t - u_t^m)\|^2}\\
&\le& (1-p_x)(1+s) \E{\|x_{t} - x_{t}^m\|^2} + \eta^2(1-p_x)(1+\nicefrac{1}{s})\E{\|u_t - u_t^m\|^2}\\
&\le& \eta^2 (1-p_x)(1+\nicefrac{1}{s}) \sum_{\tau=1}^t ((1-p_x)(1+s))^{t-\tau} \E{\|u_{\tau} - u_{\tau}^m\|^2}.
\end{eqnarray*}
where $s>0$ will be chosen later. Next we expand the term $\E{\|u_t - u_t^m\|^2}$ using $u_t^m$'s probabilistic update rule:
\begin{eqnarray*}
\E{\|u_t - u_t^m\|^2}
&=& p_u \cdot 0 + (1-p_u)\cdot \E{\left\|\frac{1}{M}\sum_{m=1}^M (\beta u_{t-1}^m + (1-\beta)g_{t-1}^m) - (\beta u_{t-1}^m + (1-\beta)g_{t-1}^m) \right\|^2} \\
&=& (1-p_u)\E{\left\|\beta (u_{t-1}-u_{t-1}^m) + (1-\beta)(g_{t-1} - g_{t-1}^m) \right\|^2} \\
&\le& (1-p_u)\beta\E{\|(u_{t-1}-u_{t-1}^m) \|^2}
+ (1-p_u)(1-\beta)\E{\| g_{t-1} - g_{t-1}^m \|^2} \\
&\le& (1-p_u)(1-\beta)\sum_{\tau=0}^{t-1} ((1-p_u)\beta)^{t-1-\tau} \E{\| g_{\tau} - g_{\tau}^m \|^2} \\
&\le& \frac{1-\beta}{\beta}\sum_{\tau=0}^{t-1} ((1-p_u)\beta)^{t-\tau} \E{\| g_{\tau} - g_{\tau}^m \|^2}
\end{eqnarray*}
Denote $q_1 = (1-p_x)(1+s)$ and $q_2 = (1-p_u)\beta$. Combining the previous two bounds, we get
\begin{eqnarray}
&& \frac{1}{M}\sum_{m=1}^M\E{\|x_t - x_t^m\|^2} \nonumber \\
&\le& \eta^2(1-p_x)(1+\nicefrac{1}{s}) \sum_{\tau=1}^t ((1-p)(1+s))^{t-\tau} \frac{1}{M}\sum_{m=1}^M\E{\|u_{\tau} - u_{\tau}^m\|^2} \label{double-geom-sum} \\
&\le& \eta^2 (1-p_x)(1+\nicefrac{1}{s}) \sum_{\tau=1}^t ((1-p_u)(1+s))^{t-\tau} \frac{1}{M}\sum_{m=1}^M \left[\frac{1-\beta}{\beta} \sum_{\nu=0}^{\tau-1} ((1-p_u)\beta)^{\tau-\nu} \E{\| g_{\nu} - g_{\nu}^m \|^2}\right] \nonumber \\
&=& \eta^2 (1-p_x)(1+\nicefrac{1}{s}) \frac{1-\beta}{\beta} \sum_{\tau=1}^t \sum_{\nu=0}^{\tau-1} q_1^{t-\tau} q_2^{\tau-\nu} \left[\frac{1}{M}\sum_{m=1}^M\E{\| g_{\nu} - g_{\nu}^m \|^2}\right] \nonumber \\
&=& \eta^2 (1-p_x)(1+\nicefrac{1}{s}) \frac{1-\beta}{\beta} \sum_{\nu=0}^{t-1} \sum_{\tau=\nu+1}^{t} q_1^{t-\tau} q_2^{\tau-\nu} \left[\frac{1}{M}\sum_{m=1}^M\E{\| g_{\nu} - g_{\nu}^m \|^2}\right] \nonumber \\
&=& \eta^2 (1-p_x)(1+\nicefrac{1}{s}) \frac{1-\beta}{\beta} \sum_{\nu=0}^{t-1}
q_2\frac{q_1^{t-\nu} - q_2^{t-\nu}}{q_1-q_2} \left[\frac{1}{M}\sum_{m=1}^M\E{\| g_{\nu} - g_{\nu}^m \|^2}\right], \nonumber \\
&=& \eta^2 \underbrace{(1-p_x)(1+\nicefrac{1}{s}) (1-\beta)(1-p_u)}_{\eqdef\phi} \sum_{\nu=0}^{t-1}
\frac{q_1^{t-\nu} - q_2^{t-\nu}}{q_1-q_2} \left[\frac{1}{M}\sum_{m=1}^M\E{\| g_{\nu} - g_{\nu}^m \|^2}\right]. \nonumber
\end{eqnarray}

Next, we bound the gradient term above.
\begin{eqnarray*}
\frac{1}{M}\sum_{m=1}^M \E{\| g_t^m - g_t \|^2}
&=& \frac{1}{M}\sum_{m=1}^M \E{\left\| g_t^m - \frac{1}{M}\sum_{i=1}^K g_i^{t} \right\|^2} \\
&\le& \frac{2}{K}\sum_{m=1}^M \E{\left\| g_t^m - \nabla f_m(x_t^m) - \frac{1}{M}\sum_{m=1}^M (g_{t}^m - \nabla f_m(x_t^m)) \right\|^2} \\
&&\quad +\; \frac{2}{M}\sum_{m=1}^M\E{\left\|\nabla f_m(x_t^m) - \frac{1}{M}\sum_{m=1}^M \nabla f_m(x_t^m) \right\|^2} \\
\textrm{(Lemma \ref{lem:het-var})}&\le& \frac{2}{M}\sum_{m=1}^M \E{\left\| g_t^m - \nabla f_m(x_t^m) \right\|^2}
- 2 \E{\left\| \frac{1}{M}\sum_{m=1}^M (g_{t}^m - \nabla f_m(x_t^m)) \right\|^2} \\
&&\quad +\; \frac{12L^2}{M}\sum_{m=1}^{M} \E\norm{x_t - x_t^m}^2 + 6(B^2-1) \E\|\nabla f(x_t)\|^2 + 6G^2 \\
&\le& 2\sigma^2 + \frac{12L^2}{M}\sum_{m=1}^{M} \E\norm{x_t - x_t^m}^2 + 6(B^2-1) \E\|\nabla f(x_t)\|^2 + 6 G^2.
\end{eqnarray*}

Again, plugging this bound to the previous one, we get
\begin{eqnarray*}
&& \frac{1}{MT}\sum_{t=0}^{T-1}\sum_{m=1}^M\E{\|x_t - x_t^m\|^2} \\
&\le& \frac{1}{MT}\sum_{t=1}^T\sum_{m=1}^M\E{\|x_t - x_t^m\|^2} \\
&\le& \frac{\eta^2\phi}{T} \sum_{t=1}^T\sum_{\tau=0}^{t-1} \frac{q_1^{t-\tau} - q_2^{t-\tau}}{q_1-q_2} \left[\frac{1}{M}\sum_{m=1}^M \E{\| g_{\tau} - g_{\tau}^m \|^2}\right] \\
&=& \frac{\eta^2\phi}{T} \sum_{\tau=0}^{T-1} \sum_{t=\tau+1}^T \frac{q_1^{t-\tau} - q_2^{t-\tau}}{q_1-q_2} \left[\frac{1}{M}\sum_{m=1}^M \E{\| g_{\tau} - g_{\tau}^m \|^2}\right] \\
&=& \frac{\eta^2\phi}{T} \sum_{\tau=0}^{T-1} \frac{1}{q_1-q_2}\left( \frac{q_1(1-q_1^{T-\tau})}{1-q_1} - \frac{q_2(1-q_2^{T-\tau})}{1-q_2} \right) \left[\frac{1}{M}\sum_{m=1}^M \E{\| g_{\tau} - g_{\tau}^m \|^2}\right] \\
&\le& \frac{\eta^2\phi}{T} \sum_{\tau=0}^{T-1} \frac{1}{q_1-q_2}\left( \frac{q_1}{1-q_1} - \frac{q_2}{1-q_2} \right) \left[\frac{1}{M}\sum_{m=1}^M \E{\| g_{\tau} - g_{\tau}^m \|^2}\right] \\
&=& \frac{\eta^2\phi}{(1-q_1)(1-q_2)} \frac{1}{T}\sum_{\tau=0}^{T-1} \left[\frac{1}{M}\sum_{m=1}^M \E{\| g_{\tau} - g_{\tau}^m \|^2}\right].
\end{eqnarray*}

Now, let us optimize the factor
$$
\frac{\phi}{(1-q_1)(1-q_2)}
= \frac{(1-p_x)(1+\nicefrac{1}{s}) (1-\beta)(1-p_u)}{(1-(1-p_x)(1+s))(1-(1-p_u)\beta)}
= \frac{(1-p_x)(1+\nicefrac{1}{s})}{1-(1-p_x)(1+s)} \cdot \frac{(1-\beta)(1-p_u)}{1-(1-p_u)\beta}
$$
by choosing optimal value for $s$ introduced earlier. By the first order optimality condition, we find that the optimal value is $s^* = \frac{1}{\sqrt{1-p_x}}-1$. Hence, the minimal value of the factor is
\begin{eqnarray*}
\frac{\phi}{(1-q_1)(1-q_2)}
&=& \frac{1-p_x}{(1-\sqrt{1-p_x})^2} \cdot \frac{(1-\beta)(1-p_u)}{1-(1-p_u)\beta} \\
&=& \frac{(1-p_x)(1-\sqrt{1-p_x})^2}{(1-\sqrt{1-p_x})^2(1+\sqrt{1-p_x})^2} \cdot \frac{(1-\beta)(1-p_u)}{1-(1-p_u)\beta} \\    
&=& \frac{(1-p_x)(1+\sqrt{1-p_x})^2}{p_x^2} \cdot \frac{(1-\beta)(1-p_u)}{1-(1-p_u)\beta} \\
&\le& \frac{4(1-p_x)}{p_x^2} \cdot \frac{(1-\beta)(1-p_u)}{1-(1-p_u)\beta} \eqdef \psi. \\
\end{eqnarray*}

Continuing the chain of bounds
\begin{eqnarray*}
&& \frac{1}{MT}\sum_{t=0}^{T-1}\sum_{m=1}^M\E{\|x_t - x_t^m\|^2} \\
&\le& \eta^2\psi \cdot \frac{1}{T}\sum_{t=0}^{T-1} \left[\frac{1}{K}\sum_{m=1}^M \E{\| g_t - g_t^m \|^2}\right] \\
&\le& \eta^2\psi \cdot \frac{1}{T}\sum_{t=0}^{T-1} \left[ \frac{12L^2}{M}\sum_{m=1}^{M} \E\norm{x_t - x_{t}^m}^2 + 6(B^2-1) \E\|\nabla f(x_t)\|^2 + 2\sigma^2 + 6 G^2 \right] \\
&\le& 12\eta^2 L^2 \psi \cdot \frac{1}{TM}\sum_{t=0}^{T-1} \sum_{m=1}^{M} \E\norm{x_t - x_{t}^m}^2 \\
&& +\; 6\eta^2 (B^2-1) \psi \cdot \frac{1}{T}\sum_{t=0}^{T-1}\E\|\nabla f(x_t)\|^2
+ 2\eta^2\psi(\sigma^2 + 3G^2).
\end{eqnarray*}
Assuming $12\eta^2 L^2 \psi \le \nicefrac{1}{2}$ and reordering the first term in the bound, we arrive
$$
\frac{1}{MT}\sum_{t=0}^{T-1}\sum_{m=1}^M\E{\|x_t - x_t^m\|^2}
\le 12\eta^2 (B^2-1) \psi \cdot \frac{1}{T}\sum_{t=0}^{T-1}\E\|\nabla f(x_t)\|^2
+ 4\eta^2\psi(\sigma^2 + 3G^2).
$$
\end{proof}

\begin{lemma}\label{lem:het-var} Under smoothness and bounded heterogeneity assumptions \ref{ass:smooth} and \ref{ass:het}, we have
\begin{equation*}
\frac{1}{M}\sum_{m=1}^M \left\|\nabla f_m(x_t^m) - \frac{1}{K}\sum_{i=1}^K \nabla f_i(x_t^i)\right\|^2
\leq \frac{6L^2}{M}\sum_{m=1}^{M}\norm{x_t - x_t^m}^2 + 3(B^2-1)\|\nabla f(x_t)\|^2 + 3 G^2.
\end{equation*} 
\end{lemma}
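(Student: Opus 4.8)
The plan is to bound this per-worker gradient dispersion by relating each local gradient $\nabla f_m(x_t^m)$ to the full gradient $\nabla f(x_t)$ through two intermediate quantities: a smoothness-controlled drift $\nabla f_m(x_t^m)-\nabla f_m(x_t)$ and a heterogeneity-controlled deviation $\nabla f_m(x_t)-\nabla f(x_t)$. I read the inner average $\frac{1}{K}\sum_{i=1}^K$ as the average over all $M$ workers (the index $K$ here appears to be a typo for $M$). First I would introduce the decomposition
$$\nabla f_m(x_t^m) - \frac{1}{M}\sum_{i=1}^M\nabla f_i(x_t^i) = \underbrace{\big(\nabla f_m(x_t^m)-\nabla f_m(x_t)\big)}_{A_m} + \underbrace{\big(\nabla f_m(x_t)-\nabla f(x_t)\big)}_{B_m} - \frac{1}{M}\sum_{i=1}^M A_i,$$
where the crucial cancellation is $\frac{1}{M}\sum_i B_i = 0$, since $\frac{1}{M}\sum_i \nabla f_i(x_t)=\nabla f(x_t)$. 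Applying $\|u+v+w\|^2\le 3(\|u\|^2+\|v\|^2+\|w\|^2)$ and averaging over $m$ then splits the target into three tractable pieces.

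Second, I would bound each piece separately. For the drift pieces, $L$-smoothness (Assumption~\ref{ass:smooth}) gives $\|A_m\|^2\le L^2\|x_t-x_t^m\|^2$, and Jensen's inequality gives $\big\|\frac{1}{M}\sum_i A_i\big\|^2\le\frac{1}{M}\sum_i\|A_i\|^2\le\frac{L^2}{M}\sum_i\|x_t-x_t^i\|^2$; together with the factor of $3$ these two terms contribute exactly $\frac{6L^2}{M}\sum_m\|x_t-x_t^m\|^2$. For the heterogeneity piece I would use the variance identity $\frac{1}{M}\sum_m\|B_m\|^2=\frac{1}{M}\sum_m\|\nabla f_m(x_t)\|^2-\|\nabla f(x_t)\|^2$ (valid because the $B_m$ have zero mean), and then invoke Assumption~\ref{ass:het} to get $\frac{1}{M}\sum_m\|\nabla f_m(x_t)\|^2\le G^2+B^2\|\nabla f(x_t)\|^2$, yielding $\frac{1}{M}\sum_m\|B_m\|^2\le G^2+(B^2-1)\|\nabla f(x_t)\|^2$. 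After the factor of $3$ this becomes $3(B^2-1)\|\nabla f(x_t)\|^2+3G^2$, matching the claimed bound.

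The main obstacle is really just arranging the decomposition so that the numerical constants come out as stated. The cancellation $\frac{1}{M}\sum_i B_i=0$ is what lets me avoid an awkward cross term coupling drift and heterogeneity, and recognizing that the centered second moment of the $B_m$ equals the raw second moment minus $\|\nabla f(x_t)\|^2$ is precisely what converts Assumption~\ref{ass:het} (stated for $\frac{1}{M}\sum\|\nabla f_m\|^2$) into the desired $(B^2-1)$ coefficient rather than a $B^2$ one. Everything else is routine triangle-inequality bookkeeping.
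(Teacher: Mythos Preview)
Your proposal is correct and is essentially the same argument as the paper's: the paper decomposes into $(\nabla f_m(x_t^m)-\nabla f_m(x_t)) + (\nabla f_m(x_t)-\nabla f(x_t)) + (\nabla f(x_t)-\frac{1}{M}\sum_i \nabla f_i(x_t^i))$, and the third summand is precisely your $-\frac{1}{M}\sum_i A_i$. The paper then applies the same three-term squared-norm inequality, smoothness, Jensen, and the variance identity with Assumption~\ref{ass:het}, yielding the same constants.
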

\begin{proof}
The bound follows from simple algebraic manipulations and Jensen's inequality.
\begin{eqnarray*}
  &&\frac{1}{K}\sum_{m=1}^M  \|\nabla f_m(x_t^m)- \frac{1}{K}\sum_{i=1}^{N} \nabla f_i(x_t^i) \|^2 \\
  &=& \frac{1}{K}\sum_{m=1}^M  \left\| \nabla f_m(x_t^m) - \nabla f_m(x_t) + \nabla f_m(x_t) - \nabla f(x_t) + \nabla f(x_t) - \frac{1}{K}\sum_{i=1}^N \nabla f_i (x_t^i)\right\|^2 \\
  &\le& \frac{3}{K}\sum_{m=1}^M \|\nabla f_m(x_t^m) - \nabla f_m(x_t)\|^2
        + \frac{3}{K}\sum_{m=1}^M \|\nabla f_m(x_t) - \nabla f(x_t)\|^2 \\
  &&\quad  +\; \frac{3}{K}\sum_{m=1}^M \left\|\nabla f(x_t) - \frac{1}{K}\sum_{i=1}^K \nabla f_i (x_t^i)\right\|^2 \\
  &\le& \frac{3L^2}{K}\sum_{m=1}^M \|x_t^m - x_t\|^2
  + \frac{3}{K}\sum_{m=1}^M \|\nabla f_m(x_t) - \nabla f(x_t)\|^2
  + \frac{3L^2}{K}\sum_{i=1}^K \left\| x_t - x_t^i\right\|^2 \\
  &=& \frac{6L^2}{K}\sum_{m=1}^M \|x_t^m - x_t\|^2
  + \frac{3}{K}\sum_{m=1}^M \|\nabla f_m(x_t) - \nabla f(x_t)\|^2 \\
  &=& \frac{6L^2}{K}\sum_{m=1}^M \|x_t^m - x_t\|^2
  + 3 G^2 + 3(B^2-1)\|\nabla f(x_t)\|^2.
  \end{eqnarray*}
\end{proof}

\section{Convergence Analysis of \methodadam (high-probability bounds)}\label{app:proof}

 For this section, we refer to Algorithm~\ref{alg:desync_generic} as \method-\texttt{OPT}$\left(K_x, K_1, \ldots, K_N\right)$. Let us consider the second algorithm \method-\texttt{OPT}$\left(K, K, \ldots, K\right)$ with $K = \mathrm{lcm}\{K_x, K_1, \ldots, K_N\}$. These two algorithms have a property that they both fully synchronize, i.e., all states and current iterates are the same, if $T = rK$ for some $r \in \mathbb{N}.$ 

Commonly, the analysis of \method-\texttt{OPT}$\left(K, K, \ldots, K\right)$ proceeds in the following way. In each step, construct an ideal update as if you were running \method-\texttt{OPT}$\left(1, 1, \ldots, 1\right)$ using virtual iterates (see the proof in the prior section for the example of analysis with virtual iterates), and bound the drift from this idealized scenario. For the case of \method-\texttt{OPT}$\left(K, K, \ldots, K\right)$, the bound typically depends on the distance of the current iterate from the last full synchronization. Below, we show that the drift of \texttt{OPT}$\left(K_x, K_1, \ldots, K_N\right)$ is not larger than \method-\texttt{OPT}$\left(K, K, \ldots, K\right)$, since \texttt{OPT}$\left(K_x, K_1, \ldots, K_N\right)$ synchronize more often. Therefore, the convergence rate of \texttt{OPT}$\left(K_x, K_1, \ldots, K_N\right)$ is not worse than the convergence rate for \method-\texttt{OPT}$\left(K, K, \ldots, K\right)$ as its analysis also applies to \texttt{OPT}$\left(K_x, K_1, \ldots, K_N\right)$, i.e., all final upper bounds derived for \method-\texttt{OPT}$\left(K, K, \ldots, K\right)$ are also valid for \texttt{OPT}$\left(K_x, K_1, \ldots, K_N\right).$ For instance, a typical way to estimate drift is to have an assumption of type $\|s^n_i - s^n_{i-1}\| \leq U$ for all $i \in \{1, 2, \ldots, k\},$ and $n \in \{1, 2, \ldots, M\},$ where $s^n_i$ is some state on client $n$ at step $i$ and $s_0 = s^1_0 = \ldots = s^M_0$ the synchronized state. Then, drift is usually expressed as $\|s^n_k - s_0\|$. For \method-\texttt{OPT}$\left(K, K, \ldots, K\right)$, we can simply bound 
\begin{align*}
    \|s^n_k - s_0\| = \left\|\sum_{i=1}^k s^n_i - s^n_{i-1}\right\| \leq \sum_{i=1}^k \|s^n_i - s^n_{i-1}\| \leq k U. 
\end{align*}
For \method-\texttt{OPT}$\left(K_x, K_1, \ldots, K_N\right)$, we can obtain the same bound, where we for simplicity assume that $s$ is synchronized every $K_s$ steps and $k \in \{K_s + 1, \ldots, 2 K_s\}.$
\begin{align*}
     \|s^n_k - s_0\| &= \left\|\sum_{i=K_s + 1}^k (s^n_i - s^n_{i-1})
     + s_{K_s} - s_0\right\| \\
     &\leq \sum_{i=K_s + 1}^k \|s^n_i - s^n_{i-1}\| + \left\|\frac1M\sum_{m=1}^M\sum_{i=1}^{K_s} s^m_i - s^m_{i-1}\right\| \\ 
     &\leq \sum_{i=K_s + 1}^k \|s^n_i - s^n_{i-1}\| + \frac1M\sum_{m=1}^M\sum_{i=1}^{K_s}\left\|s^m_i - s^m_{i-1}\right\| \\
     &\leq k U.
\end{align*}
In a more general case, we would apply the above recursively. Such type of adjustments is the only requirement to adapt analysis of \method-\texttt{OPT}$\left(K, K, \ldots, K\right)$ to obtain the same rate for \method-\texttt{OPT}$\left(K_x, K_1, \ldots, K_N\right)$ for the type of the analysis described above. 

We do not claim any novelty for this analysis. We mainly include these results for completeness, to showcase that our method converges under different settings. The main theoretical results showing that some of the optimizer states can be synchronized less frequently are presented in the prior section above. We would also like to highlight that this result might be relatively weak and not tight since we only show that \method-\texttt{OPT}$\left(K, K, \ldots, K\right)$ and \method-\texttt{OPT}$\left(K_x, K_1, \ldots, K_N\right)$ have the same worst-case convergence, but \method-\texttt{OPT}$\left(K, K, \ldots, K\right)$ requires less communication than \method-\texttt{OPT}$\left(K_x, K_1, \ldots, K_N\right)$ under this analysis, which is not the case in practice nor in the analyses presented above. 

 Finally, detailed inspection of the analysis of \methodadam$\left(K, K, \ldots, K\right)$\cite{LocalAdam} reveals that this analysis satisfies the above criteria. Thus, we can directly apply their results under the following assumptions and preliminaries.
 
 We aim to optimize a neural network $x$ under the loss function $f$
\begin{equation}
    \min_{x\in \reals^d} f(x):= \mathbb{E}_{\xi\sim\mathcal{D}} [F(x;\xi)].
\end{equation}
using $M$ workers, each of which has access to the stochastic gradient of $f$, $\nabla F(x;\xi)$ with $\xi$ independently drawn from the data distribution $D$. We define the auxiliary sequence,
\begin{equation}
z_{t+1}^m = 
\begin{cases}
 \frac{1}{1 - \beta_1} x_{t+1}^m - \frac{\beta_1}{1 - \beta_1} x_t^m & \text{if } t \bmod K \neq -1, \\
 \frac{1}{1 - \beta_1} x_{t+1}^m - \frac{\beta_1}{1 - \beta_1} \overline{x}_t & \text{otherwise}.
\end{cases}
\end{equation}
where, $\overline{x}_{t+1} = \mathbb{E}_m[x^m_{t+1}]$. We also define $\overline{z}_{t+1} = \mathbb{E}_m[z^m_{t+1}]$.

We make the following standard assumptions.

\begin{asp}[Lower-boundedness]\label{asp:lb}
   $f$ is closed, twice continuously differentiable and $\inf_{x\in\reals^d} f(x)=:f(x_*)=:f_*>-\infty$.
\end{asp}

\begin{asp}[Smoothness]\label{asp:smooth}
    There exists some set $\Omega\subset \reals^d$ and $L>0$, such that for any $x,y\in \Omega$, 
    \begin{equation}\label{eq:smooth}
        \|\nabla f(x)-\nabla f(y)\|\leq L\|x-y\|,
    \end{equation}
    \begin{equation}
        \|\nabla f(x)\|^2 \leq 2L(f(x)-f_*).
    \end{equation}
\end{asp}

\begin{asp}[Bounded $\alpha$-moment noise]\label{asp:moment_noise}
    There exists some set $\Omega\subset \reals^d$, $\alpha\geq 4$ and constant vector $\boldsymbol{\sigma}\succeq 0$ such that for any $x\in \Omega$,
    \begin{equation}
        \mathbb{E}_{\xi\sim \mathcal{D}} |\nabla F(x;\xi) - \nabla f(x)|^{\alpha} \preceq \boldsymbol{\sigma}^{\alpha}.
    \end{equation}
    Let $\sigma_{\infty}:=\|\boldsymbol{\sigma}\|_{\infty}=\max_i\{\sigma_i\}$, $\sigma:=\|\boldsymbol{\sigma}\|=\big(\sigma_1^2+\cdots+\sigma_d^2\big)^{1/2}$.
\end{asp}

\begin{asp}[Weak convexity]\label{asp:wc}
    There exists constant $\tau>0$ such that $f$ is $\tau$-weakly convex, i.e., for any $x,y\in \reals^d$,
    \begin{equation}\label{eq:weak-mono}
        \langle \nabla f(x) - \nabla f(y), x-y \rangle \geq -\tau \|x-y\|^2, 
    \end{equation}
     \begin{equation}
        f(y) \geq f(x) + \langle \nabla f(x), y-x\rangle - \frac{\tau}{2}\|x-y\|^2,\
        \nabla^2 f(x) \succeq -\tau I_d.
    \end{equation}
\end{asp}

Based on these assumptions, the \methodadam variant of Adam converges as stated in the following theorem.

\begin{thm} [Full version of Theorem 2]
Let the Assumptions \ref{asp:lb},\ref{asp:smooth} ,\ref{asp:moment_noise}, \ref{asp:wc}, hold for $\Omega = \mathrm{conv}(\mathbf{B}_{R_0}(\Omega_0))$, where $\Omega_0 \coloneq \{x : f(x) - f_* \leq 4\Delta \}$, $\mathbf{B}_{R_0}(\Omega_0) = \{x\in R^d :\exists y: \|x-y\|_2\leq R_0\}$, $R_0 = \sqrt{\frac{\Delta}{80L}}$, $K_{\mathrm{lcm}} = \mathrm{lcm}\{K_x,K_u,K_\upsilon\}$, and the same assumptions as in Theorem D.3 of \citep{LocalAdam}, then with probability $\geq 1-\delta$, \methodadam yields,

\begin{align*}
\frac{\lambda}{K_{\mathrm{lcm}} R} \sum^{R-1}_{r=0} \sum^{K_{\mathrm{lcm}}-1}_{k=0} \|\nabla f(\bar{z}_{r,k})\|^2  =  \tilde{\mathcal{O}}\left( \frac{\tau \Delta}{R} + \frac{L \Delta}{K_{\mathrm{lcm}}R} + \sqrt{\frac{L\Delta\sigma^2}{MK_{\mathrm{lcm}}R}} + \frac{(L\Delta \sigma)^\frac{2}{3}}{K_{\mathrm{lcm}}^{\frac{1}{3}}R^{\frac{2}{3}}}  + \left(\frac{L \Delta \sigma^{\frac{a}{a-1}}}{K_{\mathrm{lcm}}R}\right)^{\frac{2(a-1)}{3a-2}}\right)
\end{align*}
\end{thm}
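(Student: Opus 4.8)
The plan is to prove the bound by \emph{reduction} rather than by a fresh analysis: I would show that the convergence analysis of local Adam run with a single common period $K_{\mathrm{lcm}}$ applies verbatim to the desynchronized iterates, and then invoke the rate of \cite{LocalAdam} (their Theorem D.3) with $K$ replaced by $K_{\mathrm{lcm}}$. Concretely, I would compare two runs of the generic algorithm, \method-\texttt{OPT}$(K_x,K_u,K_\upsilon)$ and \method-\texttt{OPT}$(K_{\mathrm{lcm}},K_{\mathrm{lcm}},K_{\mathrm{lcm}})$. Both are \emph{identical} at every time $t=rK_{\mathrm{lcm}}$, since at such steps all states and parameters are simultaneously averaged across workers; the whole point is that between these full-synchronization events the desynchronized variant averages \emph{more often}, so its per-worker drift can only be smaller. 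The auxiliary sequence $\bar z_{r,k}$ over which the gradient is measured is defined exactly as in the single-period analysis.

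The key technical step is the drift comparison. In the local-Adam proof every error term is controlled through quantities of the form $\|s^{n}_{k}-s_{0}\|$, the deviation of a worker-$n$ state $s$ at step $k$ from its value $s_0$ at the last full synchronization, under a per-step bound $\|s^{n}_{i}-s^{n}_{i-1}\|\le U$. For the fully-synced scheme one simply telescopes $\|s^n_k-s_0\|\le\sum_{i=1}^k\|s^n_i-s^n_{i-1}\|\le kU$. I would establish the \emph{same} bound for the desynchronized scheme by splitting the sum at each intermediate synchronization of $s$ (occurring every $K_s\mid K_{\mathrm{lcm}}$ steps): the split produces an averaged remainder $\|\tfrac1M\sum_m\sum_i(s^m_i-s^m_{i-1})\|$, which by the triangle inequality and Jensen's inequality is at most $\tfrac1M\sum_m\sum_i\|s^m_i-s^m_{i-1}\|\le K_sU$, and recombining the pieces again yields $\|s^n_k-s_0\|\le kU$. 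Applying this split-and-average estimate recursively across all intermediate syncs of each state $s\in\{x,u,\upsilon\}$ shows that every drift term for \method-\texttt{OPT}$(K_x,K_u,K_\upsilon)$ is no larger than the corresponding term for the all-$K_{\mathrm{lcm}}$ scheme.

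Given this comparison, I would then verify that the local-Adam argument interacts with the trajectory \emph{only} through such drift bounds: the virtual-iterate smoothness descent, the momentum error terms, the AMSGrad preconditioner drift $\|\Gamma_{t-1}-\Gamma_t\|$ with $\Gamma_t=\diag^{-\nicefrac{1}{2}}(\tilde v_t+\lambda^2)$, and the high-probability martingale concentration all enter via per-step increments and deviations-from-last-sync. Since each of these inputs is, term by term, dominated by its all-$K_{\mathrm{lcm}}$ counterpart, every inequality in the chain leading to Theorem D.3 remains valid with $K\mapsto K_{\mathrm{lcm}}$, so the stated rate $\tilde{\mathcal{O}}(\cdot)$ is inherited unchanged.

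The main obstacle I anticipate is precisely this last verification: confirming that \emph{no} step of the local-Adam proof secretly relies on all three states being synchronized at the same instants. The coupling through the preconditioner is the delicate point, since the effective gradient $\widetilde g_t^m$ and the momentum error both feed into $\Gamma_t$, so desynchronizing $u$ relative to $\upsilon$ could in principle create cross terms absent from the single-period analysis. I would handle this by applying the split-and-average bound separately to each state's drift before combining, so that every interaction term is dominated by products of individually controlled drifts, exactly matching the structure of the single-period proof. As the excerpt already notes, the resulting guarantee is likely loose — it only matches the worst case of the more communication-heavy all-$K_{\mathrm{lcm}}$ scheme — but it suffices to establish convergence at the claimed rate.
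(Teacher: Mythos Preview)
Your proposal is correct and follows essentially the same reduction as the paper: compare \method-\texttt{OPT}$(K_x,K_u,K_\upsilon)$ to \method-\texttt{OPT}$(K_{\mathrm{lcm}},K_{\mathrm{lcm}},K_{\mathrm{lcm}})$, show via the split-and-average telescoping that every drift term $\|s^n_k-s_0\|\le kU$ continues to hold for the desynchronized scheme, and then invoke Theorem~D.3 of \cite{LocalAdam} with $K\mapsto K_{\mathrm{lcm}}$. One small wording caveat: the two algorithms are not literally \emph{identical} at times $rK_{\mathrm{lcm}}$ (their trajectories differ), only that each is internally fully synchronized there; but your argument does not actually use trajectory coincidence, only the drift bounds, so this does not affect correctness.
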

\begin{proof} The above corresponds to Theorem D.3 of \citep{LocalAdam} for \methodadam$(K_{\mathrm{lcm}}, \ldots, K_{\mathrm{lcm}})$.
\end{proof}

Note that for sufficiently large $R$, the leading term in the rate is $ \sqrt{\frac{L\Delta\sigma^2}{MK_{\mathrm{lcm}}R}}$, which shows up in Theorem~\ref{thm:adam_desync}.
\section{Derivation of \cref{eq:abslute change_u,eq:abslute change_v}: Maximum Momentum Change With Clipping}\label{app:derivation_max_chance}

\paragraph{\textbf{Lemma.}}
Let the gradient at each step satisfy $ \|g_t\|_\infty \leq \rho $ for some constant $\rho > 0$.  
Assume the first-momentum state in Adam is initialized at $u_{-1}=0$ and updated by
\begin{align}
u_{t} = \beta_1 u_{t-1} + (1-\beta_1) g_t, \quad \beta_1\in[0,1).
\end{align}
Then, for all $t\geq 0$, the momentum is bounded and satisfies
\begin{align}
\|u_t\|_\infty \leq \rho, \quad\text{and}\quad
\|u_{t+K}-u_t\|_\infty \leq 2\rho\left(1-\beta_1^{K}\right)\quad\forall K\geq 1.
\end{align}

\paragraph{\textbf{Proof.}}

\subparagraph{\textbf{Step 1: Bound on $\|u_t\|_\infty$.}}

We first show by induction that the momentum is always bounded by $\rho$.

\textbf{Base Case ($t=0$):} 
Since $u_{-1}=0$, we have:
\begin{align}
\|u_0\|_\infty = \|\beta_1 u_{-1}+(1-\beta_1)g_{0}\|_\infty 
\leq (1-\beta_1)\|g_{0}\|_\infty 
\leq \rho.
\end{align}

\textbf{Inductive Hypothesis (I.H.):} Assume $\|u_t\|_\infty \leq \rho$ for some $t\geq 0$.

\textbf{Inductive Step ($t \rightarrow t+1$):} Then,
\begin{align}
\|u_{t+1}\|_\infty &= \|\beta_1 u_t + (1-\beta_1)g_{t+1}\|_\infty \\
&\leq \beta_1\|u_t\|_\infty + (1-\beta_1)\|g_{t+1}\|_\infty \\
&\leq \beta_1\rho + (1-\beta_1)\rho = \rho.
\end{align}
Thus, by induction, we have the desired result:
\begin{align}
\|u_t\|_\infty \leq \rho,\quad\forall t \geq 0.
\end{align}

\subparagraph{\textbf{Step 2: Bound on $\|u_{t+K}-u_t\|_\infty$.}}

Now we bound the change in the momentum over $K$ steps explicitly. Unrolling the recursion, we have:
\begin{align}
u_{t+K} = \beta_1^{K}u_t + (1-\beta_1)\sum_{k=0}^{K-1}\beta_1^{k}g_{t+K-k}.
\end{align}

Subtracting $u_t$ from both sides, we obtain:
\begin{align}
u_{t+K}-u_t 
&= (\beta_1^{K}-1)u_t + (1-\beta_1)\sum_{k=0}^{K-1}\beta_1^{k}g_{t+K-k}.
\end{align}

Applying the triangle inequality gives:
\begin{align}
\|u_{t+K}-u_t\|_\infty
&\leq |1-\beta_1^{K}|\|u_t\|_\infty 
+ (1-\beta_1)\sum_{k=0}^{K-1}\beta_1^{k}\|g_{t+K-k}\|_\infty.
\end{align}

Using the bounds $\|u_t\|_\infty \leq \rho$ and $\|g_t\|_\infty \leq \rho$, we simplify to:
\begin{align}
\|u_{t+K}-u_t\|_\infty 
&\leq (1-\beta_1^{K})\rho + (1-\beta_1)\rho\sum_{k=0}^{K-1}\beta_1^{k}.
\end{align}

The geometric series simplifies as:
\begin{align}
\sum_{k=0}^{K-1}\beta_1^{k} = \frac{1-\beta_1^{K}}{1-\beta_1}.
\end{align}

Substituting this back into the expression yields:
\begin{align}
\|u_{t+K}-u_t\|_\infty 
&\leq (1-\beta_1^{K})\rho + (1-\beta_1^{K})\rho = 2\rho(1-\beta_1^{K}).
\end{align}

Thus, the momentum difference satisfies:
\begin{align}
\|u_{t+K}-u_t\|_\infty \leq 2\rho(1-\beta_1^{K}),\quad\forall K \geq 1.
\end{align}

\subparagraph{\textbf{Second-moment bound.}} 
Applying the exact same logic to the second momentum $v_t$, with $\beta_1$ replaced by $\beta_2$ and the bounded gradient squared term $\|g_t \odot g_t\|_\infty \leq \rho^2$, immediately gives:
\begin{align}
\|v_{t+K}-v_t\|_\infty \leq 2\rho^{2}(1-\beta_2^{K}).
\end{align}

This completes the proof. \hfill$\square$

\section{Wall-Clock Time Modeling}

Understanding the practical benefits of our proposal beyond the theoretical aspects and empirical convergence curves is crucial.
This section addresses the practical implications of adopting our method for training state-of-the-art (SOTA) large language models (LLMs) in large-scale distributed training infrastructures.
The most critical metrics are based on total wall-clock time, communication time, and resource utilization, i.e., how much of the wall-clock time is spent using the compute available instead of waiting for the communication to complete.
We provide the following simplified model for estimating total wall-clock time (\cref{app:wall_time_total}), computation time (\cref{app:wall_time_compute}), and communication time (\cref{app:wall_time_comms}) that applies to any method based on distributed data parallelism (\ddp).
The notation used here is consistent with that in \Cref{alg:desync_generic}.
We conclude this section with the results obtained with this modeling and their discussion.

\subsection{Estimating Total Wall-Clock Time}\label{app:wall_time_total}

The total wall-clock time for completing an LLM pre-training is based on the number of tokens processed $D$ (dataset size), the model size $d$ (the number of trainable parameters), the number of compute units $M$ (data-parallel/local workers), the floating point operations per second $S$ that these compute units can perform, the Model FLOPS Utilization (MFU), the average peer-to-peer (P2P) bandwidth $B$ and the latency $l$ between compute units.
We separate the total wall-clock time discussion into computational time (\cref{app:wall_time_compute}) and communication time (\cref{app:wall_time_comms}).
In our modeling, the total wall-clock time is the sum of computational time and communication time:

\begin{equation}
t_{\text{total}} =t_{\text{compute}} + t_{\text{comms}}
\end{equation}

We next derive $t_{\text{compute}}$ and $t_{\text{comms}}$ separately, and then instantiate $t_{\text{total}}$ for specific training methods.

\subsubsection{Estimating Computation Time}\label{app:wall_time_compute}

The total time spent computing $T_{\text{compute}}$ depends on the number of compute units $M$, their floating point operations per second $S$, the MFU of the training pipeline, and the total number of FLOPs $C$ that the training pipeline requires.
Following the same approach as in \citet{OgScalingLaws, TrainingComputeOptimalLLMs}, the total number of FLOPs required to train an LLM can be estimated as $C=6dD$, where $d$ is the number of model parameters and $D$ the total number of tokens (dataset size).
Since the MFU can be considered a measure of efficiency, i.e., $\text{MFU}\in[0,1]$, we can estimate the total time spent computing as:

\begin{equation}
t_{\text{compute}} =  \frac{C}{\text{MFU} \cdot S \cdot M} = \frac{6 \cdot d \cdot D}{\text{MFU} \cdot S \cdot M}
\end{equation}

In other words, if the hardware can perform $S\cdot M$ FLOPs/sec at peak and is utilized at MFU fraction of peak, the training FLOPs $C$ translate to that many seconds of compute.

In practice, MFU strongly depends on how the pipeline's parallelization is locally configured across the workers $M$.
For the sake of fairness in our comparisons, we can assume that the per-batch MFU of a data-parallel worker is the same as the per-batch MFU of a worker in our proposal and other local adaptive methods.
Importantly, this holds in cases where either such workers refer to a single GPU or each worker locally performs more advanced parallelism techniques, such as the ones proposed by \citet{FSDP_ZeRO,FSDP_Pytorch}.

\textbf{Resources Utilization and MFU.} Theoretically estimating the resource utilization in large-scale training of LLMs is very challenging despite prior knowledge of the number of hardware accelerators (GPUs), their theoretical peak FLOPs, and the total amount of FLOPs $C$ required to perform the task is available.
Following previous well-established proposals \citep{PALM}, we leverage MFU and the theoretical peak FLOPs of the hardware accelerators we used in our experiments.
Recent systems research \citep{ModelParallelism} has shown it is possible to reach $~50$\% of peak FLOPs even for trillion-parameter models by carefully combining data, tensor, and pipeline parallelism. This emphasizes that our model's assumptions (e.g., each worker sees full $d$) can be adapted to those scenarios by treating a model-parallel group as one worker with higher $S$ and similar MFU.
For the sake of a fair comparison, our analysis in this section compares different methods assuming that the local workers operate with the same theoretical peak FLOPs and the same MFU.
The results reported in \cref{app:wall_time_results} describe how such values were obtained.

\subsubsection{Estimating Communication Time}\label{app:wall_time_comms}

Communication time is the most critical factor when comparing standard data-parallel approaches to our proposal, since the computation time will be the same, given that they train the same model size on the same number of tokens using the same computing infrastructure.
At each communication step, the workers $W$ synchronize a set of parameters $M$, the amount of which depends on the method used.
For example, distributed data-parallel synchronization occurs at every batch step on the complete set of gradients produced by the $M$ workers, each exchanging a payload at batch step $i$ of $P_{\text{\ddp}, i}=d$ parameters.
In our proposal, the synchronization involves model parameters and optimizer states at different frequencies, making such estimation slightly more complex.
Since their time costs simply add up, we treat the parameter sync and momentum sync contributions independently.
For instance, if parameters are synced every $K_x$ steps and momenta every $K_u, K_v$ steps, we sum the time for each series of syncs.

Any of such payloads can be exchanged and averaged using bandwidth-efficient AllReduce methods, such as RingAllReduce \citep{Horovod}, which scales only with the speed of the slowest P2P link.
Given the slowest P2P bandwidth $B$ and a latency $l$, a single communication at timestamp $i$ is performed synchronously and in parallel across the $M$ workers, taking a total time of:

\begin{equation}
t_{\text{comms}, i} = \frac{2P_i}{B} \bigg(1 - \frac{1}{M}\bigg) + l,
\end{equation}

where $P_i$ is the payload size of the communication happening at the timestamp $i$, which depends on the optimization method adopted as described above.

\textbf{\ddp.} In the \ddp training approach, each of the $T$ optimization steps to train on $D$ tokens requires communicating at every step for a total training time of:

\begin{equation}
t_{\text{total}, \text{\ddp}} = t_{\text{compute}} + T \cdot \bigg[ \frac{2d}{B} \bigg(1 - \frac{1}{M}\bigg) + l \bigg]
\end{equation}

\textbf{\fedavg.} The approach of the \fedavg method is that of synchronizing with frequency $K$ only the model parameters across the $M$ workers. This, the total training time can be estimated as:

\begin{equation}
t_{\text{total}, \text{\fedavg}} = t_{\text{compute}} + \frac{T}{K} \cdot \bigg[ \frac{2d}{B} \bigg(1 - \frac{1}{M}\bigg) + l \bigg]
\end{equation}

This optimization procedure will communicate less than \ddp when $K<T$.

\textbf{\localadam.} Using a local adaptive optimizer such as \citet{LocalAdam} with a synchronization frequency of $K$ local steps, requires training for a total training time of:

\begin{equation}
t_{\text{total}, \text{\localadam}} = t_{\text{compute}} + \frac{3T}{K} \cdot \bigg[ \frac{2d}{B} \bigg(1 - \frac{1}{M}\bigg) + l \bigg]
\end{equation}

This means that, as long as $3K<T$, \localadam will always take less wall clock time than \ddp. 

\textbf{Our Method (\method).} Adopting our proposal (\methodadam and \methodadopt specifically, which we shall use interchangeably for the purposes of this analysis) requires synchronizing model parameters $x$, fist momentum $u$ and second momentum $v$ with frequencies $k_x, K_u, K_v$, respectively. Assuming each of these sets is synchronized independently, we can compose by adding their communication time contribution to the total training wall-clock time, which results:

\begin{equation}
t_{\text{total}, \text{\methodadam}} = t_{\text{compute}} + \bigg(\frac{T}{K_x} + \frac{T}{K_u} + \frac{T}{K_v}\bigg) \cdot \bigg[ \frac{2d}{B} \bigg(1 - \frac{1}{M}\bigg) + l \bigg]
\end{equation}

This means that, as long as $\frac{1}{K_x} + \frac{1}{K_u} + \frac{1}{K_v} < \frac{3}{K} \wedge \frac{1}{K_x} + \frac{1}{K_u} + \frac{1}{K_v} < 1$, our method will always take less wall-clock time than \localadam and \ddp.

\textbf{Limitations.} We critically discuss here the limitations of the proposed modeling in order to shed light on their relevance when it comes to deploying such training algorithms in real-world scenarios.

First, our modeling approach adopts constants for several system components, such as computing capabilities and interconnects. In particular, MFU in the real world always oscillates around some average value depending on the operational performance of high-bandwidth memories (HBMs), DRAM caches, and processing units in the hardware accelerators. At the same time, the P2P bandwidth and latency between accelerators also fluctuate around average values.

Second, most efficient implementations adopted in the field take advantage of the possibility of overlapping communication and computation, reducing the communication time.
Notably, overlapping communication with computation can drastically reduce effective communication costs, for example, PyTorch's DDP implementation can overlap $95\%$ of the communication \citep{romero22usenix}.
Our model currently assumes synchronous communications, but could incorporate such approaches by reducing the effective $l$ or $B$ impact.
One extension could be adding a parameter $\alpha \in [0,1]$ representing the fraction of communication time that is not overlapped, so total time per step $i$ is $t_{\text{total}, i}= t_{\text{compute}} + \alpha t_{\text{comm}}$. Setting $\alpha=0$ would recover the fully overlapped ideal (communication is entirely hidden by computation), and $\alpha=1$ is the current no-overlap assumption. This would keep the model framework-agnostic but allow tuning to specific training setups.

Techniques in \citet{FSDP_ZeRO,FSDP_Pytorch} complement our analysis by reducing memory usage and communication volume, effectively scaling down payload $P_i$ or increasing MFU. Our approach focuses on synchronization timing rather than data partitioning; combining our method with fragmented updates (e.g., ZeRO) could further improve wall-clock time.

Despite limitations, our model was designed so that any gap with real-world performance evenly affects all methods analyzed, assuming thoughtful implementation. Thus, results in \cref{app:wall_time_results} illustrate potential improvements from adopting \method, and our model can help practitioners estimate performance at larger scales.

\subsection{Modeling Results}\label{app:wall_time_results}

\Cref{app:fig:wall_clock_time_model,app:fig:comms_and_compute_utilization} analyze the wall-clock time, communication overhead, and GPU utilization of \method compared to \ddp, \localadam, and heuristic baselines for training our $1.7$B model. By setting synchronization periods as $K_x=256, K_u=768, K_v=1536$, \method significantly reduces communication and improves GPU utilization relative to \localadam ($K=256$), closely approaching the efficiency of heuristic methods, especially in bandwidth-constrained settings.

\begin{figure}[H]
    \centering
    \includegraphics[width=0.8\columnwidth]{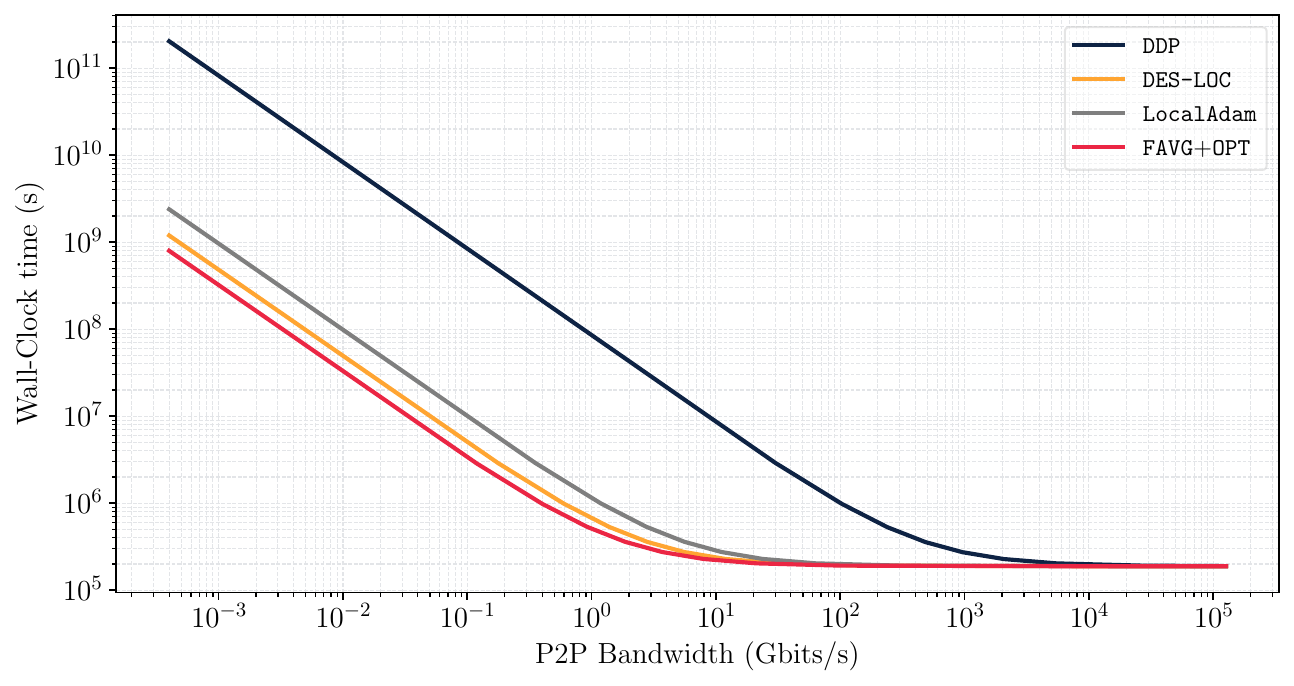} \hfill
    \caption{Estimated wall-clock time for training the $1.7$B model with \method ($K_x=256, K_u=768, K_v=1536$), compared to \localadam ($K=256$), \ddp, and Federated Averaging with persistent optimizer states (\texttt{FAVG+OPT}, $K=256$). At low bandwidth ($<10^3$), all communication-efficient methods substantially reduce wall-clock time compared to \ddp. \method closely approaches the maximum efficiency of \texttt{FAVG+OPT}, significantly outperforming \localadam, which synchronizes all optimizer states frequently. Moreover, \method maintains stable and convergent training behavior (\cref{fig:eval:large_models}). At high bandwidth ($>10^3$), \ddp becomes competitive or preferable.}
    \label{app:fig:wall_clock_time_model}
\end{figure}

\begin{figure}[H]
    \centering
    \noindent\subfloat[Communication costs.]
    {\includegraphics[width=0.485\columnwidth]{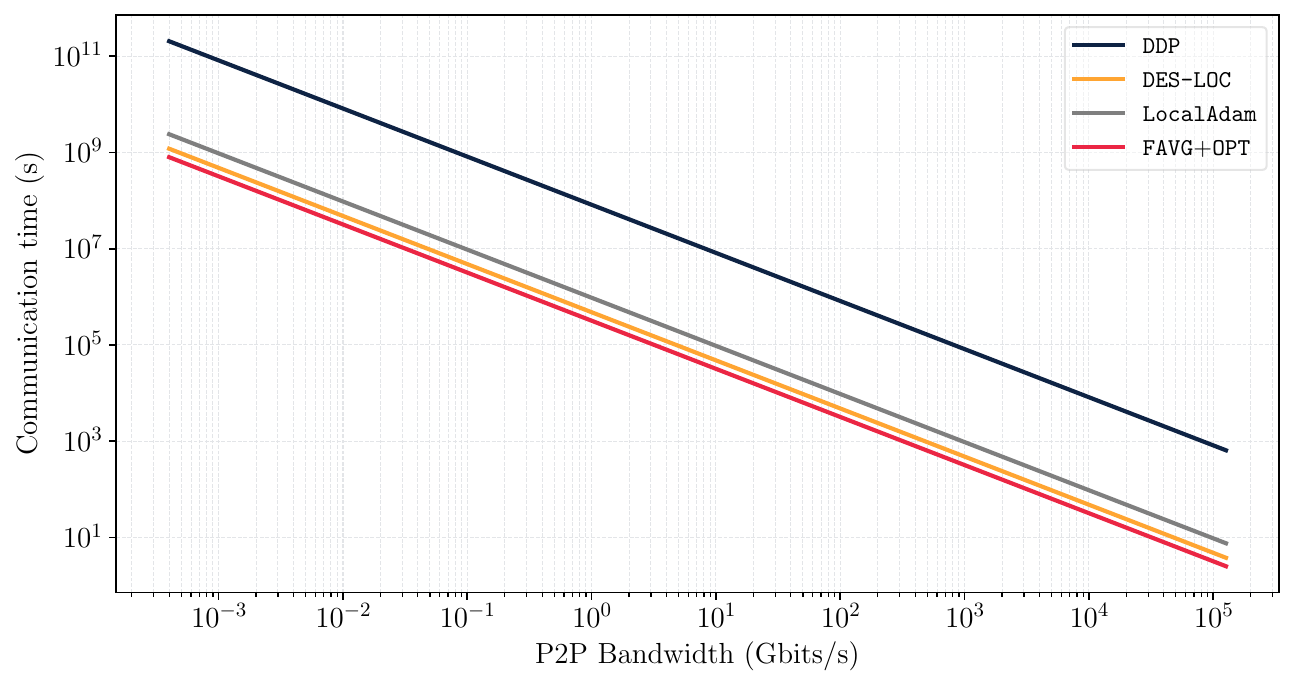}}  \hfill
    \noindent\subfloat[Compute utilization.]
    {\includegraphics[width=0.485\columnwidth]{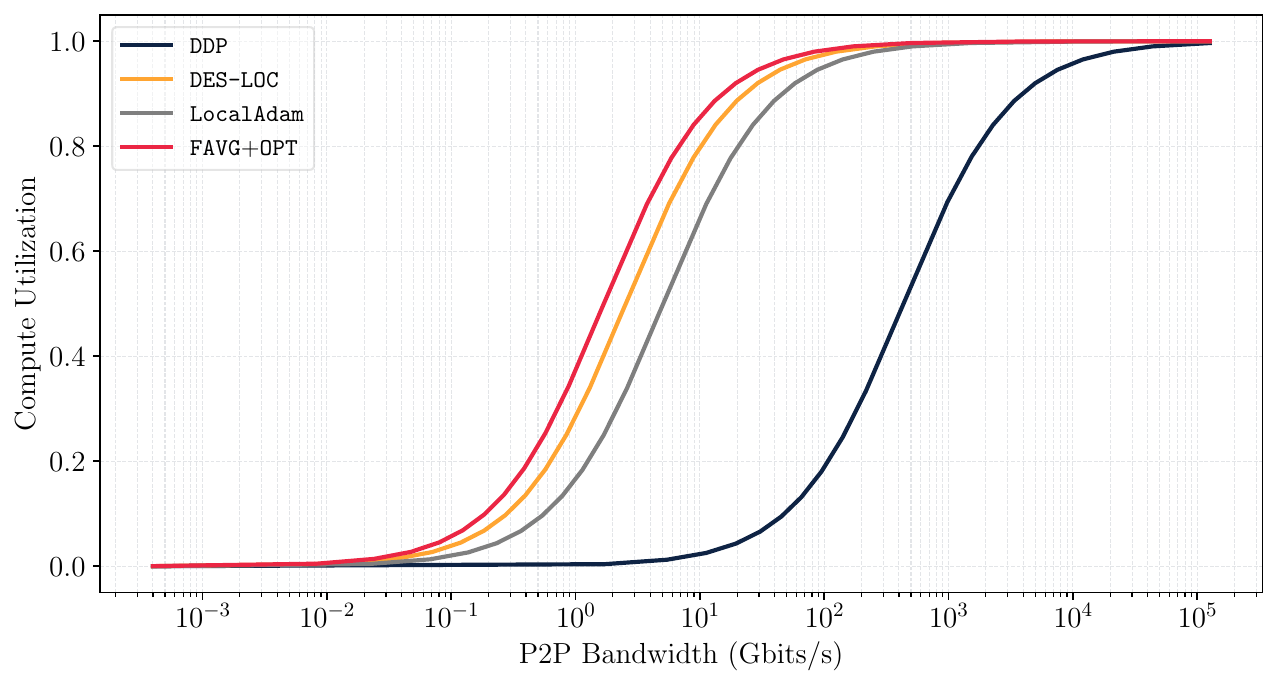}} 
    \caption{Communication overhead (a) and GPU utilization (b) for training the $1.7$B model with synchronization periods $K_x=256,K_u=768,K_v=1536$. \method reduces communication costs by $170\times$ compared to \ddp, outperforming the $85\times$ reduction achieved by \localadam while \texttt{FAVG+OPT}, communicating only parameters, achieves a theoretical maximum reduction ($256\times$). The improved communication efficiency of \method translates to higher GPU utilization at low bandwidths ($<10^3$), significantly improving over \ddp and \localadam.}
    \label{app:fig:comms_and_compute_utilization}
\end{figure}

\takeawaybox[Takeaway:]{%
By synchronizing optimizer states less frequently, \method enhances GPU utilization and total wall-clock time compared to \ddp and \localadam, especially under bandwidth constraints.
}

\end{document}